\documentclass[journal]{IEEEtai}

\usepackage[utf8]{inputenc} \usepackage[T1]{fontenc}    \usepackage{hyperref}       \usepackage{url}            \usepackage{booktabs}       \usepackage{amsfonts}       \usepackage{nicefrac}       \usepackage{wrapfig}
\usepackage{xcolor}

\usepackage{graphicx}

\bibliographystyle{IEEEtran}

\usepackage{algorithm}
\usepackage{algorithmic}

\usepackage{amsmath}
\usepackage{amssymb}
\usepackage{mathtools}
\usepackage{amsthm}
\usepackage{orcidlink}

\usepackage[capitalize,noabbrev]{cleveref}

\theoremstyle{plain}
\newtheorem{theorem}{Theorem}

\newtheorem{lemma}{Lemma}
\newtheorem{example}{Example}
\newtheorem{corollary}{Corollary}
\theoremstyle{definition}
\newtheorem{definition}{Definition}
\newtheorem{assumption}{Assumption}
\theoremstyle{remark}
\newtheorem{remark}{Remark}

\newcommand{\nc}[1]{#1}

\usepackage{caption}
\usepackage{subcaption}

\renewcommand{\t}{\text}
\newcommand{\op}[1]{\operatorname{#1}}
\newcommand{\C}[1]{{\mathcal{#1}}} \newcommand{\B}[1]{{\mathbb{#1}}}

\makeatletter
\def\blfootnote{\xdef\@thefnmark{}\@footnotetext}
\makeatother

\title{Stochastic Submodular Bandits with Delayed Composite Anonymous Bandit Feedback}
\author{
Mohammad Pedramfar
and 
Vaneet Aggarwal
\thanks{
M. Pedramfar is with McGill University, Montreal, QC, H3A 0E9 Canada
and Mila - Quebec Artificial Intelligence Institute, Montreal, QC, H2S 3H1 Canada.
This work was done while at Purdue University, West Lafayette IN, 47907 USA (email: mohammad.pedramfar@mila.quebec).

V. Aggarwal is with Purdue University, West Lafayette IN, 47907 USA (email: vaneet@purdue.edu).

This work was supported in part by National Science Foundation under grant CCF-2149588.

This paper is accepted to IEEE Transactions on Artificial Intelligence, December 2024.

{\textcopyright} 2025 IEEE.
Personal use of this material is permitted. Permission from IEEE must be obtained for all other uses, in any current or future media, including reprinting/republishing this material for advertising or promotional purposes, creating new collective works, for resale or redistribution to servers or lists, or reuse of any copyrighted component of this work in other works.
}}

\begin{document}

\maketitle

\begin{abstract}
This paper investigates the problem of combinatorial multiarmed bandits with stochastic submodular (in expectation) rewards and full-bandit delayed feedback, where the delayed feedback is assumed to be composite and anonymous. 
In other words, the delayed feedback is composed of components of rewards from past actions, with unknown division among the sub-components. 
Three models of delayed feedback: bounded adversarial, stochastic independent, and stochastic conditionally independent are studied, and regret bounds are derived for each of the delay models. 
Ignoring the problem dependent parameters, we show that regret bound for all the delay models is $\tilde{O}(T^{2/3} + T^{1/3} \nu)$ for time horizon $T$, where $\nu$ is a delay parameter defined differently in the three cases, thus demonstrating an additive term in regret with delay in all the three delay models.
The considered algorithm is demonstrated to outperform other full-bandit approaches with delayed composite anonymous feedback. 
We also demonstrate the generalizability of our analysis of the delayed composite anonymous feedback in combinatorial bandits as long as there exists an algorithm for the offline problem satisfying a certain robustness condition.
\end{abstract}

\begin{IEEEImpStatement}
This research advances the field of stochastic combinatorial multi-armed bandits (CMAB) by addressing the complexities of delayed composite anonymous feedback in environments with submodular and monotone rewards. By introducing innovative analytical techniques and establishing novel regret bounds, the study enhances our understanding of decision-making processes in dynamic and uncertain contexts, such as social networks and online advertising. This work significantly broadens the applicability of CMAB models to real-world scenarios where actions have delayed feedback, thereby improving strategic decision-making and optimization in various industries.
\end{IEEEImpStatement}

{
\begin{IEEEkeywords}
Submodular maximization, Bandit feedback, Combinatorial multi-armed bandit, Delayed feedback, Composite anonymous delay
\end{IEEEkeywords}
}

\section{Introduction}

Many real world sequential decision problems can be modeled using the framework of stochastic multi-armed bandits (MAB), such as  scheduling,  assignment problems, ad-campaigns, and product recommendations. In these problems, the decision maker sequentially selects actions and receives stochastic rewards from an unknown distribution. The objective is to maximize the expected cumulative reward over a time horizon. Such problems result in a trade-off between trying actions to learn the system (\textit{exploration}) and  taking the action that is empirically the best seen so far (\textit{exploitation}). 

Combinatorial MAB (CMAB) involves the problem of  finding the best subset of
$K$ out of $N$ items to optimize a possibly nonlinear function of reward of each item. Such a problem has  applications in cloud storage \cite{xiang2014joint},  cross-selling item selection \cite{wong2003mpis}, social influence maximization \cite{agarwal2022stochastic}, etc. The key challenge in CMAB is the combinatorial $N$-choose-$K$ decision space, which can be very large. This problem can be converted to standard MAB with an exponentially large action space, although needing an exponentially large time horizon to even explore each action once. Thus, the algorithms for CMAB aim to not have this exponential complexity while still providing regret bounds. An important class of combinatorial bandits is submodular bandits; which is based on the intuition that opening additional restaurants in
a small market may result in diminishing returns due to
market saturation. A set
function $f : 2^\Omega \to \mathbb{R}$ defined on a finite ground set $\Omega$ is
said to be submodular if it satisfies the diminishing return
property: for all $A \subseteq B \subseteq \Omega$, and $x \in  \Omega \setminus B$, it holds that $f(A \cup \{x\}) - f(A) \geq f(B \cup \{x\}) - f(B)$ \cite{nemhauser1978analysis}. 
Multiple applications for  CMABs with submodular rewards have been described in detail in \cite{nie22_explor_then_commit_algor_for}, including social influence maximization, recommender systems, and crowdsourcing. 
In these setups, the function is also monotone (adding more restaurants give better returns, adding more seed users give better social influence), where for all $A \subseteq B \subseteq \Omega$, $f(A) \leq f(B)$, and thus we also assume monotononicity in the submodular functions.

Feedback plays an important role in how challenging the CMAB problem is.  When the decision maker only observes a (numerical) reward for the action taken, that is known as bandit or full-bandit feedback. When the decision maker observes additional information, such as contributions of each base arm in the action, that is semi-bandit feedback. Semi-bandit feedback greatly facilitates learning. Furthermore, there are two common formalizations depending on the assumed nature of environments: the stochastic
setting and the adversarial setting. In the adversarial setting, the reward sequence is generated
by an unrestricted adversary, potentially based on the history of decision maker’s actions. In the
stochastic environment, the reward of each arm is drawn independently from a fixed distribution. For CMAB with submodular and monotone rewards, stochastic setting is not a special case of the adversarial setting since in the adversarial setting, the
environment chooses a sequence of monotone and submodular functions $\{f_1, \cdots , f_T \}$, while the stochastic setup assumes $f_t$ to be monotone and submodular in expectation~\cite{nie22_explor_then_commit_algor_for}.
In the adversarial setting, even if we limit ourselves to MAB instead of CMAB, the effect of composite anonymous delay appears as a multiplicative factor in the literature (e.g.~\cite{pmlr-v75-cesa-bianchi18a}).
In this paper, we study the impact of full-bandit feedback in the stochastic setting for  CMAB with submodular rewards and cardinality constraints. 
In this case, the regret analysis with full-bandit feedback has been studied in the adversarial setting in~\nc{\cite{streeter2010online}} and~\cite{niazadeh2021online}, and in stochastic setting in~\cite{nie22_explor_then_commit_algor_for}.

In the prior works on  CMAB as mentioned earlier, the feedback is available immediately after the action is taken. However, this may not always be the case. Instead of receiving the reward in a single step, it can be spread over multiple number of time steps after the action was chosen. 
Following each action choice, the player receives the cumulative rewards from all prior actions whose rewards are due at this specific step.
The difficulty of this setting is due to the fact that the agent does not know how
this aggregated reward has been constituted from the previous actions chosen. This setting is called delayed composite anonymous feedback. Such feedback arise in multiple practical setups. As an example, we consider a social influence maximization problem. Consider a case of social network where a company developed an application and wants
to market it through the network. The best way to do this
is selecting a set of highly influential users and hope they
can love the application and recommend their friends to use
it. Influence maximization is a problem of finding a small
subset (seed set) in a network that can achieve maximum influence. This subset selection problem in social networks is
commonly modeled as an offline submodular optimization
problem \cite{domingos2001mining,kempe2003maximizing,chen2010scalable}. However, when the seed set is selected, the propagation
of influence from one person to another may incur a certain
amount of time delay and is not immediate \cite{chen2012time}. The time-delay phenomena in information diffusion
has also been explored in statistical physics \cite{iribarren2009impact,karsai2011small}. The spread of influence diffusion, and that at each time we can only observe the aggregate reward limits us to know the composition of the rewards into the different actions in the past. Further, the application developer, in most cases, will only be able to see the aggregate reward leading to this being a bandit feedback. This motivates our study of stochastic  CMAB with submodular rewards and delayed composite anonymous bandit feedback.

To the best of our knowledge, this is the first work on stochastic CMAB with delayed composite anonymous  feedback. In this paper, we consider three models of delays. The first model of delay is `Unbounded Stochastic Independent Delay'. In this model, different delay distributions can be chosen at each time, and these delay distributions are independent of each other. The second model is `Unbounded Stochastic Conditionally Independent Delay'. In this model, the delay distribution does not only depend on time, but also on the set chosen. The third model is `Bounded Adversarial Delay'. In this model, the maximum delay at each time can be chosen arbitrarily as long as it is bounded. We note that in stochastic cases, the delay is not bounded, while is governed by the tight family of distributions.\footnote{See Section~\ref{sec:problem_statement} for a detailed description.} 
In the adversarial case, there is a bound on the maximum delay, and the process generating this delay does not need to satisfy any other assumptions.
Thus, the results of stochastic and adversarial setups do not follow from each other. In particular, this is the first work where the delay distribution is allowed to change over time. This gives new models for delayed composite anonymous feedback which are more general than that considered in the literature.  In each of the three models of delay, this paper derives novel regret bounds.

We note that this work considers  the most general form of the notion of composite anonymous delayed feedback studied in the literature that we know of. The challenge in capturing this generalized notion of delay is how to handle infinite dimensional random vectors, i.e., random variables that take their values within the space of all probability measures over non-negative integers.  This is done by defining the notion of upper tail bounds, which measures the tightness of a family of distributions\footnote{See Assumption~\ref{assumption} and Lemma~\ref{L:tightness_equiv_upper_bound} for more details.} (i.e., probability distributions over non-negative integers), and use it to bound the regret. Our definition of composite anonymous delay together with the notion of upper tail bound provide an easy way to handle the delay where a single positive real number can capture the effect of delay even in the presence of unbounded delay. Then we use Bernstein inequality to control the effect of past actions on the observed reward of the current action that is being repeated. This allows us to obtain a regret upper bound in terms of the expected value of the upper tail bound. The use of upper tail bounds for studying regret in bandits with delayed feedback is novel and has not been considered in the literature earlier, to the best of our knowledge.

Even for non-combinatorial bandits in \cite{Wang_Wang_Huang_2021}, the setting is more limited than what we consider and the regret bound is not closed form and depends on multiple aspects of impact of delay, while our work provide an easy way to handle the delay where a single positive real number can capture the effect of delay even in the presence of unbounded delay.  This notion allows us to simplify the proof for different types of delay and use a reduction method, captured in Lemmas 5, 6 and 7 to reduce the all types of delay into a simple setting which could be analyzed using Theorem 1. This simplified notion and analysis will be important for any analysis of such feedback in any bandits literature in the future.

A fundamental challenge in dealing with composite anonymous feedback is that the information received after each action is so little that every existing algorithm has been using the idea of repeating actions to obtain estimates of true rewards.  The SOTA for the non-combinatorial setting uses an adaptive approach to revisit actions and improve the estimate of the reward of each action over time.
The combinatorial nature of the problem does not allow for such an approach, since we will not be able to try every action once, let alone revisit them. As a result, we expect the optimal regret bound for this problem to result from using the same fixed time repeating approach used here with the optimal algorithm for the base setting without delay.

The main contributions of this paper can be summarized as follows

\noindent {\bf 1.} We introduce regret bounds for a stochastic CMAB problem with expected monotone and submodular rewards, a cardinality constraint, and composite anonymous feedback. Notably, this paper marks the first study of the regret bound any CMAB problem with composite delayed feedback, including CMAB with submodular rewards. The analysis approach using the notion of upper tail bounds is novel, and has not been considered in any literature with delayed feedback. This approach allows for analysis of the composite anonymous delayed feedback, which was not studied for combinatorial bandits prior to our work.

\noindent {\bf 2.} We investigate the ETCG algorithm from~\cite{nie22_explor_then_commit_algor_for}, detailing its performance in three feedback delay models: bounded adversarial delay, stochastic independent delay, and stochastic conditional independent delay.
Specifically, this is the first study where the distribution of stochastic delay is permitted to vary over time.
This introduces novel models for stochastic delayed composite anonymous feedback, which are more general than those previously explored in existing literature.

\noindent {\bf 3.} Our analysis reveals the cumulative $(1 - 1/e)$-regret of ETCG under specific bounds for each delay model. When comparing stochastic independent and conditional independent delays, the former showcases better regret bounds. Generalizing beyond specific parameters, our findings suggest a regret bound of $\tilde{O}(T^{2/3} + T^{1/3}\nu)$ across delay models \nc{where $\nu$ is a delay parameter that bounds the average effect of the delay from above. 
The exact meanings of this parameter in different settings are defined in Section~\ref{sec:problem_statement}.}

\noindent {\bf 4.} Lastly, we showcase the adaptability of our analysis for delayed feedback in combinatorial bandits, given certain algorithmic conditions. 
Building on~\cite{nie22_explor_then_commit_algor_for}, we derive regret bounds for a meta-algorithm, highlighting its applicability to other CMAB problems such as submodular bandits with knapsack constraints (See~\cite{nie2023framework}).

On the technical side, we define new generalized notions of delay and introduce the notion of upper tail bounds, which measures the tightness of a family of distributions.
As discussed in Appendix~\ref{apx:related:delay}, algorithms designed for composite anonymous feedback, including those in our study, rely on the concept of repeating actions a sufficient number of times to minimize the impact of delay on the observed reward. 
We employ Bernstein's inequality to control the effect of previous actions on the observed reward of the current action that is being repeated.
This approach enables us to establish an upper bound on regret, expressed in terms of the expected value of the upper tail bound.

Further, we note that, as shown in \cite{tajdini24_nearl_minim_optim_submod_maxim_bandit_feedb}, under certain conditions, as long as we aim to avoid a regret term that is exponential in $k$, $O(T^{2/3})$ is optimal. Thus, the proposed analysis considers delay robustness in the algorithm that is optimal without delay (for algorithms that wish to avoid exponential in $k$ dependence and optimal metric being regret compared to a greedy algorithm). There are no other algorithms that have been shown to be better without delayed feedback for combinatorial bandits with full bandit feedback. Further, we obtain linear dependence with delay, which changes from no impact of delay for small delay ($\nu = O(T^{1/3})$) to linear regret for large delay \nc{($\nu = \Omega(T^{2/3})$)}. Thus, we conjecture that the proposed bound is optimal for any algorithm that does not have run time that is exponential in $k$.

Through simulations  with synthetic data, we demonstrate that ETCG outperforms other full-bandit methods in the presence of delayed composite anonymous feedback.

\section{Problem Statement}\label{sec:problem_statement}
Let $T$ be the time horizon, $\Omega$ be the ground set of base arms, and $n := |\Omega|$.
Also let $\C{T}$ be a family of probability distributions on non-negative integers.
At each time-step $t \geq 1$, the agent chooses an action $S_t$ from the set 
$\C{S} = \{S | S \subseteq \Omega,\; |S| \leq k\}$,
where $k$ is the given positive integer.

The environment chooses a delay distribution $\delta_t \in \C{T}$.
The observation $x_t$ will be given by the formula
\begin{equation}
\label{eq:observation}
x_t = \sum_{i = 1}^t f_i(S_i) \delta_i(t-i),    
\end{equation}
where $f_t(S)$ is sampled from $F_t(S)$, the stochastic reward function taking its values in $[0, 1]$.
Moreover, we assume that $\B{E}[F_t(S)] = f(S)$, where $f : 2^{\Omega} \rightarrow [0, 1]$ \nc{ is } a monotone and submodular function. We will use $X_t$ to denote the random variable representing the observation at time $t$.

For $\alpha \in (0, 1]$, the $\alpha$-pseudo-regret is defined by
\[
\C{R}_\alpha :=
\sum_{t = 1}^{T} \left( \alpha f(S^*) - f(S_t) \right),
\]
where $S^* := \op{argmax}_{S \in \C{S}} f(S)$ is the optimal action.
Note that the choice of $\alpha = 1$ corresponds to the classical notion of pseudo-regret.
When there is no ambiguity, we will simply refer to $\C{R}_\alpha$ as the $\alpha$-regret or regret.
In the offline problem with deterministic $f$, finding the optimal action $S^*$ is NP-hard.
In fact, for $\alpha > 1 - 1/e$, \cite{feige98} showed that finding an action which is at least as good as $\alpha f(S^*)$ is NP-hard.
However, the standard greedy approach obtains a set which is at least as good as $(1-1/e)f(S^*)$ \cite{nemhauser1978analysis}.
Therefore, throughout this paper, we will focus on minimizing $(1-1/e)$-regret and drop the subscript when there is no ambiguity.

We consider three settings: bounded adversarial delay and unbounded stochastic independent delay, and unbounded stochastic conditionally independent delay, described next.  In the Bounded Adversarial Delay setting, the only restriction is that the delay is bounded, i.e., the reward of each action is spread over the next $d$ time-steps.
In the Unbounded Stochastic Independent Delay setting, we assume that delay distribution changes over time, but does not depend on the action of the agent.
For example, assume each time-step is a day and delay distribution depends on the day of the week (e.g., lower on weekend).
In the Unbounded Stochastic Conditionally Independent Delay, we assume that the delay distribution may change over time and also depend on the selected actions.
For example, assume that each time-step is a day and every weekend, the delay for some specific actions are reduced.

\begin{example}\label{example}
To elaborate on the nature of the delay, let us ignore the combinatorial aspect of the problem for the moment and consider the following setting.
A retailer, that sells both food and computer products, can buy an advertisement slot on an E-commerce platform, e.g., Amazon or eBay.
This is a 2-armed bandit where we assume that the retailer buys an ad slot for a product at each time-step.
We further assume that each time-step is a single day and the only information revealed to the retailer every day is the total added revenue as a result of the advertisements.

A delay distribution is a sequence of real numbers that add to one, e.g., $\delta = (0.9, 0.05, 0.05, 0, \cdots)$.
Such a delay means that $90\%$ of the reward (increase in revenue as a result of the ads) is received immediately, while $5\%$ of the reward is received in each of the next 2 time-steps.
Clearly it is not enough to consider a fixed delay distribution.
Therefore we consider a situation where $\Delta$ is a random variable where $\delta$ is a realization of $\Delta$.

It is reasonable to assume that the effect of an ad for food is more immediately seen in the revenue compared to the effect of an ad for computer products.
Therefore we may consider a setting where $\Delta_{F}$ is a random delay distribution corresponding to food and $\Delta_{C}$ correspond to computer products and $\Delta_F \neq \Delta_C$.
This corresponds to the setting considered in \cite{Wang_Wang_Huang_2021} and \cite{garg19_stoch_bandit_delay_compos_anony_feedb}.

Now assume that a sale for computer products, but not food, is going to start next week.
Modeling this scenario means that $\Delta$ should change over time, but should also depend on the action, since only one of the actions is affected by the sales.
This corresponds to \textit{Unbounded Stochastic Conditionally Independent Delay} considered in our paper.

If we instead assume that the delay changes over time, but does not depend on the arm (for example if the retailer is selling different types of computer products), then this will be \textit{Unbounded Stochastic Independent Delay}.

Finally, if delay is too complicated to be covered by previous settings, then we consider \textit{Bounded Adversarial Delay}.
For example, consider a scenario where different retailers pay the E-commerce platform for advertisement slots, but when the ad is shown depends on the buyers and the actions of other retailers, which can not be known in advance. The boundedness assumption guarantees that for each ad slot purchased, the effect on the revenue of the retailer will be limited to a fixed time, e.g. one month, from the purchase of the ad.
\end{example}

\subsection{Unbounded Stochastic Independent Delay}
In the unbounded stochastic independent delay case, we assume that there is a sequence of random delay distributions $(\Delta_t)_{t=1}^{\infty}$ that is pair-wise independent, such that
\[
X_t = \sum_{i = 1}^t F_i(S_i) \Delta_i(t-i).
\]
In other words, at each time-step $t$, the observed reward is based on all the actions that have been taken in the past and the action taken in time-step $i \leq t$ contributes to the observation proportional to the value of the delay distribution at time $i$, $\Delta_i$, evaluated at $t-i$.
We call this feedback model \textit{composite anonymous unbounded stochastic independent delay feedback}.

To define $\Delta_t$, let $(\delta_i)_{i \in \C{J}}$ be distributions chosen from $\C{T}$, where $\C{J}$ is a finite index set and each 
$\delta_i$ is represented by a vector of its probability mass function. 
Thus, $\delta_i(x) = \B{P}(\delta_i = x)$, for all $x \geq 0$.
Let $P_t$ be a random variables taking values in $\C{J}$, where $P_t(i) = \B{P}(P_t = i)$.
Further, we define $\Delta_t(x) := \sum_{i \in \C{J}} P_t(i) \delta_i(x)$, for all $x \geq 0$.
Finally, $\Delta_t$ is defined as a vector $(\Delta_t(0), \Delta_t(1), \cdots)$.
Note that $\sum_{i = 0}^\infty \Delta_t(i) = 1$.
The expectation of $\Delta_t$ over the randomness of $P_t$ is denoted by $\B{E}_{\C{T}}(\Delta_t)$ which is a distribution given $\delta_i$'s are distributions.

More generally, we may drop the assumption that $\C{J}$ is finite and define $\Delta_t$ more directly as follows.
Each $\Delta_t$ is a random variable taking values in the set $\C{T}$.
In other words, for all $x \geq 0$, the value of $\Delta_t(x) = \Delta_t(\{x\})$ is a random variable taking values in $[0, 1]$ such that $\sum_{i = 0}^\infty \Delta_t(i) = \Delta_t(\{0, 1, 2, \cdots\}) = 1$.
We define $\B{E}_{\C{T}}(\Delta_t)$ as the distribution over the set of non-negative integers for which we have
\[
\forall x \geq 0
,\quad
\B{E}_{\C{T}}(\Delta_t)(\{x\}) = \B{E}_{\C{T}}(\Delta_t(\{x\})) \in [0, 1].
\]

We will also explain these definitions by an example. 
Let $\C{T}$ be a family of distributions supported on $\{0, 1, 2\}$.
We choose $\C{J} = \{1, 2\}$, with $\delta_i$ as the uniform distribution over $\{0, 1\}$
and $\delta_2$ as the uniform distribution over $\{0, 2\}$.
Then, we have $\delta_1(0) = \delta_1(1) = 1/2$ and $\delta_2(0) = \delta_2(2) = 1/2$.
Further, let $P_1$ be a random variable such that $P_1(1) + P_1(2) = 1$.
Then, $\Delta_1(x) = \sum_{i = 1,2} P_1(i) \delta_i(x)$ gives 
$\Delta_1(0) = P_1(1)/2 + P_1(2)/2$, $\Delta_1(1) = P_1(1)/2$ and $\Delta_1(2) = P_1(2)/2$.

Note that the independence implies that $\Delta_t$ can not depend on the action $S_t$, as this action depends on the history of observations, which is not independent from $(\Delta_j)_{j = 1}^{t-1}$.

Without any restriction on the delay distributions, there may not be any reward within time $T$ and thus no structure of the rewards can be exploited.
Thus, we need to have some guarantee that the delays do not escape to infinity.
An appropriate formalization of this idea is achieved using the following tightness assumption.

\begin{assumption}\label{assumption}
The family of distributions $(\B{E}_{\C{T}}(\Delta_t))_{t = 1}^{\infty}$ is tight.
\end{assumption}

Recall that a family $(\delta_i)_{i \in I}$ is called tight if and only if for every positive real number $\epsilon$, there is an integer $j_{\epsilon}$ such that 
$\delta_i( \{ x \geq j_{\epsilon} \} ) \leq \epsilon$, for all $i \in I$. (See e.g. \cite{billingsley1995probability})

\begin{remark}
If $\C{T}$ is tight, then $(\B{E}_{\C{T}}(\Delta_t))_{t = 1}^{\infty}$ is trivially tight.
Note that if $\C{T}$ is finite, then it is tight.
Similarly, if $(\B{E}_{\C{T}}(\Delta_t))_{t = 1}^{\infty}$ is constant and therefore only takes one value, then it is tight.
As a special case, if $(\Delta_t)_{t=1}^{\infty}$ is identically distributed, then $(\B{E}_{\C{T}}(\Delta_t))_{t = 1}^{\infty}$ is constant and therefore tight.
\end{remark}

To quantify the tightness of a family of probability distribution, we define the notion of \textit{upper tail bound}.

\begin{definition}
Let $(\delta_i)_{i \in I}$ be a family of probability distributions over the set of non-negative integers.
We say $\delta$ is an \textit{upper tail bound} for this family if
\[ 
\delta_i( \{ x \geq j \} ) \leq \delta( \{ x \geq j \} ),
\]
for all $i \in I$ and $j \geq 0$.
\end{definition}

In the following result (with proof in Appendix \ref{proof_lem_tightness_equiv_upper_bound}), we show that the  tightness and the existence of upper tail bounds are equivalent.

\begin{lemma}\label{L:tightness_equiv_upper_bound}
Let $(\delta_i)_{i \in I}$ be a family of probability distributions over the set of non-negative integers. 
Then this family is tight, if and only if it has an upper tail bound.
\end{lemma}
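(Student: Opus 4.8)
The plan is to prove both implications. The reverse direction is essentially immediate, and the forward direction is handled by explicitly constructing the pointwise-smallest upper tail bound from the survival functions of the family.

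\textbf{($\Leftarrow$) An upper tail bound implies tightness.} Suppose $\delta$ is an upper tail bound for $(\delta_i)_{i\in I}$. Since $\delta$ is itself a single probability distribution on the non-negative integers, $\delta(\{x \ge j\}) = \sum_{l \ge j}\delta(\{l\})$ is the tail of a convergent series and hence tends to $0$ as $j \to \infty$. Given $\epsilon > 0$, choose $j_\epsilon$ with $\delta(\{x \ge j_\epsilon\}) \le \epsilon$; then the defining inequality of an upper tail bound gives $\delta_i(\{x \ge j_\epsilon\}) \le \delta(\{x \ge j_\epsilon\}) \le \epsilon$ for every $i \in I$, which is exactly the definition of tightness.

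\textbf{($\Rightarrow$) Tightness implies an upper tail bound exists.} Define $g : \mathbb{Z}_{\ge 0} \to [0,1]$ by $g(j) := \sup_{i \in I}\delta_i(\{x \ge j\})$. I would first record three properties: (i) $g(0) = 1$, since $\delta_i(\{x \ge 0\}) = 1$ for all $i$; (ii) $g$ is non-increasing, since $\{x \ge j+1\}\subseteq\{x\ge j\}$ forces $\delta_i(\{x\ge j+1\}) \le \delta_i(\{x\ge j\})$ for each $i$, and taking suprema preserves the inequality; (iii) $g(j) \to 0$ as $j\to\infty$, which is precisely where tightness is used: for $\epsilon>0$, tightness yields $j_\epsilon$ with $\delta_i(\{x \ge j_\epsilon\}) \le \epsilon$ for all $i$, hence $g(j) \le g(j_\epsilon) \le \epsilon$ whenever $j \ge j_\epsilon$. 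Now set $\delta(\{j\}) := g(j) - g(j+1)$ for $j \ge 0$; by (ii) these masses are non-negative, and by (i) and (iii) the telescoping sum $\sum_{j \ge 0}\bigl(g(j) - g(j+1)\bigr) = g(0) - \lim_{j\to\infty} g(j) = 1$, so $\delta$ is a probability distribution on $\mathbb{Z}_{\ge 0}$. Its survival function is $\delta(\{x \ge j\}) = \sum_{l \ge j}\bigl(g(l) - g(l+1)\bigr) = g(j)$, using (iii) again. Hence for every $i \in I$ and $j \ge 0$ we have $\delta_i(\{x \ge j\}) \le g(j) = \delta(\{x \ge j\})$, so $\delta$ is an upper tail bound for the family.

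\textbf{Main obstacle.} There is no serious difficulty; the one point demanding care is the verification of property (iii), which is the sole place the tightness hypothesis is invoked and which must not be glossed over — and, correspondingly, noticing that convergence of $g$ to $0$ is exactly what makes the telescoping increments sum to $1$ and makes $g$ the genuine survival function of the constructed $\delta$. (Equivalently, one may observe directly that any non-increasing $g$ with $g(0)=1$ and $g(j)\to 0$ is the survival function of a distribution on $\mathbb{Z}_{\ge 0}$, and skip naming the point masses, but writing them out keeps the construction concrete.)
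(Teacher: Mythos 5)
Your proof is correct and uses essentially the same construction as the paper: the candidate upper tail bound is built from the successive differences of $g(j)=\sup_{i\in I}\delta_i(\{x\ge j\})$, with tightness invoked exactly where you flag it, namely to show $g(j)\to 0$ so that the telescoping masses sum to $1$. Your final verification is in fact cleaner than the paper's: you show the survival function of the constructed $\delta$ \emph{equals} $g(j)$, which directly yields the required inequality $\delta_i(\{x\ge j\})\le\delta(\{x\ge j\})$, whereas the paper's closing display only records the reverse (and unneeded) inequality $\delta(\{t\ge a\})\le\sup_{i\in I}\delta_i(\{x\ge a\})$, leaving the actual upper-tail-bound property implicit in the equality that holds there.
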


A tail upper bound allows us to estimate and bound the effect of past actions on the current observed reward.
More precisely, given an upper tail bound $\tau$ for the family $(\B{E}_{\C{T}}(\Delta_t))_{t = 1}^{\infty}$, the effect of an action taken at time $i$ on the observer reward at $t$ is proportional to $\Delta_i(t-i)$, which can be bounded in expectation by $\tau$.
\[
  \B{E}_{\C{T}}(\Delta_i(t-i)) \leq \B{E}_{\C{T}}(\Delta_i(\{x \geq t-i\})) \leq \tau(\{x \geq t-i\}).
\]
As we will see, only the expected value of the upper tail bound appears in the regret bound.

\subsection{Unbounded Stochastic Conditionally Independent Delay}
In the unbounded stochastic conditionally independent delay case, we assume that there is a family of random delay distributions $\{\Delta_{t, S}\}_{t \geq 1, S \in \C{S}}$ such that for any $S \in \C{S}$, the sequence $(\Delta_{t, S})_{t=1}^{\infty}$ is pair-wise independent and
\[
X_t = \sum_{i = 1}^t F_i(S_i) \Delta_{i, S_i}(t-i).
\]
We call this feedback model \textit{composite anonymous unbounded stochastic conditionally independent delay feedback}.

In this case the delay $\Delta_{t} = \Delta_{t, S_t}$ can depend on the action $S_t$, but conditioned on the current action, it is independent of (some of the) other conditional delays. Similar to the stochastic independent delay setting, we assume that the sequence $\{\B{E}_{\C{T}}(\Delta_{t, S})\}_{t \geq 1, S \in \C{S}}$ is tight.

\begin{remark}\label{R:new_general_delay}
  In previously considered stochastic composite anonymous feedback models (e.g.,~\cite{Wang_Wang_Huang_2021, garg19_stoch_bandit_delay_compos_anony_feedb}),
  the delay distribution is independent of time.
  In other words, every action $S$ has a corresponding random delay distribution $\Delta_S$, and the sequence $(\Delta_{t, S})_{t = 1}^{\infty}$ is independent and identically distributed.
  Therefore, the number of distributions in the set $\{\B{E}_{\C{T}}(\Delta_{t, S})\}_{t \geq 1, S \in \C{S}}$ is less than or equal to the number of arms, which is finite.
  Hence the family $\{\B{E}_{\C{T}}(\Delta_{t, S})\}_{t \geq 1, S \in \C{S}}$ is tight.
\end{remark}

\subsection{Bounded Adversarial Delay}
\begin{algorithm}[H]
  \caption{ETCG algorithm}\label{ALG:main}
  \begin{algorithmic}[1]
    \REQUIRE{ Set of base arms $\Omega$, horizon $T$, cardinality constraint $k$\nc{, exploration parameter $m \geq 1$} }
    \ENSURE{ $n \leq T$  }
    \STATE{ $S^{(0)} \leftarrow \emptyset$, $n \leftarrow |\Omega|$ }
    \FOR{ phase $i \in \{1, 2, \cdots, k \}$ }
      \FOR{ arm $a \in \Omega \setminus S^{(i-1)}$ }
        \STATE{ Play $S^{(i-1)} \cup \{a\}$ arm $m$ times }
        \STATE{ Calculate the empirical mean $\bar{x}_{i, a}$ }
      \ENDFOR
      \STATE{ $a_i \leftarrow \op{argmax}_{a \in \Omega \setminus S^{(i-1)}} \bar{x}_{i, a}$ }
      \STATE{ $S^{(i)} \leftarrow S^{(i-1)} \cup \{a_i\}$ }
    \ENDFOR
    \FOR{ remaining time }
      \STATE{ Play action $S^{(k)}$ }
    \ENDFOR
  \end{algorithmic}
\end{algorithm}

In the bounded adversarial delay case, we assume that there is an integer $d \geq 0$ such that for all $\delta \in \C{T}$, we have $\delta(\{ x > d\}) = 0$. 
Here we have
\[
X_t = \sum_{i = \max\{1, t-d\}}^t F_i(S_i) \delta_i(t-i),
\]
where $(\delta_t)_{t = 1}^{\infty}$ is a sequence of distributions in $\C{T}$ chosen by the environment.
Here we used $\delta$ instead of $\Delta$ to emphasize the fact that these distributions are not chosen according to some random variable with desirable properties.
In fact, the environment may choose $\delta_t$ non-obliviously, that is with the full knowledge of the history up to the time-step $t$.
We call this feedback model \textit{composite anonymous bounded adversarial delay feedback}.

\section{Regret Analysis with Delayed Feedback}

For analyzing the impact of delay, we use the algorithm Explore-Then-Commit-Greedy (ETCG) algorithm, as proposed in \cite{nie22_explor_then_commit_algor_for}.
We start with $S^{(0)} = \emptyset$ in phase $i=0$.
In each phase $i \in \{1, \cdots, k\}$, we go over the list of all base arms $\Omega \setminus S^{(i-1)}$.
For each such base arm, we take the action $S^{(i-1)} \cup \{a\}$ for $m = \lceil (T/n)^{2/3} \rceil$ times and store the empirical mean in the variable $\bar{X}_{i, a}$.
Afterwards, we let $a_i$ to be the base arm which corresponded to the highest empirical mean and define $S^{(i)} := S^{(i-1)} \cup \{a_i\}$.
After the end of phase $k$, we  keep taking the action $S^{(k)}$ for the remaining time. The algorithm is summarized in Algorithm~\ref{ALG:main}.

We now provide the main results of the paper that shows the regret bound of Algorithm \ref{ALG:main} with delayed composite feedback for different feedback models. We define two main events that control the delay and the randomness of the observation.
Let 
$I = \{ (i, a) \mid 1 \leq i \leq k, a \in \Omega \setminus S^{(i-1)} \}$, 
\nc{
and let $\bar{F}_{i, a}$ and $\bar{X}_{i, a}$ be the averages of $F_t$ and $X_t$ over the steps in the exploration phase where the action $S^{(i-1)} \cup \{a\}$ is taken,
}
and define
\[
\C{E} := \left\{
    |\bar{F}_{i, a} - f(S^{(i-1)}\cup\{a\})| \leq \op{rad}
    \mid 
    (i, a) \in I
\right\}, 
\]
and
\[ 
\C{E}'_d := \left\{
    |\bar{F}_{i, a} - \bar{X}_{i, a}| \leq \frac{2d}{m}
    \mid 
    (i, a) \in I
\right\}, \]
where $\op{rad}, d > 0$ are real numbers that will be specified later.
We may drop the subscript $d$ when it is clear from the context.
When $\C{E}$ happens, the average observed reward associated with each arm stays close \nc{to} its expectation, which is the value of the submodular function.
When $\C{E}'_d$ happens, the average observed reward for each arm remains close to the average total reward associated with playing that arm. The next result bounds the regret as:
\begin{theorem}\label{T:regret_bound_main}
For all $d > 0$\nc{, $m \geq 1$, and $\op{rad} > 0$,} we have
\begin{align*}
\B{E}(\C{R}) 
&\leq 
mnk + 2kT\op{rad} + \frac{4kTd}{m} + 2nkT\exp(-2m\op{rad}^2)\\& + T(1 - \B{P}(\C{E}'_d)).
\end{align*}
\end{theorem}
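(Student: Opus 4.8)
The plan is to split the regret according to whether the ``good event'' $\C{E} \cap \C{E}'_d$ occurs, and, on the good event, to treat the exploration phase and the commit phase of Algorithm~\ref{ALG:main} separately.

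Start from the crude bound that the per-step regret $(1-1/e)f(S^*) - f(S_t)$ is at most $1$, since $f$ is monotone with values in $[0,1]$; hence $\C{R} \le T$ deterministically. This controls the bad-event contribution: by a union bound $1 - \B{P}(\C{E} \cap \C{E}'_d) \le \B{P}(\C{E}^c) + \bigl(1 - \B{P}(\C{E}'_d)\bigr)$, and $\B{P}(\C{E}^c) \le 2nk\exp(-2m\op{rad}^2)$ follows from Hoeffding's inequality: for each $(i,a) \in I$, the $m$ samples of $F_t(S^{(i-1)}\cup\{a\})$ collected while that arm is played are independent, lie in $[0,1]$, and have mean $f(S^{(i-1)}\cup\{a\})$ conditionally on $S^{(i-1)}$ (which is measurable with respect to the observations made before phase $i$), so $\B{P}(|\bar F_{i,a} - f(S^{(i-1)}\cup\{a\})| > \op{rad}) \le 2\exp(-2m\op{rad}^2)$, and $|I| \le nk$. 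Thus the bad event contributes at most $T\bigl(2nk\exp(-2m\op{rad}^2) + (1-\B{P}(\C{E}'_d))\bigr)$ to $\B{E}(\C{R})$, giving the last two terms.

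On the good event, the exploration phase lasts $m\sum_{i=1}^k (n-i+1) \le mnk$ steps, each of regret at most $1$, which yields the $mnk$ term. For the commit phase, put $\epsilon := \op{rad} + \tfrac{2d}{m}$; on $\C{E}\cap\C{E}'_d$ the triangle inequality gives $|\bar X_{i,a} - f(S^{(i-1)}\cup\{a\})| \le \epsilon$ for every $(i,a)\in I$. Since $a_i$ maximizes $\bar X_{i,a}$, this forces $f(S^{(i)}) \ge f(S^{(i-1)}\cup\{a\}) - 2\epsilon$ for all $a\in\Omega\setminus S^{(i-1)}$, i.e.\ the marginal gain at phase $i$ is within $2\epsilon$ of the greedy-optimal gain. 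Plugging this into the standard analysis of the greedy algorithm for monotone submodular maximization under a cardinality constraint — use submodularity and monotonicity to obtain $f(S^*) - f(S^{(i-1)}) \le k\bigl(f(S^{(i)}) - f(S^{(i-1)}) + 2\epsilon\bigr)$, then unroll $f(S^*)-f(S^{(i)}) \le (1-\tfrac1k)\bigl(f(S^*)-f(S^{(i-1)})\bigr) + 2\epsilon$ and use $(1-\tfrac1k)^k \le 1/e$ — gives $f(S^{(k)}) \ge (1-1/e)f(S^*) - 2k\epsilon$. Hence each of the at most $T$ commit steps has regret at most $2k\epsilon = 2k\op{rad} + \tfrac{4kd}{m}$, producing the terms $2kT\op{rad} + \tfrac{4kTd}{m}$. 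Adding the good-event and bad-event contributions gives the claimed inequality.

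The main obstacle is the commit-phase step: carefully carrying the additive error $2\epsilon$ through the greedy recursion so that the $(1-1/e)$ factor is preserved with only an additive loss, and keeping the conditioning clean enough (the random set $S^{(i-1)}$ depending only on observations prior to phase $i$) for Hoeffding to apply to the fresh samples defining $\C{E}$. The analysis of $\C{E}'_d$ itself — bounding $1-\B{P}(\C{E}'_d)$ and choosing $d$ — is model-specific and is not needed here, since it enters this statement only as a black box.
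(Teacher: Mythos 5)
Your proposal is correct and follows essentially the same route as the paper: Hoeffding for $\B{P}(\C{E}^c)$, the triangle inequality combining $\C{E}$ and $\C{E}'_d$ to get a $2\epsilon = 2\op{rad}+\tfrac{4d}{m}$ additive loss per greedy step, the standard submodular greedy recursion unrolled to $f(S^{(k)}) \ge (1-1/e)f(S^*) - 2k\epsilon$, and the conditioning/union-bound decomposition with $\C{R}\le T$ on the bad event. No substantive differences from the paper's Lemmas~\ref{L:bound_P_E}--\ref{L:regret_conditioned_on_both} and Corollary~\ref{C:explotation_regret_bound_per_timestep}.
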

See Appendix~\ref{apx:base_lemmas} for a detailed proof. To obtain the regret bounds for different settings, we  need to find lower bounds for $\B{P}(\C{E}'_d)$ and use Theorem~\ref{T:regret_bound_main}.

\nc{
Note that if $d > m$, then we have $\frac{4kTd}{m} > T$ and the regret bound becomes $\Omega(T)$.
In particular, in the following theorems where we assume $m = \Theta(T^{2/3})$, the results are only non-trivial if $d = O(T^{2/3})$.
}

\begin{theorem}[Bounded Adversarial Delay]\label{T:regret_uniformly_bounded_delay}
If the delay is uniformly bounded by $d$ \nc{ and $m = \lceil (T/n)^{2/3} \rceil$ }, then we have
\[
\B{E}(\C{R})
= O(kn^{1/3} T^{2/3} (\log(T))^{1/2}) + O(kn^{2/3}T^{1/3} d).
\]
\end{theorem}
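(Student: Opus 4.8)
The plan is to apply Theorem~\ref{T:regret_bound_main} with the bound parameter taken to be the uniform delay bound $d$ itself, and to observe that in the bounded adversarial model the event $\C{E}'_d$ holds \emph{deterministically}, so that the term $T(1 - \B{P}(\C{E}'_d))$ simply vanishes. The stated bound then drops out of substituting the algorithm's choices $m = \lceil (T/n)^{2/3}\rceil$ and $\op{rad} = \sqrt{\log(T)/m}$.

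First I would analyze one block of $m$ consecutive time-steps $t \in \{t_0, \dots, t_0+m-1\}$ during which the algorithm repeats a fixed arm $b = S^{(i-1)}\cup\{a\}$. Writing $X_t = \sum_{j = \max\{1, t-d\}}^{t} F_j(S_j)\,\delta_j(t-j)$ and summing over the block, I split the double sum into the contribution of plays \emph{inside} the block ($j \geq t_0$) and plays \emph{before} the block ($j < t_0$). For the inside part, exchanging the order of summation gives $\sum_{j=t_0}^{t_0+m-1} F_j(S_j)\sum_{\ell=0}^{\min\{t_0+m-1-j,\,d\}}\delta_j(\ell)$; since each $\delta_j$ is supported on $\{0,\dots,d\}$, the inner sum equals the full mass $1$ except for the last $d$ indices $j$, so this differs from $\sum_{j=t_0}^{t_0+m-1} F_j(S_j)$ by at most $d$ (at most $d$ indices, each with a term in $[0,1]$ and a missing tail mass in $[0,1]$). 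For the ``before'' part, only the first $d$ time-steps $t \in \{t_0, \dots, t_0+d-1\}$ receive any such contribution, and it is a sum over at most $d$ earlier plays of quantities in $[0,1]$, hence at most $d$. Therefore $\bigl|\sum_{t=t_0}^{t_0+m-1} X_t - \sum_{t=t_0}^{t_0+m-1} F_t(S_t)\bigr| \leq 2d$, i.e. $|\bar X_{i,a} - \bar F_{i,a}| \leq 2d/m$ for every $(i,a) \in I$, so $\B{P}(\C{E}'_d) = 1$ and the last term of Theorem~\ref{T:regret_bound_main} is zero. (For the very first arm played there is no ``before'' contribution, which only helps.)

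Finally I would substitute the parameters into $\B{E}(\C{R}) \leq mnk + 2kT\op{rad} + \frac{4kTd}{m} + 2nkT\exp(-2m\op{rad}^2)$. Since $n \leq T$, we have $m = \Theta((T/n)^{2/3})$, so $mnk = O(k n^{1/3} T^{2/3})$ and $\frac{4kTd}{m} = O(k n^{2/3} T^{1/3} d)$. Because $\op{rad}^2 = \log(T)/m$, the exponential term equals $2nkT\exp(-2\log T) = 2nk/T = O(k)$, which is lower order, while $2kT\op{rad} = 2kT\sqrt{\log(T)/m} = O(k n^{1/3} T^{2/3}(\log T)^{1/2})$. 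Collecting the terms yields $\B{E}(\C{R}) = O(k n^{1/3} T^{2/3}(\log T)^{1/2}) + O(k n^{2/3} T^{1/3} d)$.

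I expect the only non-routine step to be the deterministic bookkeeping in the second paragraph: one has to see that the ``spill-in'' from the previous arm's tail and the ``spill-out'' of the current arm's last plays are each bounded by $d$ (not by something like $md$), using only that rewards lie in $[0,1]$ and that delays are supported on $\{0,\dots,d\}$. Once this deterministic estimate $|\bar X_{i,a}-\bar F_{i,a}|\le 2d/m$ is established, everything else is a direct plug-in to Theorem~\ref{T:regret_bound_main}.
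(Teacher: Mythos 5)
Your proposal is correct and follows essentially the same route as the paper: the paper's Lemma~\ref{L:bound_P_E_prime_uniformly_bounded_delay} establishes the identical deterministic estimate $|\bar X_{i,a}-\bar F_{i,a}|\le 2d/m$ by the same spill-in/spill-out decomposition (at most $d$ mass entering the block from the preceding $d$ plays, and at most $\min\{m,d\}\le d$ mass escaping from the block's last $d$ plays), giving $\B{P}(\C{E}'_d)=1$. The subsequent substitution of $m=\lceil (T/n)^{2/3}\rceil$ and $\op{rad}=\sqrt{\log(T)/m}$ into Theorem~\ref{T:regret_bound_main} matches the paper's calculation term by term.
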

\begin{proof}
  The detailed proof is provided in Appendix \ref{apdx_ubd}. Here, we describe the proof outline.
  In this setting, there is an integer $d \geq 0$ such that $\delta_t(\{x > d\}) = 0$, for all $t \geq 1$.
  Therefore, for any $m$ consecutive time-steps $t_{i, a} \leq t \leq t'_{i, a}$, the effect of delay may only be observed in the first $d$ and the last $d$ time-steps.
  It follows that $
    \left|
      \sum_{t=t_{i, a}}^{t'_{i, a}} X_t - \sum_{t=t_{i, a}}^{t'_{i, a}} F_t
    \right| \leq 2d,$ 
  for all $(i, a) \in I$.
  Therefore, in this case, we have $\B{P}(\C{E}'_d) = 1$.
  Note that we are not making any assumptions about the delay distributions.
  Therefore, the delay may be chosen by an adversary with the full knowledge of the environment, the algorithm used by the agent and the history of actions and rewards.
  Plugging this in the bound provided by Theorem~\ref{T:regret_bound_main} completes the proof.
\end{proof}
We note that ${\widetilde{O}}(T^{2/3})$ is the best known bound for the problem in the absence of the delayed feedback, and the result here demonstrate an additive impact of the delay on the regret bounds.

\begin{theorem}[Stochastic Independent Delay]\label{T:regret_stochastic_independent}
If the delay sequence is stochastic and independent and tight in expectation \nc{ and $m = \lceil (T/n)^{2/3} \rceil$ }, then we have
\[
\B{E}(\C{R})
= O(kn^{1/3}T^{2/3}\log(T))
  + O(kn^{2/3}T^{1/3}\B{E}(\tau)),
\]
where $\tau$ is an upper tail bound for $\{\B{E}_{\C{T}}(\Delta_t)\}_{t = 1}^{\infty}$.
\end{theorem}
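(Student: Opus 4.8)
\emph{Proof strategy.} The plan is to reduce everything to Theorem~\ref{T:regret_bound_main}. The first four terms of that bound are independent of the delay model, so it suffices to exhibit a value of $d$ that is $O(\B{E}(\tau) + \log T)$ for which $\B{P}(\C{E}'_d) \geq 1 - 1/T$. Once this is in hand, plugging $m = \lceil (T/n)^{2/3}\rceil$, $\op{rad} = \sqrt{\log(T)/m}$ and this $d$ into Theorem~\ref{T:regret_bound_main} gives $mnk = O(kn^{1/3}T^{2/3})$, $2kT\op{rad} = O(kn^{1/3}T^{2/3}\sqrt{\log T})$, $4kTd/m = O(kn^{2/3}T^{1/3}\B{E}(\tau)) + O(kn^{2/3}T^{1/3}\log T)$, $2nkT\exp(-2m\op{rad}^2) = 2nk/T$, and $T(1-\B{P}(\C{E}'_d)) \le 1$; using $k \le n \le T$ (so that $kn^{2/3}T^{1/3}\log T \le kn^{1/3}T^{2/3}\log T$ and the remaining terms are of lower order) collapses this to the claimed $O(kn^{1/3}T^{2/3}\log T) + O(kn^{2/3}T^{1/3}\B{E}(\tau))$. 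So the whole argument reduces to the probabilistic estimate on $\C{E}'_d$.

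To control $\B{P}(\C{E}'_d)$, I would fix a pair $(i,a) \in I$ and let $t_{i,a} \le t \le t'_{i,a}$ be the block of $m$ consecutive rounds in which $S := S^{(i-1)}\cup\{a\}$ is played; these endpoints are fixed by the algorithm's deterministic schedule. Expanding $X_t = \sum_{j\le t} F_j(S_j)\Delta_j(t-j)$, rearranging $\sum_{t=t_{i,a}}^{t'_{i,a}} X_t$ by the contributing round $j$, and bounding each fresh reward $F_j(S_j) \le 1$, I would obtain
\[
  \Big|\sum_{t=t_{i,a}}^{t'_{i,a}} X_t - \sum_{t=t_{i,a}}^{t'_{i,a}} F_t\Big|
  \;\le\; \sum_{\ell \ge 1}\Delta_{t_{i,a}-\ell}\big(\{x\ge\ell\}\big)
  \;+\; \sum_{\ell=1}^{m}\Delta_{t'_{i,a}-\ell+1}\big(\{x\ge\ell\}\big),
\]
i.e.\ the miscount over the block is at most the mass of delays from earlier rounds that lands inside the block (``incoming'') plus the mass of the block's own delays that lands after it (``outgoing''). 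Every summand lies in $[0,1]$, each involves the delay distribution of a distinct round, and all rounds appearing are distinct, so this bound is a sum of independent bounded random variables.

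Taking expectations over the delay randomness and using the inequality $\B{E}_{\C{T}}(\Delta_i(\{x\ge\ell\})) \le \tau(\{x\ge\ell\})$ from the discussion of upper tail bounds together with $\sum_{\ell\ge1}\tau(\{x\ge\ell\}) = \B{E}(\tau)$, both the incoming and the outgoing sums have expectation at most $\B{E}(\tau)$, so the miscount has mean at most $2\B{E}(\tau)$; and since the summands lie in $[0,1]$ their second moments are bounded by their means, so the variance of the miscount is at most $2\B{E}(\tau)$. A Bernstein inequality for the sum of independent bounded terms then shows that, for an absolute constant $C$, the miscount exceeds $C(\B{E}(\tau) + \log(nkT))$ with probability at most $1/(nkT)$; a union bound over the at most $nk$ pairs in $I$ gives $\B{P}(\C{E}'_d) \ge 1 - 1/T$ for $d := \tfrac12 C(\B{E}(\tau) + \log(nkT)) = O(\B{E}(\tau) + \log T)$, which is exactly what the first paragraph requires.

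The step I expect to be the main obstacle is this concentration estimate. One must first peel off the reward factors $F_j(S_j)$ — which are entangled with the delays through the history-dependent choice of $S_j$ — so that the miscount becomes a genuine sum of independent bounded variables; this is where it matters that the block boundaries are fixed and that the delay distributions of distinct rounds are independent. It is equally essential that the variance proxy be of order $\B{E}(\tau)$ rather than $m$, so that Bernstein produces an \emph{additive} $\B{E}(\tau)$ and not a multiplicative factor; a cruder, Chebyshev-type bound would force $d$ to carry a polynomial factor in $n$ and $k$ and would destroy the claimed regret.
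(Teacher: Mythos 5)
Your proposal is correct and follows essentially the same route as the paper: the same incoming/outgoing decomposition of the block miscount after bounding $F_j \le 1$, the same use of the upper tail bound to get mean and variance proxies of order $\B{E}(\tau)$, Bernstein plus a union bound over the $nk$ blocks, and the choice $d = O(\B{E}(\tau) + \log T)$ fed into Theorem~\ref{T:regret_bound_main}. The only cosmetic difference is that the paper applies Bernstein separately to each quantity $C_{i,a} = \sum_{j\le t'_{i,a}}\Delta_j(\{x > t'_{i,a}-j\})$ and observes that the incoming mass of one block equals $C_{i',a'}$ of the preceding block, whereas you apply Bernstein once to the combined sum.
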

\begin{proof}
  The detailed proof is provided in Appendix \ref{apdx_ubed}. Here, we describe the proof outline. 
  We start by defining the random variables 
$    C_{i, a} = \sum_{j = 1}^{t'_{i, a}} \Delta_j(\{x > t'_{i, a} - j\}),$ 
  for all $(i, a) \in I$.
  This random variable measure the effect of actions taken up to $t'_{n, i}$ on the observed rewards after $t'_{n, i}$.
  In fact, we will see that $m|\bar{X}_{i, a} - \bar{F}_{i, a}|$ may be bounded by the sum of two terms.
  One $C_{i, a}$ which bounds the amount of reward that ``escapes" from the time interval $[t_{i, a}, t'_{i, a}]$.
  The second one $C_{i', a'}$, where $(i', a')$ corresponds to the action taken before $S^{(i-1)}\cup\{a\}$.
  This bound corresponds to the total of reward of the past actions that is observed during $[t_{i, a}, t'_{i, a}]$.
  Therefore, in order for the event $\C{E}'_d$ to happen, it is sufficient to have $C_{i, a} \leq d$, for all $(i, a) \in I$.
  Since $C_{i, a}$ is a sum of independent random variables, we may use Bernstein's inequality to see that $\B{P}(C_{i, a} > \B{E}(C_{i, a}) + \lambda)
    \leq \exp\left(-\frac{\lambda^2}{2(\B{E}(\tau) + \lambda/3)}\right).$ It follows from the definition that $\B{E}(C_{i, a}) \leq \B{E}(\tau)$.
  Therefore, by setting $d = \B{E}(\tau) + \lambda$, and performing union bound on the complement of $\C{E}'_d$ gives $\B{P}(\C{E}'_d)\ge 1 - nk\exp\left(-\frac{\lambda^2}{2(\B{E}(\tau) + \lambda/3)}\right)$. Plugging this in Theorem~\ref{T:regret_bound_main} and choosing appropriate $\lambda$ gives us the desired result.
 \if 0 we get
  \begin{align*}
    \B{P}(C_{i, a} > d) 
    \leq \exp\left(-\frac{\lambda^2}{2(\B{E}(\tau) + \lambda/3)}\right).
  \end{align*}
   
  Therefore, we have
  \begin{align*}
    \B{P}(\C{E}'_d) 
    &\geq \B{P}\left( \bigcap_{i, a} \{ C_{i, a} \leq d \} \right) \\
    &= 1 - \B{P}\left( \bigcup_{i, a} \{ C_{i, a} > d \} \right) \\
    &\geq 1 - \sum_{i, a}\B{P}\left( \{ C_{i, a} > d \} \right) \\
    &\geq 1 - \sum_{i, a} \exp\left(-\frac{\lambda^2}{2(\B{E}(\tau) + \lambda/3)}\right) \\
    &\geq 1 - nk\exp\left(-\frac{\lambda^2}{2(\B{E}(\tau) + \lambda/3)}\right).
  \end{align*}
  Plugging this in Theorem~\ref{T:regret_bound_main} and choosing appropriate $\lambda$ gives us the desired result. \fi 
\end{proof}

\begin{theorem}[Stochastic Conditionally Independent Delay]\label{T:regret_stochastic_conditionally_independent}
If the delay sequence is stochastic, conditionally independent and tight in expectation \nc{ and $m = \lceil (T/n)^{2/3} \rceil$ }, then we have
\begin{align*}
\B{E}(\C{R})
= O(k^2n^{4/3}T^{2/3}\log(T))
  + O(k^2n^{5/3}T^{1/3}\B{E}(\tau))
\end{align*}
where $\tau$ is an upper tail bound for $(\B{E}_{\C{T}}(\Delta_{t, S}))_{t \geq 1, S \in \C{S}}$.
\end{theorem}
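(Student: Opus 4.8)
The plan is to mirror the proof of Theorem~\ref{T:regret_stochastic_independent}: by Theorem~\ref{T:regret_bound_main} it suffices to pick a suitable $d>0$ and lower bound $\B{P}(\C{E}'_d)$. Repeating the argument of Theorem~\ref{T:regret_stochastic_independent} almost verbatim (only the delay symbol changes), one checks that $m|\bar{X}_{i,a}-\bar{F}_{i,a}|$ is at most $C_{i,a}+C_{i',a'}$, where now
\[
  C_{i,a} := \sum_{j=1}^{t'_{i,a}} \Delta_{j,S_j}(\{x > t'_{i,a}-j\}),
\]
and $(i',a')$ indexes the block of $m$ identical pulls immediately preceding the block in which $S^{i-1}\cup\{a\}$ is played. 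Hence $\bigcap_{(i,a)\in I}\{C_{i,a}\le d\}\subseteq\C{E}'_d$, and the task reduces to showing that, with high probability, $C_{i,a}\le d$ for every $(i,a)\in I$.

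The obstacle absent in Theorem~\ref{T:regret_stochastic_independent} is that $\Delta_{j,S_j}$ depends on the random action $S_j$, so $C_{i,a}$ is not a sum of independent variables. To circumvent this I would exploit the rigid schedule of Algorithm~\ref{ALG:main}: the exploration part consists of $L:=\sum_{i=1}^{k}(n-i+1)\le kn$ consecutive blocks of $m$ time-steps whose positions are deterministic, and in the $p$-th block (time-steps $B_p$) a single action $\sigma_p$ is played, with $\sigma_p$ measurable with respect to the history strictly before $B_p$. If $q$ is the block in which $S^{i-1}\cup\{a\}$ is played, then grouping the time indices by block gives
\[
  C_{i,a} = \sum_{p=1}^{q} Y_{p,q},
  \qquad
  Y_{p,q} := \sum_{j\in B_p} \Delta_{j,\sigma_p}(\{x>\max B_q-j\}),
\]
so it is enough to guarantee $Y_{p,q}\le d/L$ for all $1\le p\le q\le L$.

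Now fix $p\le q$ and condition on $\sigma_p$. For each fixed action $S$, the family $(\Delta_{j,S})_{j\in B_p}$ is a length-$m$ block of pair-wise independent distributions, independent of the pre-$B_p$ history; hence, conditionally on $\{\sigma_p=S\}$, the quantity $Y_{p,q}$ is a sum of $m$ pair-wise independent $[0,1]$-valued random variables. As $j$ runs over the $m$ consecutive integers of $B_p$, the thresholds $\max B_q-j$ run over $m$ consecutive non-negative integers, so the conditional mean and variance of $Y_{p,q}$ are both at most $\sum_{\ell\ge0}\tau(\{x>\ell\})=\B{E}(\tau)$, where we used that $\tau$ is an upper tail bound for $\{\B{E}_{\C{T}}(\Delta_{t,S})\}_{t\ge1,S\in\C{S}}$. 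Bernstein's inequality then yields, uniformly in $\sigma_p$ and therefore unconditionally,
\[
  \B{P}\big(Y_{p,q} > \B{E}(\tau)+\lambda\big) \le \exp\!\left(-\frac{\lambda^2}{2(\B{E}(\tau)+\lambda/3)}\right).
\]
Choosing $d:=L(\B{E}(\tau)+\lambda)$ and union bounding over the at most $L^2$ pairs $(p,q)$ gives $\B{P}(\C{E}'_d)\ge 1-(kn)^2\exp\!\big(-\lambda^2/(2(\B{E}(\tau)+\lambda/3))\big)$.

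Finally, substitute this, together with $d=O(kn(\B{E}(\tau)+\lambda))$, $m=\lceil(T/n)^{2/3}\rceil$ and $\op{rad}=\sqrt{\log(T)/m}$, into Theorem~\ref{T:regret_bound_main}, and take $\lambda=\Theta\big(\log T+\sqrt{\B{E}(\tau)\log T}\big)$ so that the residual term $T(1-\B{P}(\C{E}'_d))$ is $O(1)$. Then $4kTd/m=O\big(k^2n^{5/3}T^{1/3}(\B{E}(\tau)+\log T)\big)$, which, using $k\le n\le T$ so that $n^{5/3}T^{1/3}\le n^{4/3}T^{2/3}$, collapses to $O(k^2n^{4/3}T^{2/3}\log T)+O(k^2n^{5/3}T^{1/3}\B{E}(\tau))$; the remaining terms of Theorem~\ref{T:regret_bound_main} are of smaller order, which gives the claim. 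I expect the second paragraph to be the main difficulty: justifying the block decomposition of $C_{i,a}$ and, in particular, that the delay distributions attached to a block are genuinely independent of the history up to that block, so that conditioning on $\sigma_p$ is legitimate; once that is granted, the rest is the computation of Theorem~\ref{T:regret_stochastic_independent} with one extra factor of $L\le kn$ absorbed into $d$ and into the union bound.
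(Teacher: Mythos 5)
Your proposal is correct and follows essentially the same route as the paper: the paper's Lemma~\ref{L:bound_P_E_prime_uniformly_bounded_tail_arm_dependent} likewise decomposes the deviation $m|\bar{X}_{i,a}-\bar{F}_{i,a}|$ into per-block contributions, applies Bernstein to each block sum $C'_{i,a}=\sum_{j=t_{i,a}}^{t'_{i,a}}\Delta_j(\{x>t'_{i,a}-j\})$ (a sum of $m$ independent terms once the block's action is fixed), and pays the extra factor $nk$ in $d$, yielding the stated bound. The only cosmetic difference is that the paper uses tail monotonicity to bound every off-diagonal block contribution $Y_{p,q}$ by the diagonal one $Y_{p,p}=C'_{(p)}$ and union-bounds over $L\le nk$ events with threshold $2d/(nk)$, whereas you union-bound over all $L^2$ pairs directly, which only forces a slightly larger constant in $\lambda$ and does not change the asymptotics.
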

\begin{proof}
  The detailed proof is provided in Appendix \ref{apdx_ubed} and is similar to the proof of Theorem~\ref{T:regret_stochastic_independent}.
  The main difference is that here we define $C'_{i, a} = \sum_{j = t_{i, a}}^{t'_{i, a}} \Delta_j(\{x > t'_{i, a} - j\}),$  instead of $C_{i, a}$. 
  Note that the sum here is only over the time-steps where the action $S^{(i-1)}\cup\{a\}$ is taken. 
  Therefore $C'_{i, a}$ is the sum of $m$ independent term.
  On the other hand, when we try to bound $m|\bar{X}_{i, a} - \bar{F}_{i, a}|$, we decompose it into the amount of total reward that ``escapes" from the time interval $[t_{i, a}, t'_{i, a}]$ and the contribution of all the time intervals of the form $[t_{i', a'}, t'_{i', a'}]$ in the past.
  Since the total number of such intervals is bounded by $nk$, here we find the probability that $C'_{i, a} \leq \frac{2d}{nk}$ instead of $C_{i, a} \leq d$ as we did in the proof of Theorem~\ref{T:regret_stochastic_independent}.
  This is the source of the multiplicative factor of $nk$ which appears behind the regret bound of this setting when compared to the stochastic independent delay setting.\end{proof}

\section{Beyond Monotone Submodular Bandits}
We note that  \cite{nie2023framework} provided a generalized framework for combinatorial bandits with full bandit feedback, where under a robustness guarantee, explore-then-commit (ETC) based algorithm have been used to get provable regret guarantees. More precisely, let $\C{A}$ be an algorithm for the combinatorial optimization problem of maximizing a function $f : \C{S} \to \B{R}$ over a finite domain $\C{S} \subseteq 2^\Omega$ with the knowledge that $f$ belongs to a known class of functions $\C{F}$.
for any function $\hat{f} : \C{S} \to \B{R}$, let $\C{S}_{\C{A}, \hat{f}}$ denote the output of $\C{A}$ when it is run with $\hat{f}$ as its value oracle.
The algorithm $\C{A}$ called $(\alpha, \delta)$-robust if for any $\epsilon > 0$ and any function $\hat{f}$ such that $|f(S) - \hat{f}(S)| < \epsilon$ for all $S \in \C{S}$, we have
\[
f(S_{\C{A}, \hat{f}}) \geq \alpha f(S^*) - \delta \epsilon.
\]

It is shown in \cite{nie2023framework} that if $\C{A}$ is  $(\alpha, \delta)$-robust, then the C-ETC algorithm achieves $\alpha$-regret bound of $O(N^{1/3} \delta^{2/3} T^{2/3} (\log(T))^{1/2})$, where $N$ is an upper-bound for the number of times $\C{A}$ queries the value oracle (the detailed result and algorithm is given in Appendix \ref{apdx:general}). 
In this work, we show that the result could be extended directly with delayed composite anonymous bandit feedback. The proof requires small changes, and are detailed in Appendix \ref{apdx:general}. If $\C{A}$ is  $(\alpha, \delta)$-robust, then the results with bandit feedback are as follows.  

\begin{theorem}
  If the delay is uniformly bounded by $d$, then we have
  \[
    \B{E}(\C{R}_\alpha)
    = O(N^{1/3} \delta^{2/3} T^{2/3} (\log(T))^{1/2}) + O(N^{2/3} \delta^{1/3} T^{1/3} d).
  \]    
\end{theorem}

\begin{theorem}
  If the delay sequence is stochastic, then we have
  \[
    \B{E}(\C{R}_\alpha)
    = O(N^{1/3} \delta^{2/3} T^{2/3} \log(T))
      + O(N^{2/3} \delta^{1/3} T^{1/3} \B{E}(\tau)),
  \]
  where $\tau$ is an upper tail bound for $(\B{E}_{\C{T}}(\Delta_t))_{t = 1}^{\infty}$.
\end{theorem}

\begin{theorem}
  If the delay sequence is stochastic and conditionally independent, then we have
  \[
    \B{E}(\C{R}_\alpha)
    = O(N^{4/3} \delta^{2/3} T^{2/3} \log(T))
      + O(N^{5/3} \delta^{1/3} T^{1/3} \B{E}(\tau)),
  \]
  where $\tau$ is an upper tail bound for $(\B{E}_{\C{T}}(\Delta_t))_{t = 1}^{\infty}$.
\end{theorem}

This shows that the proposed approach in this paper that deals with feedback could be applied on wide variety of problems. 
The problems that satisfy the robustness guarantee include submodular bandits with knapsack constraints and submodular bandits with cardinality constraints (considered earlier).

\begin{figure*}[t!]
 \begin{subfigure}[b]{0.30\textwidth}
     \includegraphics[width=\textwidth]{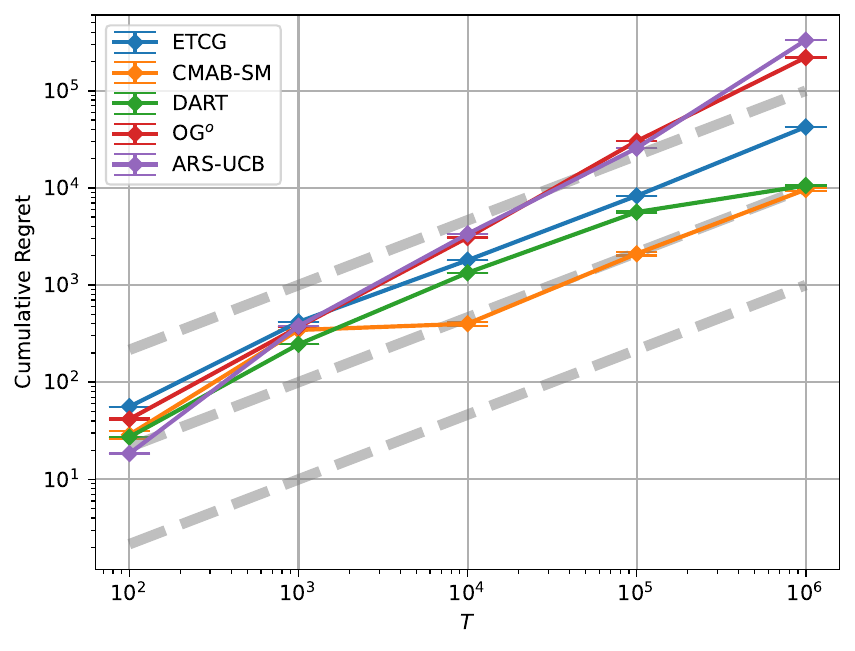}
     \caption{(F1)-(D1)}
\end{subfigure}
 \hfill
 \begin{subfigure}[b]{0.30\textwidth}
     \includegraphics[width=\textwidth]{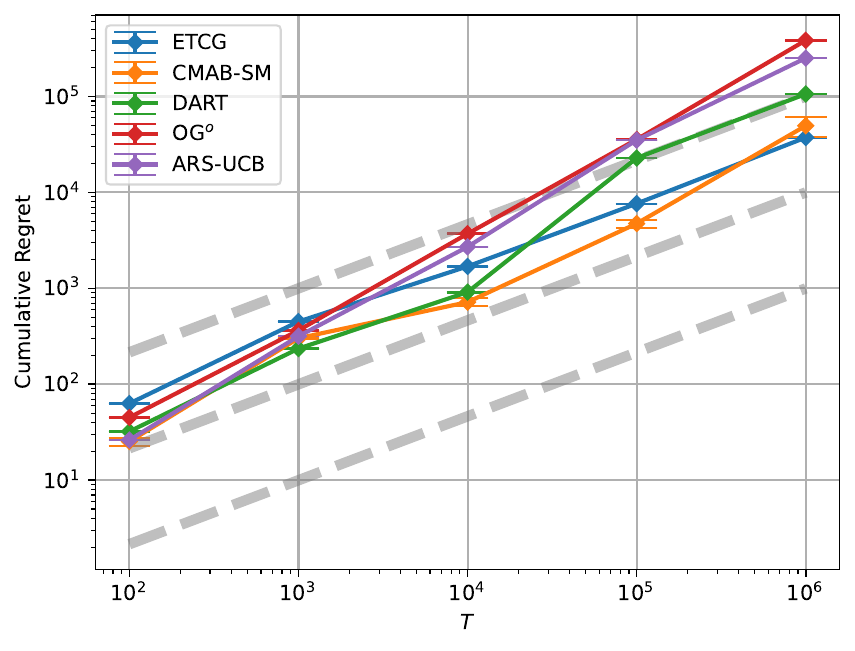}
     \caption{(F1)-(D2)}
\end{subfigure}
 \hfill
 \begin{subfigure}[b]{0.30\textwidth}
     \includegraphics[width=\textwidth]{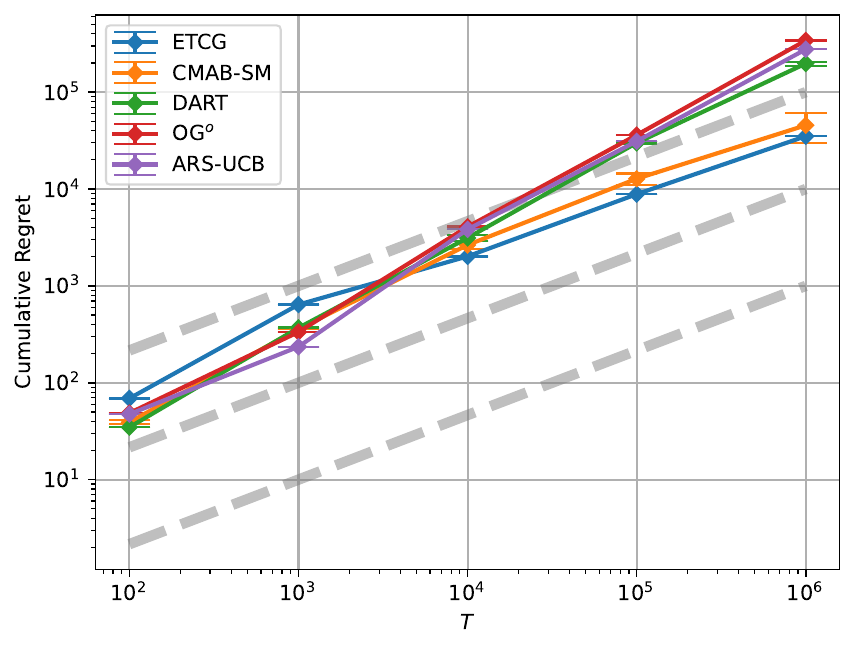}
     \caption{(F1)-(D3)}
\end{subfigure}

 \medskip
 \begin{subfigure}[b]{0.30\textwidth}
     \includegraphics[width=\textwidth]{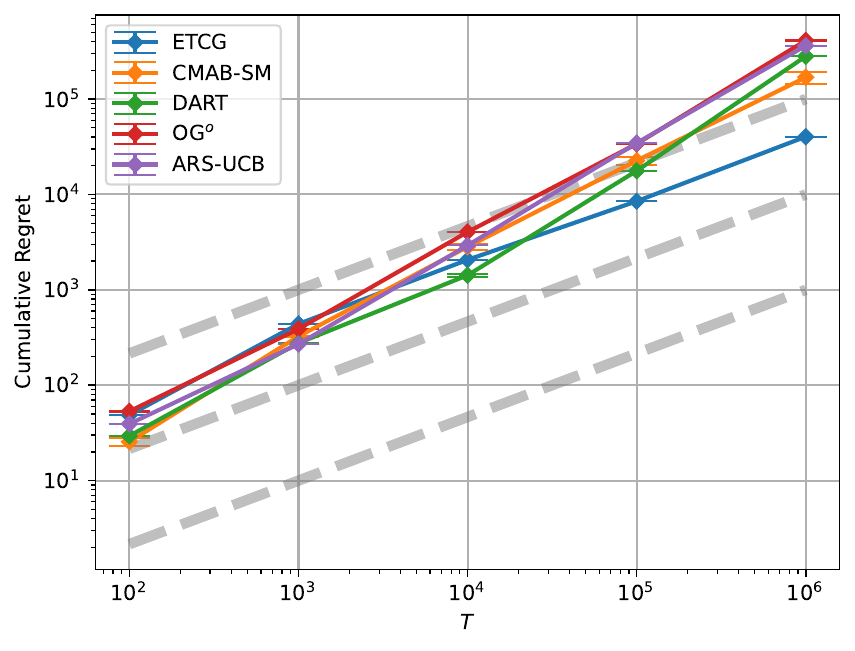}
     \caption{(F1)-(D4)}
\end{subfigure}
 \hfill
 \begin{subfigure}[b]{0.30\textwidth}
     \includegraphics[width=\textwidth]{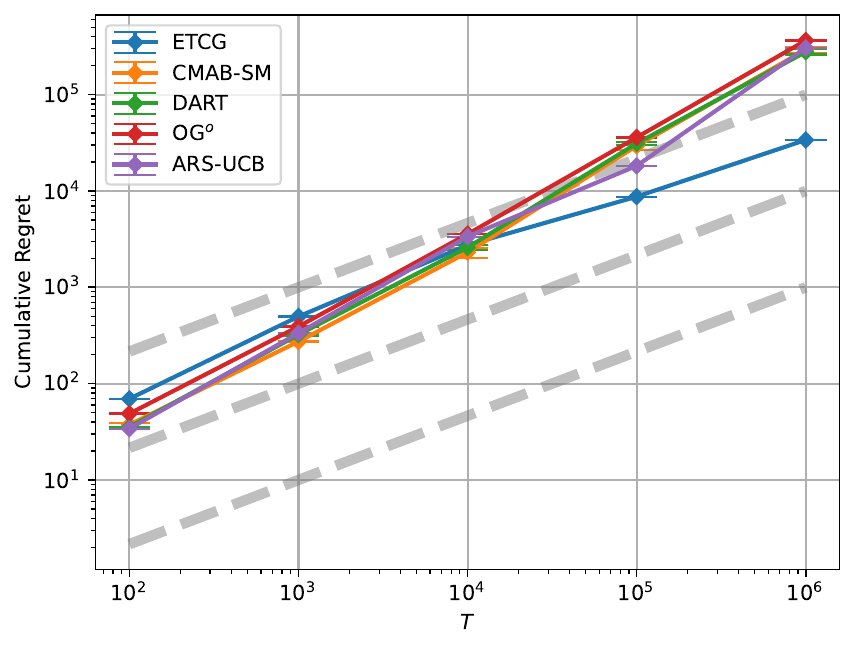}
     \caption{(F1)-(D5)}
\end{subfigure}
 \hfill
 \begin{subfigure}[b]{0.30\textwidth}
     \includegraphics[width=\textwidth]{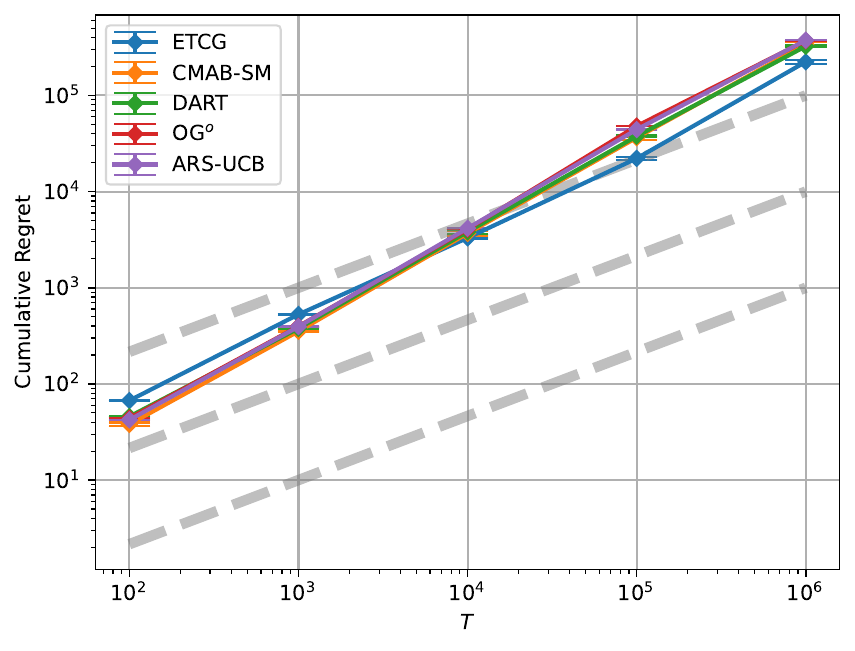}
     \caption{(F1)-(D6)}
\end{subfigure}
 
 \medskip
 \begin{subfigure}[b]{0.30\textwidth}
     \includegraphics[width=\textwidth]{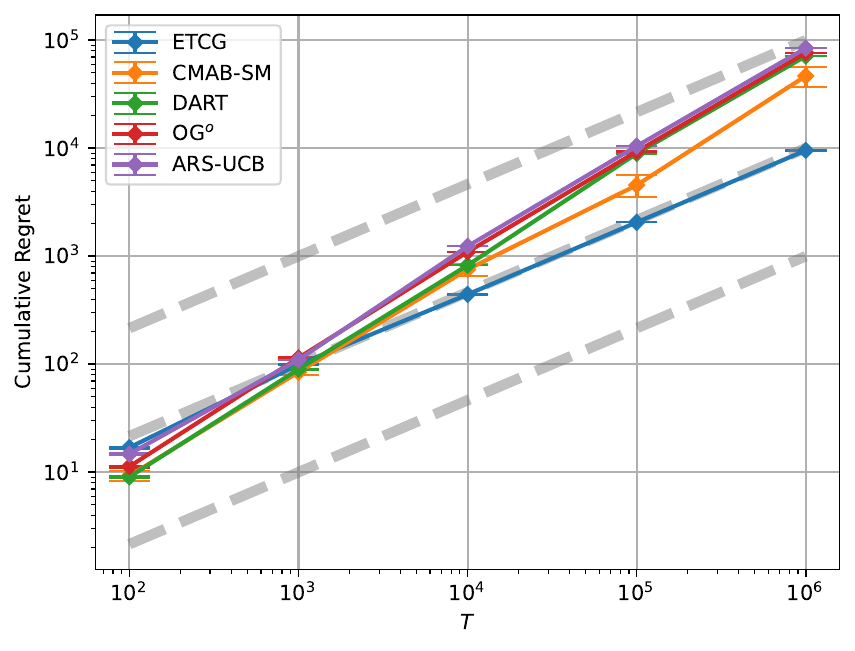}
     \caption{(F2)-(D1)}
\end{subfigure}
 \hfill
 \begin{subfigure}[b]{0.30\textwidth}
     \includegraphics[width=\textwidth]{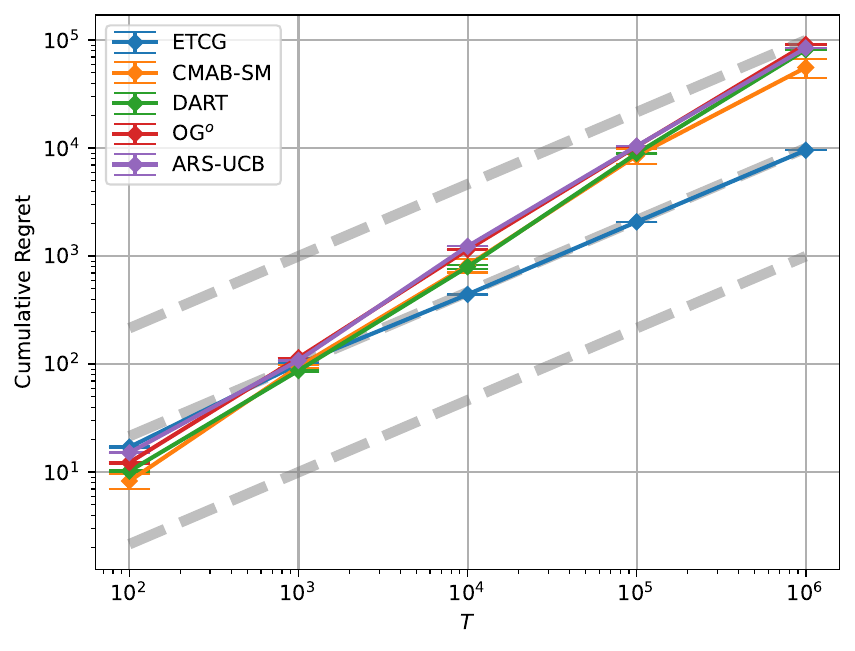}
     \caption{(F2)-(D2)}
\end{subfigure}
 \hfill
 \begin{subfigure}[b]{0.30\textwidth}
     \includegraphics[width=\textwidth]{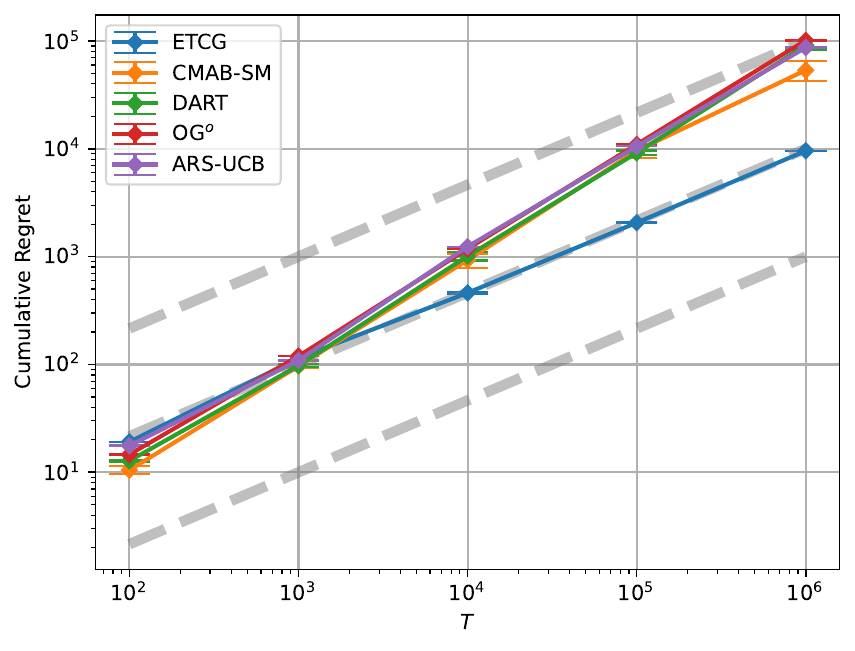}
     \caption{(F2)-(D3)}
\end{subfigure}

 \medskip
 \begin{subfigure}[b]{0.30\textwidth}
     \includegraphics[width=\textwidth]{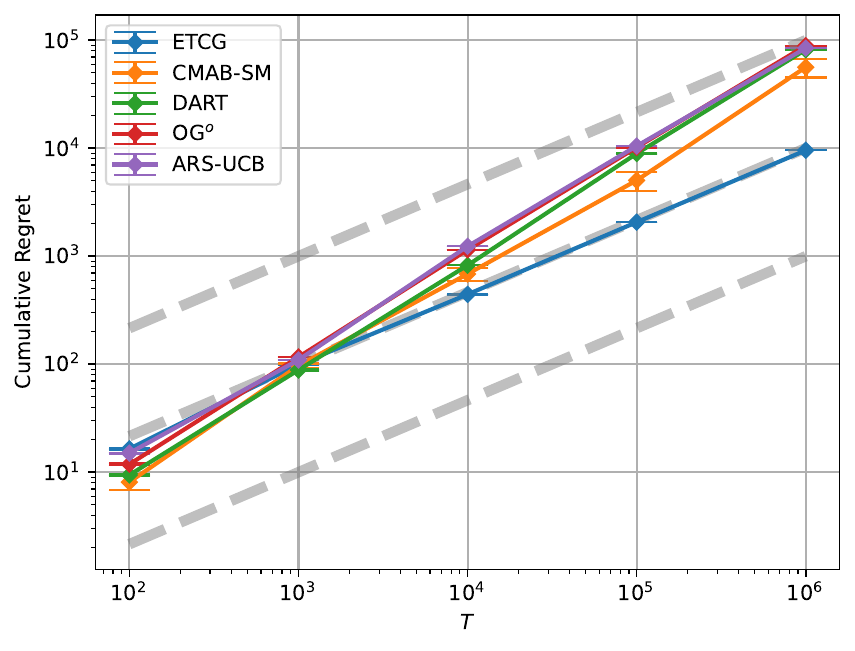}
     \caption{(F2)-(D4)}
\end{subfigure}
 \hfill
 \begin{subfigure}[b]{0.30\textwidth}
     \includegraphics[width=\textwidth]{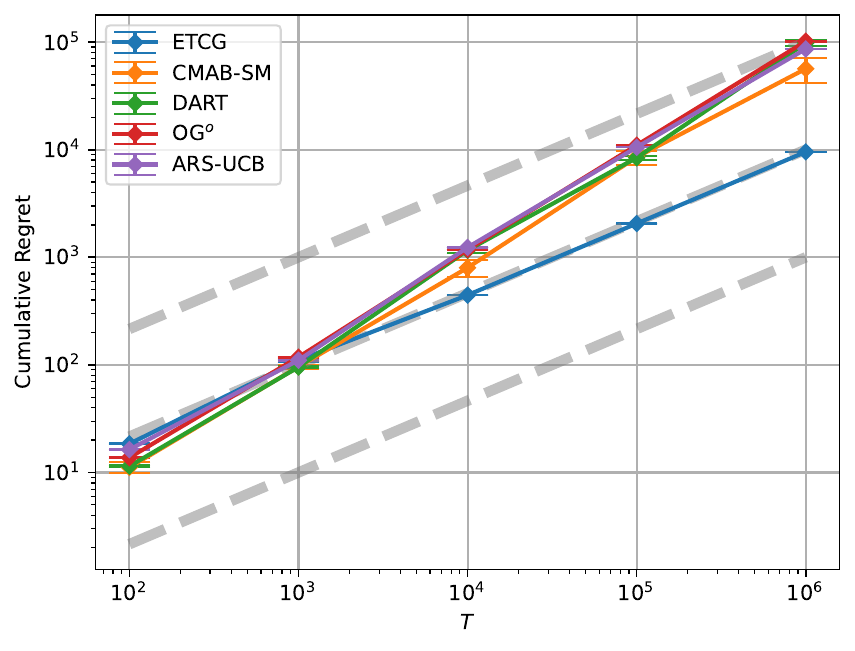}
     \caption{(F2)-(D5)}
\end{subfigure}
 \hfill
 \begin{subfigure}[b]{0.30\textwidth}
     \includegraphics[width=\textwidth]{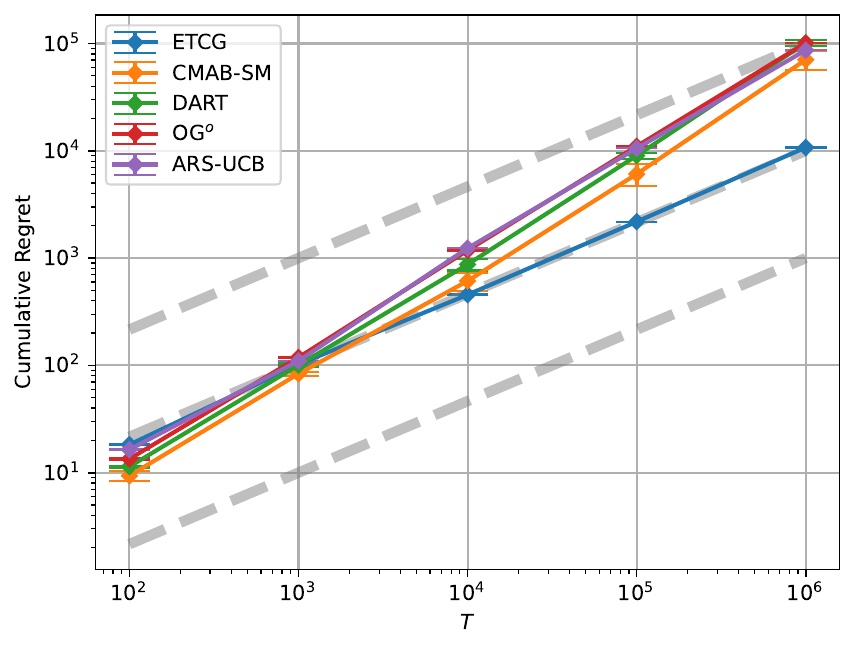}
     \caption{(F2)-(D6)}
\end{subfigure}
 
\caption{
This plot shows the average cumulative 1-regret over horizon for each setting in the log-log scale.
The dashed lines are $y = a T^{2/3}$ for $a \in \{ 0.1, 1, 10 \}$.
Note that (F1) is a linear function and (D1) is the setting with no delay.
Moreover, (D2) corresponds to a delay setting where delay distributions are concentrated near zero and decay exponentially. 
}
\label{fig:regret}
\vspace{-.2in}
\end{figure*}

\section{Experiments}

In our experiments, { depicted in Figure~\ref{fig:regret}, }we consider two classes of submodular functions (Linear (F1) and Weight Cover (F2)) and and six types of delay (No Delay (D1), two setups of Stochastic Independent Delay (D2, D3), two setups of Stochastic Conditionally Independent Delay (D4, D5), and Adversarial Delay (D6)). For linear function (F1), we choose $F(S) := \frac{1}{k} \sum_{a \in S} g(a) + N^c(0, 0.1)$ where $N^c(0, 0.1)$ is the truncated normal distribution with mean $0$ and standard deviation $0.1$, truncated to the interval $[-0.1, 1.0]$, $n = 20$ and $k = 4$ and choose $g(a)$ uniformly from $[0.1, 0.9]$, for all $a \in \Omega$. For weight cover function (F2), we choose $f_t(S) := \frac{1}{k} \sum_{j \in J} w_t(j) \op{1}_{S \cap C_j \neq \emptyset}$, where $w_t(j) = U([0, j/5])$ be samples uniformly from $[0, j/5]$ for $j \in {1, 2, 3, 4}$, $n = 20$ and $k = 4$, $(C_j)_{j \in J}$ is a partition of $\Omega$ where  $\Omega$ is divided into 4 categories of sizes $6, 6, 6, 2$. Stochastic set cover may be viewed as a simple model for product recommendation, and more details on the function choices is in Appendix \ref{fun_del_detail}. For the delay types, (D2) assumes $\Delta_t(i) = (1 - X_t) X_t ^ i$, where $(X_t)_{t = 1}^{\infty}$ is an i.i.d sequence of random variables with the uniform distribution $U([0.5, 0.9])$ for all $t \geq 1$ and $i \geq 0$. (D3) assumes $\Delta_t$ is a distribution over $[10, 30]$ is sampled uniformly from the probability simplex using the flat Dirichlet distribution for all $t \geq 0$. (D4) assumes $\Delta_t(i) = (1 - Y_t) Y_t ^ i$, where $Y_t = 0.5 + f_t * 0.4 \in [0.5, 0.9]$ for all $t \geq 1$ and $i \geq 0$. (D5) assumes $\Delta_t$ as deterministic taking value at $l_t = \lfloor 20 f_t \rfloor + 10$ for all $t$. (D6) ssumes $\Delta_t$ as deterministic taking value at $l_t = \lfloor 20 x_{t-1} \rfloor + 10$ with $l_1 = 15$ for all $t > 1$. The setups are detailed in Appendix \ref{fun_del_detail}.

We note that this is the first work on stochastic CMAB with delayed composite anonymous feedback, and thus there are no available baselines for this setting.  We use three algorithms designed for CMAB with full-bandit feedback without delay and and algorithm designed for MAB with composite anonymous feedback as the baseline.
\begin{itemize}
	\item \textbf{CMAB-SM}
	\cite{agarwal2022stochastic}
	This algorithm assumes the expected reward functions are Lipschitz continuous of individual base arm rewards.
	CMAB-SM has a theoretical $1$-regret guarantee of $\tilde{O}(T^{2/3})$ with the assumption that if arm $a$ is better than arm $b$, then replacing $b$ by $a$ in any set not including $a$ will give better reward function. 
	
	\item \textbf{DART}
	\cite{agarwal2021dart}
	This algorithm assumes the expected reward functions are Lipschitz continuous of individual base arm rewards and the reward functions have an additional property related to the marginal gains of the base arms. 
	DART has a theoretical $1$-regret guarantee of $\tilde{O}(T^{1/2})$ with the assumption that if arm $a$ is better than arm $b$, then replacing $b$ by $a$ in any set not including $a$ will give better reward function. 
	
	\item \textbf{OG$^o$}
	\cite{Streeter2008AnOA}
	This algorithm is designed for oblivious adversarial setting with submodular rewards.
	Therefore the sequence of monotone and submodular functions is fixed in advance.
	It has an $(1-1/e)$-regret guarantee of $\tilde{O}(T^{2/3})$.
	
	\item \textbf{ARS-UCB}
	\cite{Wang_Wang_Huang_2021}
	This algorithm is designed for MAB with composite anonymous delayed feedback.
	The delay model is a special case of \textit{unbounded stochastic conditionally independent composite anonymous feedback delay} that we described.
	However, they assume that $\Delta_{t, S}$ does not depend on time.
	For our experiments, we use all subsets of $\Omega$ of size $k$ as the set of arms.
	ARS-UCB has a 1-regret guarantee of $\tilde{O}( \binom{n}{k}^{1/2}T^{1/2})$ plus a constant term that depends on delay and the number of its arms. Thus, this regret has an exponential dependence in $k$ which is what we avoid. 
\end{itemize}

 We use $n = 20$ base arms and cardinality constraint $k = 4$.
We run each experiment for different time horizons $T = \{10^2, 10^3, 10^4, 10^5, 10^6\}$.
For each horizon, we run the experiment 10 times.
In these experiments, ETCG outperforms all other baselines for the weighted cover function by almost an order of magnitude.
The linear submodular function satisfies the conditions under which DART and CMAB-SM were designed.
However, the weighed cover function does not satisfy such conditions and therefore more difficult for those algorithms to run.
In both cases, we see that any kind of delay worsens the performance of DART and CMAB-SM compared to ETCG.
OG$^o$ explores actions (including those with cardinality smaller then $k$) with a constant probability, which could account for its lower performance compared to ETCG, DART, and CMAB-SM.

While ARS-UCB does not perform well in these experiments, it should be noted that, given enough time, it should outperform ETCG.
Also note that ARS-UCB has a linear storage complexity with respect to its number of arms.
This translates to an $O(\binom{n}{k})$ storage complexity in the combinatorial setting.
Therefore, even for $n = 50$ and $k = 25$, it would require hundreds of terabytes of storage to run.
In these experiments, we have $n=20$ and $k=4$, so it has only $\binom{20}{4} = 4845$ arms.

While ARS-UCB is considered as baseline for evaluation and is closest setting in state of the art, it does not consider combinatorial bandits. Hence, for any $T<{n\choose k}$, they provide no guarantees with delayed feedback since they simply can not sample each arm at least once. Moreover, even for $n=50$ with a cardinality constraint of $k=25$, they would require hundreds of terabytes of storage to run.
This makes it practically impossible to run ARS-UCB, or any algorithm not designed for the combinatorial setting, on any real world example of any meaningful size.

The next point worth considering is that many existing algorithms, such as DART and CMAB-SM require extra assumptions beyond submodularity and in their setting provide better regret bounds.
These extra conditions are satisfied in the experiment settings we considered which is why they have a better performance than ETCG in the absence of delay.
In other words, our experiments favor existing algorithms that are designed for more specific tasks.
However, we  still see their performance drop below that of ETCG when delay is introduced.
In summary, DART and CMAB-SM are not resilient to delay, even is specific scenarios where they would outperform when no delay is present.

The only remaining algorithm is $OG^o$ which is designed for adversarial feedback setting with no delay which is more general than stochastic feedback with no delay.
It should also be noted $OG^o$ does not repeat actions.
As discussed in Appendix \ref{apx:related:delay}, every algorithm designed for delayed composite anonymous feedback so far uses repeated actions to reduce the noise introduced by the delay.
In fact, it is not easy to imagine how one might design an algorithm for delayed composite anonymous feedback that does not use this idea in some way.
Hence there really is no expectation of good performance for $OG^o$ when delay is introduced. Especially since, when limited to stochastic setting, even in the absence of delay, its performance lags behind ETCG as demonstrated both in our experiments and the experiments of \cite{nie22_explor_then_commit_algor_for}.

\section{Conclusion}
This paper considered the problem of combinatorial multi-armed bandits with stochastic submodular (in
expectation) rewards and delayed composite anonymous bandit feedback and provides first regret bound results for this setup. Three models of delayed feedback: bounded adversarial, stochastic independent, and stochastic conditionally independent are studied, and regret bounds are derived for each of the delay models. The regret bounds demonstrate an additive impact of delay in the regret term. Coming up with a matching lower bound for algorithms that do not have dependence that is exponential in $k$ is open.

\bibliography{references}

\clearpage
\newpage
\onecolumn
\appendices

\section{Other Related Works}\label{rel_work}
We note that  this is the first work to derive regret bounds for CMAB with submodular and monotone rewards and delayed feedback. Thus, the most related work can be divided into the results for CMAB with submodular and monotone rewards, and that for MAB with delayed feedback, as will be described next. 

\subsection{Combinatorial Submodular Bandits}

CMABs have been widely studied due to multiple applications. While the problem of CMAB is general and there are multiple studies that do not use submodular rewards \cite{agarwal2021dart,agarwal2022stochastic,dani2008stochastic,rejwan2020top}, we consider CMAB with monotone and submodular rewards. The assumption of monotonicity and submodularity in reward functions is common in the literature \cite{streeter2010online,niazadeh2021online,nie22_explor_then_commit_algor_for}. For CMAB with monotone and submodular rewards, without any further constraints, the optimal selection will be the entire set. Thus, additional assumptions are introduced in the model, including cardinality constraint \cite{nemhauser1978analysis} and knapsack constraints \cite{sviridenko2004note}. This paper considers CMAB with submodular and monotone rewards and cardinality constraint. 

Further, we note that feedback pays an important role in CMAB decision making.  CMAB with submodular and monotone rewards and cardinality constraint has been studied with semi-bandit feedback \cite{lin2015stochastic,niazadeh2021online,zhang2019online,zhu2021projection,chen2018projection,takemori2020submodular}. 
The semi-bandit feedback setting provides more information as compared to the full-bandit setting.
The same is true for contextual bandit feedback~\cite{yue2011linear,chen2018contextual} as well.
Here we consider the full-bandit (or bandit) feedback without any additional feedback.
CMAB with submodular and monotone rewards,  cardinality constraint, and full-bandit feedback has been studied in both adversarial setting \cite{niazadeh2021online} and in stochastic setting \cite{nie22_explor_then_commit_algor_for}.
This paper studies the stochastic setting. 

It is worth noting that for submodular bandits, the stochastic reward case is not a special case of the adversarial reward case and the guarantees for the stochastic reward case are not necessarily better than the adversarial reward case.
In the adversarial setting, the environment chooses a sequence of monotone and submodular functions $\{f_1 , \cdots , f_T \}$.
This is incompatible with the stochastic reward setting since we only require the set function $f_t$ to be monotone and submodular in expectation.
Thus, the results on adversarial submodular bandits will not lead to results for the stochastic submodular setting.

These works for CMAB do not study regret bound with delayed feedback, which is the focus of this paper.

\subsection{Bandits With Delayed Rewards}\label{apx:related:delay}

The bandit problem with (non-anonymous) delayed feedback has been studied extensively~\cite{mesterharm2005line,agarwal2011distributed,desautels2014parallelizing,dudik2011efficient,pmlr-v28-joulani13}.
In the non-anonymous setting, the reward will be delayed and at each time-step, the agent observers a set of the form $\{(t, r_t) \mid t \in I_t\}$ where $I_t$ is a set of time-steps in the past.
In the aggregated anonymous setting, first studied by \cite{pike-burke18_bandit_delay_aggreg_anony_feedb}, the reward for each arm is obtained at some point in the future, so that the agent will receive the aggregated reward for some of the past actions at each time-step.
\cite{pmlr-v75-cesa-bianchi18a} extended the reward model so that the reward of an action is not immediately observed by the agent, but rather spread over at most $d$ consecutive steps in an adversarial way.
However, they also assumed that the bandit is adversarial.
\cite{garg19_stoch_bandit_delay_compos_anony_feedb} considered the stochastic case and provided an algorithm with a sub-linear regret bound of $\tilde{O}(n^{1/2}T^{1/2}) + O(n\log(T)d)$.
In this setting, for each arm $a$, the there is a random distribution $\Delta_a$ over the set $\{0, 1, \cdots, d\}$ and at each time-step, when the agent plays $a$, the delay is sampled from $\Delta_a$.
\cite{Wang_Wang_Huang_2021} also considers unbounded delay and proves the regret bound of $\tilde{O}(n^{1/2}T^{1/2}) + O(\nu)$, where $\nu$ depends on the delay distribution and $n$ but not on $T$.
They also considered the case with adversarial but bounded delay and proved a regret bound of $\tilde{O}(n^{1/2}T^{2/3}) + O(T^{2/3} d)$.
We note in all of the works addressing composite anonymous feedback, including ours, a key idea is to repeat actions enough times so that we can extract meaningful information.
This is not always necessary in other types of delay.
In particular, if delay is not anonymous, there is no need to repeat actions since we will eventually know the reward for each action. 
Except for ETCG of~\cite{nie22_explor_then_commit_algor_for} and more generally, instances of the meta-algorithm C-ETC algorithm of~\cite{nie2023framework}, other algorithms discussed here for combinatoral bandits do not repeat actions and therefore there is little hope of them achieving desirable results in the presence of composite anonymous delay.

Note that, in the submodular setting, any algorithm that does not exploit the combinatorial structure of the arms must take at least every action once which can be suboptimal since the number of arms is at least $\binom{n}{k}$ which grows exponentially.

In this paper, we extend the delay model further by letting the random delay distribution also depend on time (See Example~\ref{example} and Remark~\ref{R:new_general_delay} for more details).

\section{Proof of Lemma~\ref{L:tightness_equiv_upper_bound}}\label{proof_lem_tightness_equiv_upper_bound}

\begin{proof}
If an upper tail bound $\delta$ exists, then we may simply define 
\[
j_{\epsilon} := \min\{j \mid \delta( \{ x \geq j \} ) \leq \epsilon \},
\]
to see that the family is tight.
Next we assume that the family is tight and prove the existence of an upper tail bound.

Let $\delta$ be the measure defined by
\[
\forall j \geq 0
,\quad
\delta(j) := \sup_{i \in I} \delta_i(\{x \geq j\}) - \sup_{i \in I} \delta_i(\{x \geq j+1\}).
\]
Clearly we have $\delta(j) \geq 0$, for all $j \geq 0$.
To show that $\delta$ is a probability distribution, we sum the terms and see that
\begin{align*}
\delta(\{t \mid a \leq t \leq b\}) 
= \sum_{t=a}^{b} \delta(t) 
&= \sum_{t=a}^{b} \left(
\sup_{i \in I} \delta_i(\{x \geq t\}) - \sup_{i \in I} \delta_i(\{x \geq t+1\})
\right) \\
&= \sup_{i \in I} \delta_i(\{x \geq a\}) - \sup_{i \in I} \delta_i(\{x \geq b+1\})
\end{align*}
According to the definition of tightness, for all $\epsilon > 0$ and $b \geq j_{\epsilon}$, we have
\[
\delta(\{t \mid a \leq t \leq b\}) 
= \sup_{i \in I} \delta_i(\{x \geq a\}) - \sup_{i \in I} \delta_i(\{x \geq b+1\})
\geq \sup_{i \in I} \delta_i(\{x \geq a\}) - \epsilon.
\]
Hence we have
\[
\delta(\{t \geq 0\}) 
= \lim_{j \to \infty} \delta(\{t \mid 0 \leq t \leq j\}) 
= 1 - \lim_{j \to \infty} \sup_{i \in I} \delta_i(\{x \geq j+1\})
= 1.
\]
Finally, to see that $\delta$ is indeed an upper tail bound, we note that
\[
\delta(\{t \mid a \leq t \leq b\}) 
= \sup_{i \in I} \delta_i(\{x \geq a\}) - \sup_{i \in I} \delta_i(\{x \geq b+1\})
\leq \sup_{i \in I} \delta_i(\{x \geq a\}).
\]
Therefore
\[
\delta(\{t \geq a\}) 
= \lim_{b \to \infty} \delta(\{t \mid a \leq t \leq b\}) 
\leq \sup_{i \in I} \delta_i(\{x \geq a\}).
\qedhere
\]
\end{proof}

\section{Lemmas used in the proofs}\label{apx:base_lemmas}

In this section, we will provide the Lemmas that will be used in the proof of the main results. Let $t_{i, a}$ denote the first time-step where the action $S^{(i-1)}\cup\{a\}$ is played in the exploration phase and let $t'_{i, a} := t_{i, a} + m - 1$ be the last such time-step.
Therefore, we have
\[
  \bar{F}_{i, a} = \frac{1}{m}\sum_{t = t_{i, a}}^{t'_{i, a}} F_t
  ,\quad
  \bar{X}_{i, a} = \frac{1}{m}\sum_{t = t_{i, a}}^{t'_{i, a}} X_t.
\]
Similarly, the realized value of these random variables are
\[
  \bar{f}_{i, a} = \frac{1}{m}\sum_{t = t_{i, a}}^{t'_{i, a}} f_t
  ,\quad
  \bar{x}_{i, a} = \frac{1}{m}\sum_{t = t_{i, a}}^{t'_{i, a}} x_t.
\]

For any phase $i$ and arm $a \in \Omega \setminus S^{(i-1)}$, define the event
\[ \C{E}_{i, a} := \left\{
    |\bar{F}_{i, a} - f(S^{(i-1)}\cup\{a\})| \leq \op{rad}
  \right\}, \]
where $\op{rad}$ is a non-negative real number to be specified later.
Using these events, we define
\[
  \C{E}_{i} := \bigcap_{a \in \Omega \setminus S^{(i-1)}} \C{E}_{i, a}
  ,\quad
  \C{E} := \bigcap_{i = 1}^k \C{E}_i.
\]

\begin{lemma}\label{L:bound_P_E}
  We have
  \[ \B{P}(\C{E}) \geq 1 - 2nk\exp(-2m\op{rad}^2). \]
\end{lemma}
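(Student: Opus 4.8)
The plan is to prove Lemma~\ref{L:bound_P_E} by a union bound over the (at most $nk$) events $\C{E}_{i,a}$, reducing the problem to bounding $\B{P}(\C{E}_{i,a}^c)$ for a single phase $i$ and arm $a$. The key observation is that $\bar{F}_{i,a} = \frac{1}{m}\sum_{t=t_{i,a}}^{t'_{i,a}} F_t$ is an average of $m$ samples of the stochastic reward function evaluated at the fixed set $S^{(i-1)}\cup\{a\}$. By hypothesis each $F_t(S^{(i-1)}\cup\{a\})$ takes values in $[0,1]$ and has expectation $f(S^{(i-1)}\cup\{a\})$, so $\bar{F}_{i,a}$ is the empirical mean of $m$ bounded, independent (across the time-steps $t_{i,a},\ldots,t'_{i,a}$) random variables with the correct mean.

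First I would carefully justify the independence and mean claims: the samples $F_t$ for $t \in \{t_{i,a},\ldots,t'_{i,a}\}$ are drawn from $F_t(S^{(i-1)}\cup\{a\})$, and since $S^{(i-1)}$ is determined before phase $i$ begins (it depends only on past observations), conditioned on the history up to the start of phase $i$ the set played during this block is a fixed set, so these $m$ draws are i.i.d.\ with mean $f(S^{(i-1)}\cup\{a\})$. Then I would apply Hoeffding's inequality to obtain
\[
  \B{P}\!\left(|\bar{F}_{i,a} - f(S^{(i-1)}\cup\{a\})| > \op{rad}\right)
  \leq 2\exp(-2m\,\op{rad}^2).
\]
Finally I would take a union bound: the index set $I = \{(i,a) \mid 1 \leq i \leq k,\ a \in \Omega\setminus S^{(i-1)}\}$ has $|I| = \sum_{i=1}^k (n-i+1) \leq nk$ elements, so
\[
  \B{P}(\C{E}^c) = \B{P}\!\left(\bigcup_{(i,a)\in I}\C{E}_{i,a}^c\right)
  \leq \sum_{(i,a)\in I}\B{P}(\C{E}_{i,a}^c)
  \leq 2nk\exp(-2m\,\op{rad}^2),
\]
and taking complements gives the claim.

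The main obstacle is a subtle measure-theoretic point rather than a computational one: $S^{(i-1)}$ is itself random (it depends on the realized empirical means from earlier phases, which in the delayed-feedback model depend on the delay realizations), so one must argue that the Hoeffding bound holds conditionally on the history $\C{H}_{i}$ up to the start of phase $i$, and then that the bound, being uniform over all possible realizations of $S^{(i-1)}$, survives taking expectations over $\C{H}_i$. Concretely, for each fixed set $S$ of size $i-1$ and each $a \notin S$, the block of $m$ plays of $S\cup\{a\}$ yields i.i.d.\ draws, so $\B{P}(|\bar{F}_{i,a}-f(S\cup\{a\})|>\op{rad} \mid S^{(i-1)}=S) \leq 2\exp(-2m\op{rad}^2)$; averaging over the conditioning event preserves the inequality. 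Care is also needed that the event $\C{E}_{i,a}$ is only defined on $\{a \in \Omega\setminus S^{(i-1)}\}$, which is why the union is over $I$ and the count $|I|\leq nk$ is used rather than $nk$ exactly; this is harmless since we only need an upper bound on the failure probability.
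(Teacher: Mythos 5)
Your proposal is correct and follows essentially the same route as the paper's proof: Hoeffding's inequality applied to each block of $m$ bounded i.i.d.\ samples of $F_t(S^{(i-1)}\cup\{a\})$, followed by a union bound over the at most $nk$ pairs $(i,a)$. Your additional care in conditioning on the history to handle the randomness of $S^{(i-1)}$ is a point the paper leaves implicit, but it does not change the argument.
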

\begin{proof}
  We have $F_t \in [0, 1]$.
  Therefore, using Hoeffding's inequality, we have
  \begin{align*}
    \B{P}(|\bar{F}_{i, a} - f(S^{(i-1)}\cup\{a\})| > \op{rad}) 
    &= \B{P}\left(
      \left| 
        \sum_{t_{i, a}}^{t'_{i, a}} F_{t} - mf(S^{(i-1)}\cup\{a\}) 
      \right| > m\op{rad} \right) \\
    &\leq 2\exp\left( -\frac{2 (m\op{rad})^2}{m} \right)
    = 2\exp(-2m\op{rad}^2).
  \end{align*}
  Hence
  \begin{align*}
    \B{P}(\C{E}) 
    &= \B{P}\left( \bigcap_{i, a}^k \C{E}_{i, a} \right) \\
    &= 1 - \B{P}\left( \bigcup_{i, a}^k (\C{E}_{i, a})^c \right) \\
    &\geq 1 - \sum_{i, a} \B{P}((\C{E}_{i, a})^c) \\
    &= 1 - \sum_{i, a} \B{P}(|\bar{F}_{i, a} - f(S^{(i-1)}\cup\{a\})| > \op{rad}) \\
    &\geq 1 - \sum_{i, a} 2\exp(-2m\op{rad}^2) \\
    &\geq 1 - 2nk \exp(-2m\op{rad}^2).
    \qedhere
  \end{align*}
\end{proof}

Next we define another set of events where the delay is controlled.
Let
\[ \C{E}'_{d, i, a} := \left\{
    |\bar{X}_{i, a} - \bar{F}_{i, a}| \leq \frac{2d}{m}
  \right\}, \]
for some $d > 0$ which will be specified later.
Similar to above, we use these events to define
\[
  \C{E}'_{d, i} := \bigcap_{a \in \Omega \setminus S^{(i-1)}} \C{E}_{d, i, a}
  ,\quad
  \C{E}'_d := \bigcap_{i = 1}^k \C{E}'_{d, i}.
\]
Note that $\C{E}'_d$ can happen even if the delay is not bounded.
Later in Lemmas~\ref{L:bound_P_E_prime_uniformly_bounded_delay} and~\ref{L:bound_P_E_prime_uniformly_bounded_tail}, we will find lower bounds on the probability of $\C{E}'_d$ in both the adversarial and the stochastic setting.

\begin{lemma}\label{L:f_S_i_minus_f_S_i_minus_one}
  Under the event $\C{E} \cap \C{E}'_d$, for all $1 \leq i \leq k$ and $d > 0$, we have
  \[
    f(S^{(i)}) - f(S^{(i-1)})
    \geq
    \frac{1}{k} \left[
      f(S^*) - f(S^{(i-1)})
    \right]
    - 2 \op{rad} - \frac{4d}{m}.
  \]
\end{lemma}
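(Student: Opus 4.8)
The plan is to mimic the standard analysis of the greedy algorithm for monotone submodular maximization, but carried out at the level of the empirical estimates and then transferred to the true function $f$ using the two good events. First I would fix a phase $i$ and work on the event $\C{E} \cap \C{E}'_d$. Let $a^* $ be any element of $S^* \setminus S^{(i-1)}$ (or note $S^* \subseteq S^{(i-1)}$, in which case $f(S^*) \le f(S^{(i-1)})$ by monotonicity and the claimed bound is trivial since the right-hand side is $\le 0$ after subtracting the slack terms, using $f(S^{(i)}) \ge f(S^{(i-1)})$). The key chain of inequalities is: $f(S^{(i)}) = f(S^{(i-1)} \cup \{a_i\})$, and since $a_i$ maximizes the empirical mean $\bar x_{i,a}$, we have $\bar X_{i,a_i} \ge \bar X_{i,a}$ for every $a \in \Omega \setminus S^{(i-1)}$, in particular for every $a \in S^* \setminus S^{(i-1)}$.

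Next I would convert empirical means to values of $f$ with a two-step triangle inequality. On $\C{E}'_d$, $|\bar X_{i,a} - \bar F_{i,a}| \le 2d/m$, and on $\C{E}$, $|\bar F_{i,a} - f(S^{(i-1)} \cup \{a\})| \le \op{rad}$, so $|\bar X_{i,a} - f(S^{(i-1)} \cup \{a\})| \le \op{rad} + 2d/m$ for all $(i,a) \in I$. Applying this to $a = a_i$ and to each $a \in S^* \setminus S^{(i-1)}$, and using the optimality of $a_i$ for $\bar X$, gives
\[
f(S^{(i)}) \ge \bar X_{i,a_i} - \op{rad} - \frac{2d}{m} \ge \bar X_{i,a} - \op{rad} - \frac{2d}{m} \ge f(S^{(i-1)}\cup\{a\}) - 2\op{rad} - \frac{4d}{m},
\]
for every such $a$. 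Then I would average this inequality over $a \in S^*\setminus S^{(i-1)}$ (there are at most $k$ of them) and invoke submodularity/monotonicity in the usual way: $\sum_{a \in S^*\setminus S^{(i-1)}} \big(f(S^{(i-1)}\cup\{a\}) - f(S^{(i-1)})\big) \ge f(S^* \cup S^{(i-1)}) - f(S^{(i-1)}) \ge f(S^*) - f(S^{(i-1)})$, so that there exists at least one $a$ with marginal gain at least $\frac1k[f(S^*) - f(S^{(i-1)})]$. Combining with the displayed inequality for that particular $a$ yields exactly the claimed bound.

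The main obstacle — and the only genuinely delicate point — is the transfer from $\bar X$ to $f$: one must be careful that the event $\C{E}'_d$ as defined in the excerpt (indexed over all $(i,a) \in I$) indeed controls $|\bar X_{i,a} - \bar F_{i,a}|$ simultaneously for the chosen arm $a_i$ and all candidate arms $a \in S^*\setminus S^{(i-1)}$, which it does since $\C{E}'_d$ is an intersection over all of $I$; similarly $\C{E}$ must cover all these arms, which again holds by definition. The rest is the textbook greedy argument, so I would keep that part terse and cite \cite{nemhauser1978analysis} for the submodularity telescoping step. A minor edge case to state cleanly at the start is $S^* \subseteq S^{(i-1)}$, handled by monotonicity as noted above; in that degenerate case one uses $f(S^{(i)}) \ge f(S^{(i-1)}) \ge f(S^{(i-1)})$ and $f(S^*) - f(S^{(i-1)}) \le 0$, and the inequality holds because its right-hand side is then at most $-2\op{rad} - 4d/m < 0 \le f(S^{(i)}) - f(S^{(i-1)})$.
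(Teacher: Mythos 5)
Your proposal is correct and follows essentially the same route as the paper's proof: the same two-step transfer $f \to \bar F \to \bar X$ via the events $\C{E}$ and $\C{E}'_d$ around the empirical argmax $a_i$, followed by the standard max-$\geq$-mean and submodular telescoping argument over $S^* \setminus S^{(i-1)}$ (the paper merely routes through $a_i^* = \op{argmax}_a f(S^{(i-1)}\cup\{a\})$ before restricting to $S^*\setminus S^{(i-1)}$, which is an immaterial reordering). Your explicit treatment of the degenerate case $S^* \subseteq S^{(i-1)}$ is a small point the paper leaves implicit, but it does not change the argument.
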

\begin{proof}
Recall that $a_i$ is the sole element in $S^{(i)} \setminus S^{(i-1)}$.
That is,
\[
a_i = \op{argmax}_{a \in \Omega \setminus S^{(i-1)}} \bar{x}_{i, a}.
\]
Define
\[
a_i^* := \op{argmax}_{a \in \Omega \setminus S^{(i-1)}} f(S^{(i-1)} \cup \{a\}).
\]
Then we have
\begin{align*}
  f(S^{(i)})
  &= f(S^{(i-1)} \cup \{a_i\}) \\
  &\geq \bar{f}_{i, a_i} - \op{rad} 
    &&\t{(definition of } \C{E} \t{)} \\
  &\geq \bar{x}_{i, a_i} - \frac{2d}{m} - \op{rad}
    &&\t{(definition of } \C{E}' \t{)} \\
  &\geq \bar{x}_{i, a_i^*} - \frac{2d}{m} - \op{rad}
    &&\t{(definition of } a_i^* \t{)} \\
  &\geq \bar{f}_{i, a_i^*} - \frac{4d}{m} - \op{rad}
    &&\t{(definition of } \C{E}' \t{)} \\
  &\geq f(S^{(i-1)} \cup \{a_i^*\}) - \frac{4d}{m} - 2\op{rad}.
    &&\t{(definition of } \C{E} \t{)}
\end{align*}
Hence we have
\[
f(S^{(i)}) - f(S^{(i-1)})
\geq f(S^{(i-1)} \cup \{a_i^*\}) - f(S^{(i-1)}) - \frac{4d}{m} - 2\op{rad}.
\]
Therefore
\begin{align*}
  f(S^{(i)}) - f(S^{(i-1)})
  &\geq f(S^{(i-1)} \cup \{a_i^*\}) - f(S^{(i-1)}) - \frac{4d}{m} - 2\op{rad} \\
  &= \op{max}_{a \in \Omega \setminus S^{(i-1)}} f(S^{(i-1)} \cup \{a\})  - f(S^{(i-1)}) - \frac{4d}{m} - 2\op{rad}
    &&\t{(definition of } a_i^* \t{)} \\
  &\geq \op{max}_{a \in S^* \setminus S^{(i-1)}} f(S^{(i-1)} \cup \{a\})  - f(S^{(i-1)}) - \frac{4d}{m} - 2\op{rad}
    && (S^* \subseteq \Omega) \\
  &\geq \frac{1}{|S^* \setminus S^{(i-1)}|} \sum_{a \in S^* \setminus S^{(i-1)}} f(S^{(i-1)} \cup \{a\})  - f(S^{(i-1)}) - \frac{4d}{m} - 2\op{rad}
    && \t{(maximum} \geq \t{mean)} \\
  &= \frac{1}{|S^* \setminus S^{(i-1)}|} \sum_{a \in S^* \setminus S^{(i-1)}} \left[ 
    f(S^{(i-1)} \cup \{a\})  - f(S^{(i-1)})
    \right]  - \frac{4d}{m} - 2\op{rad}  \\
  &\geq \frac{1}{k} \sum_{a \in S^* \setminus S^{(i-1)}} \left[ 
    f(S^{(i-1)} \cup \{a\})  - f(S^{(i-1)})
    \right]  - \frac{4d}{m} - 2\op{rad}
    && (|S^* \setminus S^{(i-1)}| \leq |S^*| = k) \\
  &\geq \frac{1}{k} \left[ 
    f(S^*)  - f(S^{(i-1)})
    \right]  - \frac{4d}{m} - 2\op{rad},
\end{align*}
where the last line follows from a well-known inequality for submodular functions.
\end{proof}

\begin{corollary}\label{C:explotation_regret_bound_per_timestep}
  Under the event $\C{E} \cap \C{E}'_d$, for all $d > 0$, we have
  \[
    f(S^{(k)}) \geq (1 - \frac{1}{e})f(S^*) - 2k \op{rad} - \frac{4kd}{m}.
  \]
\end{corollary}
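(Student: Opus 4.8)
The plan is to iterate the bound from Lemma~\ref{L:f_S_i_minus_f_S_i_minus_one} across the $k$ greedy phases and track how the multiplicative contraction factor $(1 - 1/k)$ compounds. First I would rewrite the conclusion of Lemma~\ref{L:f_S_i_minus_f_S_i_minus_one} in the equivalent ``deficit'' form: letting $g_i := f(S^*) - f(S^{(i)})$ and $c := 2\op{rad} + \frac{4d}{m}$, the lemma says $g_{i-1} - g_i \geq \frac{1}{k} g_{i-1} - c$, i.e. $g_i \leq (1 - \frac{1}{k}) g_{i-1} + c$, valid on the event $\C{E} \cap \C{E}'_d$ for all $1 \leq i \leq k$.

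Next I would unroll this recursion from $i = k$ down to $i = 0$. Induction (or simply expanding the geometric sum) gives
\[
g_k \leq \Bigl(1 - \frac{1}{k}\Bigr)^{k} g_0 + c \sum_{j=0}^{k-1} \Bigl(1 - \frac{1}{k}\Bigr)^{j}.
\]
For the first term I would use $g_0 = f(S^*) - f(\emptyset) \leq f(S^*)$ together with the standard estimate $(1 - 1/k)^k \leq e^{-1}$, so that $(1-1/k)^k g_0 \leq \frac{1}{e} f(S^*)$. For the second term, the geometric sum is bounded crudely by $\sum_{j=0}^{k-1}(1-1/k)^j \leq \sum_{j=0}^{\infty}(1-1/k)^j = k$, giving $c \cdot k = 2k\op{rad} + \frac{4kd}{m}$. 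Combining, $g_k = f(S^*) - f(S^{(k)}) \leq \frac{1}{e} f(S^*) + 2k\op{rad} + \frac{4kd}{m}$, which rearranges to exactly the claimed inequality $f(S^{(k)}) \geq (1 - \frac{1}{e}) f(S^*) - 2k\op{rad} - \frac{4kd}{m}$.

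There is no real obstacle here; the only points requiring a little care are the two standard inequalities $(1-1/k)^k \leq e^{-1}$ and the geometric-sum bound, and making sure the whole argument is carried out under the conditioning event $\C{E} \cap \C{E}'_d$ so that each application of Lemma~\ref{L:f_S_i_minus_f_S_i_minus_one} is legitimate. I would present the induction explicitly (base case $i=0$ trivial, inductive step using the recursion) rather than appealing to the closed-form geometric sum, since that keeps the write-up self-contained and avoids any off-by-one issues in the exponent. The case $k = 1$ is covered by the same formulas with no special treatment needed.
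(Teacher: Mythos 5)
Your proposal is correct and follows essentially the same route as the paper: both iterate Lemma~\ref{L:f_S_i_minus_f_S_i_minus_one} as a linear recursion with contraction factor $(1-\tfrac{1}{k})$, sum the resulting geometric series (the paper evaluates it exactly as $k(1-(1-\tfrac{1}{k})^k)$ and then relaxes, while you bound it by $k$ directly — an immaterial difference), and finish with $(1-\tfrac{1}{k})^k \leq e^{-1}$ together with $f(\emptyset) \geq 0$. The deficit substitution $g_i = f(S^*) - f(S^{(i)})$ is only a cosmetic repackaging of the paper's argument.
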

\begin{proof}
Using Lemma~\ref{L:f_S_i_minus_f_S_i_minus_one}, we have
\[
  f(S^{(i)}) 
  \geq f(S^{(i-1)}) + \frac{1}{k}(f(S^*) - f(S^{(i-1)})) - 2\op{rad} - \frac{4d}{m}
  = \left[  \frac{1}{k} f(S^*) - 2\op{rad} - \frac{4d}{m} \right] + (1 - \frac{1}{k})f(S^{(i-1)}).
\]
Applying this inequality recursively, we get
\begin{align*}
  f(S^{(k)}) 
  &\geq \left[ \frac{1}{k} f(S^*) - 2\op{rad} - \frac{4d}{m} \right] + (1 - \frac{1}{k})f(S^{(k-1)}) \\
  &\geq \left[ \frac{1}{k} f(S^*) - 2\op{rad} - \frac{4d}{m} \right] + (1 - \frac{1}{k})
    \left(
      \left[ \frac{1}{k} f(S^*) - 2\op{rad} - \frac{4d}{m} \right] + (1 - \frac{1}{k})f(S^{(k-2)})
    \right) \\
  &= \left[ \frac{1}{k} f(S^*) - 2\op{rad} - \frac{4d}{m} \right] \sum_{l=0}^1 (1 - \frac{1}{k})^l + (1 - \frac{1}{k})^2 f(S^{(k-2)}) \\
  &\vdots \\
  &\geq \left[ \frac{1}{k} f(S^*) - 2\op{rad} - \frac{4d}{m} \right] \sum_{l=0}^{k-1} (1 - \frac{1}{k})^l + (1 - \frac{1}{k})^k f(S^{(0)}) \\
  &= \left[ \frac{1}{k} f(S^*) - 2\op{rad} - \frac{4d}{m} \right] \sum_{l=0}^{k-1} (1 - \frac{1}{k})^l
\end{align*}
Note that we have
\[
\sum_{l=0}^{k-1} (1 - \frac{1}{k})^l
= \frac{1 - (1 - \frac{1}{k})^k}{1 - (1 - \frac{1}{k})}
= k \left( 1 - (1 - \frac{1}{k})^k \right).
\]
Hence
\begin{align*}
  f(S^{(k)}) 
  &\geq \left[ \frac{1}{k} f(S^*) - 2\op{rad} - \frac{4d}{m} \right] k \left( 1 - (1 - \frac{1}{k})^k \right) \\
  &= \left( 1 - (1 - \frac{1}{k})^k \right) f(S^*) - 
    \left( 2k\op{rad} - \frac{4kd}{m} \right)\left( 1 - (1 - \frac{1}{k})^k \right) \\
  &\geq \left( 1 - (1 - \frac{1}{k})^k \right) f(S^*) - 2k\op{rad} - \frac{4kd}{m}.
\end{align*}
Using the well known inequality $(1 - \frac{1}{k})^k \leq \frac{1}{e}$, we get
\[
  f(S^{(k)}) 
  \geq \left( 1 - \frac{1}{e} \right) f(S^*) - 2k\op{rad} - \frac{4kd}{m}.
  \qedhere
\]
\end{proof}

\begin{lemma}\label{L:regret_conditioned_on_both}
  For all $d > 0$, we have
  \[
    \B{E}(\C{R} | \C{E} \cap \C{E}'_d) \leq mnk + 2kT\op{rad} + \frac{4kTd}{m}.
  \]
\end{lemma}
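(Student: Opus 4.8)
The plan is to split the horizon into the exploration phase (the nested \texttt{for} loops, lines 3--10 of Algorithm~\ref{ALG:main}) and the exploitation phase (the final \texttt{for} loop), and bound the conditional expected regret contributed by each separately. Since the per–time-step regret is $(1-1/e)f(S^*) - f(S_t)$ and all terms stay bounded under the conditioning event, it will suffice to bound each contribution deterministically on $\C{E} \cap \C{E}'_d$ and then take conditional expectations.

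First I would bound the exploration contribution. In phase $i$ the algorithm iterates over $\Omega \setminus S^{(i-1)}$, which has $n - (i-1) \leq n$ elements, and plays each candidate set $m$ times, so the total number of exploration time-steps is $m\sum_{i=1}^{k}(n-i+1) \leq mnk$. For any such time-step, since $f$ is monotone with values in $[0,1]$ we have $0 \leq f(S_t)$ and $f(S^*) \leq 1$, hence the instantaneous regret $(1-1/e)f(S^*) - f(S_t) \leq 1$. Therefore the exploration phase contributes at most $mnk$ to $\C{R}$, regardless of the event.

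Next I would bound the exploitation contribution. For every time-step $t$ in the exploitation phase the agent plays $S^{(k)}$, and by Corollary~\ref{C:explotation_regret_bound_per_timestep}, on the event $\C{E} \cap \C{E}'_d$ we have $f(S^{(k)}) \geq (1-\tfrac{1}{e})f(S^*) - 2k\op{rad} - \tfrac{4kd}{m}$, so the instantaneous regret at such a step is at most $2k\op{rad} + \tfrac{4kd}{m}$. Bounding the number of exploitation steps crudely by $T$, the exploitation phase contributes at most $2kT\op{rad} + \tfrac{4kTd}{m}$ to $\C{R}$ on $\C{E} \cap \C{E}'_d$. Adding the two bounds gives $\C{R} \leq mnk + 2kT\op{rad} + \tfrac{4kTd}{m}$ pointwise on $\C{E} \cap \C{E}'_d$, and taking the conditional expectation given $\C{E} \cap \C{E}'_d$ preserves this inequality, which is the claim.

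There is essentially no substantive obstacle here: the only point requiring care is that the per-step exploitation bound comes from Corollary~\ref{C:explotation_regret_bound_per_timestep}, which is valid precisely on $\C{E} \cap \C{E}'_d$, so the decomposition must be carried out as a pointwise bound on that event before conditioning rather than in expectation directly; and one should make sure the counting of exploration steps ($\leq mnk$) and exploitation steps ($\leq T$) is done with the correct (loose) inequalities so that the stated constants come out exactly.
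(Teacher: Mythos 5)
Your proposal is correct and follows essentially the same route as the paper's own proof: decompose the regret into exploration and exploitation parts, bound the exploration contribution by the step count $mnk$ (using that per-step regret is at most $1$), and bound each exploitation step by $2k\op{rad} + \tfrac{4kd}{m}$ via Corollary~\ref{C:explotation_regret_bound_per_timestep} on the event $\C{E}\cap\C{E}'_d$. The only cosmetic difference is that you make the pointwise-on-the-event reasoning explicit before conditioning, which the paper leaves implicit.
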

\begin{proof}
Let $\C{R} = \C{R}_{\t{exploration}} + \C{R}_{\t{exploitation}}$.
The exploration phase is at most $mnk$ steps, therefore we always have 
\[ \C{R}_{\t{exploration}} \leq mnk. \]
At each time-step in the exploitation phase, $(1 - \frac{1}{e})f(S^*) - f(S^{(k)})$ to is added to the expected regret.
Hence we have
\begin{align*}
  \B{E}(\C{R} | \C{E} \cap \C{E}_d') 
  &= \B{E}(\C{R}_{\t{exploration}} | \C{E} \cap \C{E}'_d) + \B{E}(\C{R}_{\t{exploitation}} | \C{E} \cap \C{E}'_d) \\
  &\leq mnk + T \left[ (1 - \frac{1}{e})f(S^*) - f(S^{(k)}) \right] \\
  &\leq mnk + 2kT\op{rad} + \frac{4kTd}{m},
\end{align*}
where we used Corollary~\ref{C:explotation_regret_bound_per_timestep} in the last inequality.
\end{proof}

\begin{theorem}[Theorem~\ref{T:regret_bound_main} in the main text]
  For all $d > 0$, we have
  \[
    \B{E}(\C{R}) 
    \leq 
    mnk + 2kT\op{rad} + \frac{4kTd}{m} + 2nkT\exp(-2m\op{rad}^2) + T(1 - \B{P}(\C{E}'_d)).
  \]
\end{theorem}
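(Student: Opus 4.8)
The plan is to decompose the expected regret by conditioning on the two control events $\C{E}$ and $\C{E}'_d$, then bound the contribution of the ``bad'' complement events crudely and the contribution of the ``good'' event via the lemmas already established. Specifically, I would write
\[
  \B{E}(\C{R}) = \B{E}(\C{R} \mid \C{E} \cap \C{E}'_d)\B{P}(\C{E} \cap \C{E}'_d) + \B{E}(\C{R} \mid (\C{E} \cap \C{E}'_d)^c)\B{P}((\C{E} \cap \C{E}'_d)^c).
\]
For the first term, I use Lemma~\ref{L:regret_conditioned_on_both}, which gives $\B{E}(\C{R} \mid \C{E} \cap \C{E}'_d) \leq mnk + 2kT\op{rad} + \frac{4kTd}{m}$, and bound $\B{P}(\C{E} \cap \C{E}'_d) \leq 1$. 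For the second term, I observe that the per-step regret $(1-1/e)f(S^*) - f(S_t)$ is always at most $1$ (since $f$ takes values in $[0,1]$ and $1-1/e < 1$), so $\C{R} \leq T$ deterministically; hence the second term is at most $T\cdot\B{P}((\C{E}\cap\C{E}'_d)^c)$.

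The remaining step is to bound $\B{P}((\C{E}\cap\C{E}'_d)^c)$ by a union bound: $\B{P}((\C{E}\cap\C{E}'_d)^c) \leq \B{P}(\C{E}^c) + \B{P}((\C{E}'_d)^c) = (1 - \B{P}(\C{E})) + (1 - \B{P}(\C{E}'_d))$. Then I invoke Lemma~\ref{L:bound_P_E}, which says $\B{P}(\C{E}) \geq 1 - 2nk\exp(-2m\op{rad}^2)$, so $1 - \B{P}(\C{E}) \leq 2nk\exp(-2m\op{rad}^2)$. Multiplying by the factor $T$ from the crude bound yields the term $2nkT\exp(-2m\op{rad}^2)$, and the $1 - \B{P}(\C{E}'_d)$ term is left as is (since it is specialized later in each delay setting). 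Collecting everything gives exactly the claimed inequality.

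I don't anticipate a serious obstacle here — the argument is a routine conditioning-and-union-bound computation, with all the substantive work already done in Lemmas~\ref{L:bound_P_E}, \ref{L:f_S_i_minus_f_S_i_minus_one}, Corollary~\ref{C:explotation_regret_bound_per_timestep}, and Lemma~\ref{L:regret_conditioned_on_both}. The only mild subtlety is making sure the crude deterministic bound $\C{R} \leq T$ is justified (using $f \in [0,1]$ and that each summand of $\C{R}_\alpha$ is at most $\alpha f(S^*) \leq 1$), and being careful that the two conditional-expectation terms are combined with the correct probability weights so that no term is double-counted. The telescoping/recursive part that would normally be the heart of a greedy submodular analysis is entirely encapsulated in Corollary~\ref{C:explotation_regret_bound_per_timestep}, so this final theorem is essentially bookkeeping.
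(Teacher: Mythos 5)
Your proposal is correct and follows essentially the same route as the paper: condition on $\C{E} \cap \C{E}'_d$, apply Lemma~\ref{L:regret_conditioned_on_both} on the good event, bound $\C{R} \leq T$ on the complement, and split the complement's probability by a union bound using Lemma~\ref{L:bound_P_E}. No gaps; this matches the paper's proof step for step.
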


\begin{proof}
  Using Lemmas~\ref{L:regret_conditioned_on_both} and~\ref{L:bound_P_E}, we have
  \begin{align*}
    \B{E}(\C{R})
    &= \B{E}(\C{R} | \C{E} \cap \C{E}'_d) \B{P}(\C{E} \cap \C{E}'_d)
    + \B{E}(\C{R} | (\C{E} \cap \C{E}'_d)^c ) \B{P}((\C{E} \cap \C{E}'_d)^c) \\
    &\leq \B{E}(\C{R} | \C{E} \cap \C{E}'_d) + T \B{P}((\C{E} \cap \C{E}'_d)^c) \\
    &= \B{E}(\C{R} | \C{E} \cap \C{E}'_d) + T \B{P}(\C{E}^c \cup (\C{E}'_d)^c) \\
    &\leq \B{E}(\C{R} | \C{E} \cap \C{E}'_d) + T \B{P}(\C{E}^c) + T \B{P}((\C{E}'_d)^c) \\
    &\leq (mnk + 2kT\op{rad} + \frac{4kTd}{m}) + 2nkT\exp(-2m\op{rad}^2) + T(1 - \B{P}(\C{E}'_d)).  \qedhere
  \end{align*}
\end{proof}

\section{Uniformly Bounded Delay}\label{apdx_ubd}

\begin{lemma}\label{L:bound_P_E_prime_uniformly_bounded_delay}
  If delay is uniformly bounded by $d$, then $\B{P}(\C{E}'_d) = 1$.
\end{lemma}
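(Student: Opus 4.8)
The plan is to prove a \emph{pathwise} (deterministic) statement: for every realization of the rewards and every, possibly non-oblivious, adversarial choice of the delay distributions $(\delta_t)_{t\ge 1}$, the discrepancy $|\bar x_{i,a}-\bar f_{i,a}|$ is at most $2d/m$ for each $(i,a)\in I$. Since $\C{E}'_d=\bigcap_{(i,a)\in I}\C{E}'_{d,i,a}$, this gives $\B{P}(\C{E}'_d)=1$ with no use of any probabilistic structure of the $\delta_t$.

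First I would fix $(i,a)\in I$ and abbreviate $[a_0,b_0]:=[t_{i,a},t'_{i,a}]$, so that $b_0-a_0+1=m$ and $S_t=S^{(i-1)}\cup\{a\}$ for all $a_0\le t\le b_0$. Recalling that in the bounded adversarial model $X_t=\sum_{s=\max\{1,t-d\}}^{t}F_s\,\delta_s(t-s)$ with each $\delta_s$ supported on $\{0,\dots,d\}$, I would sum over the window and exchange the order of summation to write $\sum_{t=a_0}^{b_0}X_t=\sum_{s}F_s\,\gamma_s$, where $\gamma_s:=\sum_{t=\max\{a_0,s\}}^{\min\{b_0,s+d\}}\delta_s(t-s)$ and the outer sum runs over $\max\{1,a_0-d\}\le s\le b_0$.

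The key observation is a case split on $s$. If $a_0\le s\le b_0-d$, then $t$ ranges over the full block $[s,s+d]$, so $\gamma_s=\sum_{j=0}^{d}\delta_s(j)=1$; such an $s$ thus contributes its full reward $F_s$ to $\sum_{t=a_0}^{b_0}X_t$, exactly matching its contribution to $\sum_{t=a_0}^{b_0}F_t$. Hence the only discrepancy comes from boundary indices: the at most $d$ ``leak-in'' indices $s\in[\max\{1,a_0-d\},a_0-1]$, which appear in $\sum_{t}X_t$ with coefficient $\gamma_s\in[0,1]$ but not in $\sum_{t}F_t$, and the at most $d$ ``escape'' indices $s\in[\max\{a_0,b_0-d+1\},b_0]$, whose coefficient $\gamma_s$ may be strictly below $1$. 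Since each $F_s\in[0,1]$ and each $\gamma_s\in[0,1]$, I obtain $\bigl|\sum_{t=a_0}^{b_0}X_t-\sum_{t=a_0}^{b_0}F_t\bigr|\le 2d$; dividing by $m$ gives $|\bar X_{i,a}-\bar F_{i,a}|\le 2d/m$, i.e.\ $\C{E}'_{d,i,a}$ holds, and intersecting over all $(i,a)\in I$ yields $\B{P}(\C{E}'_d)=1$.

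The only slightly delicate point — and the main ``obstacle,'' though it is really just careful bookkeeping — is the index manipulation in exchanging the summation order and verifying that every reward generated strictly inside the window (away from its last $d$ steps) is observed in full within the window, so that the net error is confined to the $2d$ boundary steps. I would also remark that the argument degrades gracefully when $m\le d$, since the bound only relies on there being at most $d$ boundary indices on each side, each contributing at most $1$ in absolute value, so $2d/m$ remains a valid bound.
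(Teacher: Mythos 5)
Your proof is correct and follows essentially the same route as the paper's: a pathwise (deterministic) bound showing that only the at most $d$ ``leak-in'' indices before the window and the at most $d$ ``escape'' indices at its end can contribute to $\bigl|\sum_t X_t-\sum_t F_t\bigr|$, each by at most $1$, yielding the $2d$ bound and hence $\B{P}(\C{E}'_d)=1$. The paper performs the identical bookkeeping by splitting the reordered sum into the pre-window terms and the tail terms $\delta_j(\{x>t'_{i,a}-j\})$, which vanish except for the last $\min\{m,d\}$ indices, so there is nothing to add.
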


\begin{proof}
  For all $t \leq 0$, let $F_t = 0$ and let $\delta_t$ be any distribution over non-negative integers.
  We have
  \begin{align*}
    \left|
      \sum_{t=t_{i, a}}^{t'_{i, a}} X_t - \sum_{t=t_{i, a}}^{t'_{i, a}} F_t
      \right|
    &= \left|
      \sum_{j = t_{i, a} - d}^{t'_{i, a}} F_j \delta_j({\{t_{i, a} -j \leq x \leq t'_{i, a}-j\}})
      - \sum_{t=t_{i, a}}^{t'_{i, a}} F_t
      \right| \\
    &= \left|
      \sum_{j = t_{i, a} - d}^{t_{i, a}-1} F_j \delta_j({\{t_{i, a} -j \leq x \leq t'_{i, a}-j\}})
      + \sum_{j = t_{i, a}}^{t'_{i, a}} F_j \delta_j({\{x \leq t'_{i, a}-j\}})
      - \sum_{t=t_{i, a}}^{t'_{i, a}} F_t
      \right| \\
    &= \left|
      \sum_{j = t_{i, a} - d}^{t_{i, a}-1} F_j \delta_j({\{t_{i, a} -j \leq x \leq t'_{i, a}-j\}})
      - \sum_{j = t_{i, a}}^{t'_{i, a}} F_j \delta_j({\{x > t'_{i, a}-j\}})
      \right| \\
    &\leq 
      \sum_{j = t_{i, a} - d}^{t_{i, a}-1} \delta_j({\{t_{i, a} -j \leq x \leq t'_{i, a}-j\}})
      + \sum_{j = t_{i, a}}^{t'_{i, a}} \delta_j({\{x > t'_{i, a}-j\}}) \\
    &\leq
      d + \sum_{j = t_{i, a}}^{t'_{i, a}} \delta_j({\{x > t'_{i, a}-j\}}).
  \end{align*}
  Note that for $j \leq t_{i, a} + m - d - 1$, we have
  \[
    \delta_j({\{x > t'_{i, a}-j\}})
    = \delta_j({\{x > t_{i, a}+m-1-j\}})
    \leq \delta_j({\{x > d\}})
    = 0.
  \]
  Therefore we have
  \begin{align*}
    \sum_{j = t_{i, a}}^{t'_{i, a}} \delta_j({\{x > t'_{i, a}-j\}}) 
    &= \sum_{j = \max\{t_{i, a}, t_{i, a} + m-d\}}^{t_{i, a} + m-1} \delta_j({\{x > t'_{i, a}-j\}}) \\
    &\leq (t_{i, a} + m-1) - \max\{t_{i, a}, t_{i, a} + m-d\} + 1 \\
    &= \min\{m, d\} \leq d.
  \end{align*}
  Hence 
  \[
    |\bar{X}_{i, a} - \bar{F}_{i, a}|
    =\frac{1}{m} \left|
      \sum_{t=t_{i, a}}^{t'_{i, a}} X_t - \sum_{t=t_{i, a}}^{t'_{i, a}} F_t
      \right|
    \leq \frac{2d}{m}.
    \qedhere
  \]
\end{proof}

\begin{theorem}[Theorem~\ref{T:regret_uniformly_bounded_delay} in the main text]
  If the delay is uniformly bounded by $d$, then we have
  \[
    \B{E}(\C{R})
    = O(kn^{1/3} T^{2/3} (\log(T))^{1/2}) + O(kn^{2/3}T^{1/3} d).
  \]
\end{theorem}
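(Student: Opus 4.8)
The plan is to combine Theorem~\ref{T:regret_bound_main} with Lemma~\ref{L:bound_P_E_prime_uniformly_bounded_delay} and then optimize over the free parameter $m$. First I would invoke Lemma~\ref{L:bound_P_E_prime_uniformly_bounded_delay}, which gives $\B{P}(\C{E}'_d) = 1$ in the uniformly bounded delay setting, so the last term $T(1 - \B{P}(\C{E}'_d))$ in Theorem~\ref{T:regret_bound_main} vanishes. This leaves the bound
\[
\B{E}(\C{R}) \leq mnk + 2kT\op{rad} + \frac{4kTd}{m} + 2nkT\exp(-2m\op{rad}^2),
\]
where $\op{rad} = \sqrt{\log(T)/m}$.

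Next I would substitute $\op{rad} = \sqrt{\log(T)/m}$ to rewrite the right-hand side explicitly in $m$: the term $2kT\op{rad}$ becomes $2kT\sqrt{\log(T)/m}$, and the exponential term becomes $2nkT\exp(-2\log(T)) = 2nkT \cdot T^{-2} = 2nk/T$, which is $O(nk)$ and hence negligible compared to $mnk$. So the effective bound is
\[
\B{E}(\C{R}) \leq mnk + 2kT\sqrt{\frac{\log(T)}{m}} + \frac{4kTd}{m} + O(nk).
\]

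Then I would choose $m$ to balance the first two terms, which are the dominant ones (the $d$-term is lower order in $T$ for fixed $d$ but must be tracked). Setting $mnk \asymp kT\sqrt{\log(T)/m}$, i.e. $m^{3/2} \asymp (T/n)\sqrt{\log T}$, suggests $m \asymp (T/n)^{2/3}(\log T)^{1/3}$, which matches (up to the logarithmic factor) the algorithm's choice $m = \lceil (T/n)^{2/3}\rceil$. Using $m = \lceil (T/n)^{2/3}\rceil$, the first term gives $mnk = O(n^{1/3}kT^{2/3})$, the second gives $2kT\sqrt{\log(T)/m} = O(kT \cdot (n/T)^{1/3}\sqrt{\log T}) = O(kn^{1/3}T^{2/3}\sqrt{\log T})$, and the third gives $4kTd/m = O(kTd(n/T)^{2/3}) = O(kn^{2/3}T^{1/3}d)$. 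Summing these yields $\B{E}(\C{R}) = O(kn^{1/3}T^{2/3}(\log T)^{1/2}) + O(kn^{2/3}T^{1/3}d)$, as claimed.

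The main obstacle is not conceptual but bookkeeping: one must carefully verify that $m = \lceil (T/n)^{2/3}\rceil$ (rather than the log-corrected optimum) still produces the stated $(\log T)^{1/2}$ factor and not a worse power of $\log T$, and that the ceiling does not inflate any term — this is fine since $\lceil x \rceil \leq 2x$ for $x \geq 1$, and the assumption $n \leq T$ in Algorithm~\ref{ALG:main} guarantees $(T/n)^{2/3} \geq 1$. I would also double-check that the $d$-dependent term is correctly $O(kn^{2/3}T^{1/3}d)$ and does not interact with the choice of $m$ in a way that changes the leading behavior, which it does not since $d$ is treated as a problem parameter independent of $T$.
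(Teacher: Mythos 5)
Your proposal is correct and follows essentially the same route as the paper: invoke Lemma~\ref{L:bound_P_E_prime_uniformly_bounded_delay} to get $\B{P}(\C{E}'_d)=1$, plug into Theorem~\ref{T:regret_bound_main} with $\op{rad}=\sqrt{\log(T)/m}$ so the exponential term becomes $2nk/T \leq 2k$, and then evaluate each term at the algorithm's fixed choice $m=\lceil (T/n)^{2/3}\rceil$. The only cosmetic difference is that you frame $m$ as a parameter to be optimized before settling on the algorithm's value, whereas the paper simply substitutes it; your term-by-term accounting matches the paper's exactly.
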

\begin{proof}
  Using Theorem~\ref{T:regret_bound_main} and Lemma~\ref{L:bound_P_E_prime_uniformly_bounded_delay}, we see that
  \[
    \B{E}(\C{R})
    \leq
    mnk + 2kT\op{rad} + \frac{4kTd}{m} + 2nkT\exp(-2m\op{rad}^2).
  \]
  Let \(\op{rad} := \sqrt{\frac{\log(T)}{m}}\).
  Then
  \[
    \exp(-2m \op{rad}^2) = T^{-2},
  \]
  and
  \begin{align*}
    \B{E}(\C{R})
    &\leq mnk + 2kT\op{rad} + \frac{4kTd}{m} + 2nkT \exp(-2m \op{rad}^2) \\
    &= mnk + 2kT\sqrt{\frac{\log(T)}{m}} + \frac{4kTd}{m} + 2nk/T \\
    &\leq mnk + 2kT\sqrt{\frac{\log(T)}{m}} + \frac{4kTd}{m} + 2k
  \end{align*}
  where we used \(T \geq n\) in the last inequality.
  Since $m = \lceil (T/n)^{2/3} \rceil$, we have $(T/n)^{2/3} \leq m \leq (T/n)^{2/3} + 1$.
  Therefore
  \begin{align*}
    \B{E}(\C{R})
    &\leq mnk + 2kT\sqrt{\frac{\log(T)}{m}} + \frac{4kTd}{m} + 2k \\
    &\leq ((T/n)^{2/3} + 1)nk + 2kT\sqrt{\log(T)/(T/n)^{2/3}} + \frac{4kTd}{(T/n)^{2/3}} + 2k \\
    &= kn^{1/3} T^{2/3} + nk + 2kn^{1/3} T^{2/3} (\log(T))^{1/2}) + 4 kn^{2/3}T^{1/3} d +2k \\
    &\leq 4 kn^{1/3} T^{2/3} (\log(T))^{1/2}) + 4 kn^{2/3}T^{1/3} d + 2k \\
    &= O(kn^{1/3} T^{2/3} (\log(T))^{1/2}) + O(kn^{2/3}T^{1/3} d).
    \qedhere
  \end{align*}
\end{proof}

\section{Unbounded Stochastic Independent Delay}\label{apdx_ubed}

\begin{lemma}\label{L:bound_P_E_prime_uniformly_bounded_tail}
  If $(\Delta_j)_{j = 1}^{\infty}$ is independent and $(\B{E}_{\C{T}}(\Delta_j))_{j = 1}^{\infty}$ is tight, then we have
  \[
    \B{P}(\C{E}'_d) \geq 1 - nk \exp\left(-\frac{\lambda^2}{2(\B{E}(\tau) + \lambda/3)}\right),
  \]
  where $d > 0$ is a real number, $\tau$ is a tail upper bound for the family $(\B{E}_{\C{T}}(\Delta_j))_{j = 1}^{\infty}$ 
  and $\lambda = \max\{0, d - \B{E}(\tau)\}$.
\end{lemma}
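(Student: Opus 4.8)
The plan is to follow the outline sketched for Theorem~\ref{T:regret_stochastic_independent}: first replace the random event $\C{E}'_d$ by a purely deterministic sufficient condition phrased through ``escaping mass'' variables, and then control those variables via Bernstein's inequality together with the upper tail bound $\tau$, whose existence is guaranteed by Lemma~\ref{L:tightness_equiv_upper_bound} under the tightness hypothesis.

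First I would, for each $(i,a) \in I$, set $C_{i,a} := \sum_{j=1}^{t'_{i,a}} \Delta_j(\{x > t'_{i,a} - j\})$ and prove the deterministic inclusion $\bigcap_{(i,a) \in I} \{ C_{i,a} \leq d \} \subseteq \C{E}'_d$. To do this, expand $X_t = \sum_{j=1}^{t} F_j \Delta_j(t - j)$ inside $m \bar{X}_{i,a} = \sum_{t = t_{i,a}}^{t'_{i,a}} X_t$, exchange the order of summation, and split the index $j$ into the ranges $j < t_{i,a}$ and $t_{i,a} \leq j \leq t'_{i,a}$. The second range contributes $\sum_{j = t_{i,a}}^{t'_{i,a}} F_j\bigl(1 - \Delta_j(\{x > t'_{i,a}-j\})\bigr)$, whose leading part $\sum_{j} F_j$ exactly cancels $m\bar{F}_{i,a}$ and leaves an error bounded, using $F_j \in [0,1]$, by $\sum_{j=t_{i,a}}^{t'_{i,a}}\Delta_j(\{x > t'_{i,a}-j\}) \leq C_{i,a}$. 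The first range is the reward leaking into the window from earlier exploration rounds; since $\Delta_j(\{t_{i,a}-j \leq x \leq t'_{i,a}-j\}) \leq \Delta_j(\{x \geq t_{i,a}-j\}) = \Delta_j(\{x > t'_{i',a'}-j\})$, where $(i',a')$ denotes the round immediately preceding $(i,a)$ so that $t'_{i',a'} = t_{i,a}-1$, this part is at most $C_{i',a'}$. Hence $|m\bar{X}_{i,a} - m\bar{F}_{i,a}| \leq C_{i,a} + C_{i',a'} \leq 2d$ on the event in question, i.e. $|\bar{X}_{i,a} - \bar{F}_{i,a}| \leq 2d/m$; for the first exploration round there is no preceding round and the single term $C_{i,a} \leq d \leq 2d$ suffices.

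Next I would estimate $C_{i,a}$, which is a sum over $j$ of the independent $[0,1]$-valued random variables $\Delta_j(\{x > t'_{i,a}-j\})$. By the upper tail bound property, $\B{E}\bigl(\Delta_j(\{x > t'_{i,a}-j\})\bigr) = \B{E}_{\C{T}}(\Delta_j)(\{x > t'_{i,a}-j\}) \leq \tau(\{x > t'_{i,a}-j\})$, so reindexing by $\ell = t'_{i,a}-j$ gives $\B{E}(C_{i,a}) \leq \sum_{\ell \geq 0} \tau(\{x > \ell\}) = \B{E}(\tau)$; and since each summand lies in $[0,1]$ its variance is at most its mean, so the sum of the summand variances is also at most $\B{E}(\tau)$. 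Bernstein's inequality with variance proxy $\B{E}(\tau)$ and range bound $1$ then yields, for every $t \geq 0$,
\[
  \B{P}\bigl(C_{i,a} > \B{E}(C_{i,a}) + t\bigr) \leq \exp\left(-\frac{t^2}{2(\B{E}(\tau) + t/3)}\right).
\]
When $d \geq \B{E}(\tau)$ I would take $t = d - \B{E}(C_{i,a}) \geq d - \B{E}(\tau) = \lambda \geq 0$ and use that $t \mapsto t^2/\bigl(2(\B{E}(\tau)+t/3)\bigr)$ is nondecreasing on $[0,\infty)$ to conclude $\B{P}(C_{i,a} > d) \leq \exp\bigl(-\lambda^2/(2(\B{E}(\tau)+\lambda/3))\bigr)$; when $d < \B{E}(\tau)$ we have $\lambda = 0$ and the asserted lower bound $1 - nk\exp(0)$ is vacuous. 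Finally, a union bound over $(i,a) \in I$ using $|I| \leq nk$, together with the inclusion from the first step, gives $\B{P}(\C{E}'_d) \geq \B{P}\bigl(\bigcap_{(i,a)} \{C_{i,a} \leq d\}\bigr) \geq 1 - nk\exp\bigl(-\lambda^2/(2(\B{E}(\tau)+\lambda/3))\bigr)$, which is the claim; plugging it into Theorem~\ref{T:regret_bound_main} is then what produces Theorem~\ref{T:regret_stochastic_independent}.

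I expect the deterministic decomposition of $\bar{X}_{i,a} - \bar{F}_{i,a}$ in the first step to be the main obstacle: the bookkeeping must be arranged so that the mass flowing out of the window $[t_{i,a}, t'_{i,a}]$ is charged to $C_{i,a}$, the mass flowing in is charged to $C_{i',a'}$ of the immediately preceding round, nothing is double counted, and the first round is treated separately. Once that inclusion is in place, the expectation and variance estimates, the single application of Bernstein's inequality, and the union bound are all routine.
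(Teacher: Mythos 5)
Your proposal is correct and follows essentially the same route as the paper's proof: the same variables $C_{i,a}$, the same deterministic decomposition of $m|\bar{X}_{i,a}-\bar{F}_{i,a}|$ into the mass escaping the window (charged to $C_{i,a}$) plus the mass leaking in from the immediately preceding round (charged to $C_{i',a'}$ via $t'_{i',a'}=t_{i,a}-1$), the same bound $\B{E}(C_{i,a})\leq\B{E}(\tau)$ by reindexing the tail sums, Bernstein's inequality with the second-moment bound $\B{E}(\Delta_j(\cdot)^2)\leq\B{E}(\Delta_j(\cdot))$, and a union bound over the at most $nk$ pairs. The only cosmetic differences are your explicit appeal to monotonicity of $t\mapsto t^2/(2(\B{E}(\tau)+t/3))$ and the explicit handling of the $\lambda=0$ case, both of which the paper treats implicitly.
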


\begin{proof}
  If $\lambda = 0$, then the statement is trivially true.
  So we will assume that $\lambda > 0$ and $d = \B{E}(\tau) + \lambda$.
  Define 
  \[
    C_{i, a} = \sum_{j = 1}^{t'_{i, a}} \Delta_j(\{x > t'_{i, a} - j\}).
  \]
  Using the fact that $\tau$ is an upper tail bound for $(\B{E}_{\C{T}}(\Delta_j))_{j = 1}^{\infty}$, 
  we can see that
  \begin{align*}
    \B{E}(C_{i, a})
    &= \sum_{j = 1}^{t'_{i, a}} \B{E}(\Delta_j(\{ x > t'_{i, a} - j \}))
    \leq \sum_{j = 1}^{t'_{i, a}} \tau(\{ x > t'_{i, a} - j \}) \\
    &= \sum_{j = 1}^{t'_{i, a}} \tau(\{ x \geq j \})
    \leq \sum_{j = 0}^{\infty} \tau(\{ x \geq j \})
    = \B{E}(\tau).
  \end{align*}
  Using Bernstein's inequality, we have
  \begin{align*}
    \B{P}(C_{i, a} > d) 
    &= \B{P}(C_{i, a} > \B{E}(\tau) + \lambda) \\
    &\leq \B{P}(C_{i, a} > \B{E}(C_{i, a})+ \lambda) \\
    &\leq \exp\left(-\frac{\lambda^2}{2(\sum_{j = 1}^{t'_{i, a}}\B{E}(\Delta_j(\{ x > t'_{i, a}-j \})^2) + \lambda/3)}\right) \\
    &\leq \exp\left(-\frac{\lambda^2}{2(\sum_{j = 1}^{t'_{i, a}}\B{E}(\Delta_j(\{ x > t'_{i, a} - j \})) + \lambda/3)}\right) \\
    &= \exp\left(-\frac{\lambda^2}{2(\B{E}(C_{i, a}) + \lambda/3)}\right) \\
    &\leq \exp\left(-\frac{\lambda^2}{2(\B{E}(\tau) + \lambda/3)}\right).
  \end{align*}

  We have
  $ X_t = \sum_{j = 1}^t F_j(S_j) \Delta_j(t-j) $.
  Therefore
  \begin{align*}
    m|\bar{X}_{i, a} - \bar{F}_{i, a}|
    &=\left|
      \sum_{t=t_{i, a}}^{t'_{i, a}} X_t - \sum_{t=t_{i, a}}^{t'_{i, a}} F_t
      \right| \\
    &= \left|
      \sum_{j = 1}^{t'_{i, a}} F_j \Delta_j({\{t_{i, a} -j \leq x \leq t'_{i, a}-j\}})
      - \sum_{t=t_{i, a}}^{t'_{i, a}} F_t
      \right| \\
    &= \left|
      \sum_{j = 1}^{t_{i, a}-1} F_j \Delta_j({\{t_{i, a} -j \leq x \leq t'_{i, a}-j\}})
      + \sum_{j = t_{i, a}}^{t'_{i, a}} F_j \Delta_j({\{x \leq t'_{i, a}-j\}})
      - \sum_{t=t_{i, a}}^{t'_{i, a}} F_t
      \right| \\
    &= \left|
      \sum_{j = 1}^{t_{i, a}-1} F_j \Delta_j({\{t_{i, a} -j \leq x \leq t'_{i, a}-j\}})
      - \sum_{j = t_{i, a}}^{t'_{i, a}} F_j \Delta_j({\{x > t'_{i, a}-j\}})
      \right| \\
    &\leq 
      \sum_{j = 1}^{t_{i, a}-1} \Delta_j({\{t_{i, a} -j \leq x \leq t'_{i, a}-j\}})
      + \sum_{j = t_{i, a}}^{t'_{i, a}} \Delta_j({\{x > t'_{i, a}-j\}}) \\
    &\leq 
      \sum_{j = 1}^{t_{i, a}-1} \Delta_j({\{x \geq t_{i, a}-j\}})
      + \sum_{j = 1}^{t'_{i, a}} \Delta_j({\{x > t'_{i, a}-j\}}) \\
    &= \sum_{j = 1}^{t_{i, a}-1} \Delta_j({\{x > t_{i, a}-1-j\}}) + C_{i, a}.
  \end{align*}
  If $t_{i, a} = 1$, then the first sum will be zero and we have
  \[
    m|\bar{X}_{i, a} - \bar{F}_{i, a}| \leq C_{i, a}.
  \]
  Otherwise, there exists $(i', a')$ such that $t'_{i', a'} = t_{i, a} - 1$ and
  \[
    m|\bar{X}_{i, a} - \bar{F}_{i, a}| \leq C_{i', a'} + C_{i, a}.
  \]
  Let $\C{E}^*_{i, a}$ be the event that $C_{i, a} \leq d$ and define
  \[
  \C{E}^* := \bigcap_{i, a} \C{E}^*_{i, a}.
  \]
  Our discussion above shows that we have
  \[
    \B{P}(\C{E}'_d) \geq \B{P}(\C{E}^*).
  \]
  On the other hand, we have
  \begin{align*}
    \B{P}(\C{E}^*) 
    &= \B{P}\left( \bigcap_{i, a} \C{E}^*_{i, a} \right) \\
    &= 1 - \B{P}\left( \bigcup_{i, a} (\C{E}^*_{i, a})^c \right) \\
    &= 1 - \B{P}\left( \bigcup_{i, a} \{ C_{i, a} > d \} \right) \\
    &\geq 1 - \sum_{i, a}\B{P}\left( \{ C_{i, a} > d \} \right) \\
    &\geq 1 - \sum_{i, a} \exp\left(-\frac{\lambda^2}{2(\B{E}(\tau) + \lambda/3)}\right) \\
    &\geq 1 - nk\exp\left(-\frac{\lambda^2}{2(\B{E}(\tau) + \lambda/3)}\right),
  \end{align*}
  which completes the proof.
\end{proof}

\begin{theorem}[Theorem~\ref{T:regret_stochastic_independent} in the main text]
  If the delay sequence is stochastic, then we have
  \[
    \B{E}(\C{R})
    = O(kn^{1/3}T^{2/3}\log(T))
      + O(kn^{2/3}T^{1/3}\B{E}(\tau)),
  \]
  where $\tau$ is an upper tail bound for $(\B{E}_{\C{T}}(\Delta_t))_{t = 1}^{\infty}$.
\end{theorem}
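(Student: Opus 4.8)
The plan is to combine the generic regret decomposition of Theorem~\ref{T:regret_bound_main} with the tail estimate of Lemma~\ref{L:bound_P_E_prime_uniformly_bounded_tail}, and then tune the two free parameters $\op{rad}$ and $d$ (recall $m = \lceil (T/n)^{2/3}\rceil$ is already fixed by the algorithm). Concretely, Theorem~\ref{T:regret_bound_main} gives $\B{E}(\C{R}) \leq mnk + 2kT\op{rad} + \frac{4kTd}{m} + 2nkT\exp(-2m\op{rad}^2) + T(1-\B{P}(\C{E}'_d))$, and Lemma~\ref{L:bound_P_E_prime_uniformly_bounded_tail} lets us replace the last term by $Tnk\exp\!\big(-\lambda^2/(2(\B{E}(\tau)+\lambda/3))\big)$ with $\lambda = \max\{0, d-\B{E}(\tau)\}$. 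So the whole proof reduces to choosing $\op{rad}$ and $d$ so that every term is $O(kn^{1/3}T^{2/3}\log T) + O(kn^{2/3}T^{1/3}\B{E}(\tau))$.

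First I would set $\op{rad} := \sqrt{\log(T)/m}$, exactly as in the bounded-delay case, so that $2nkT\exp(-2m\op{rad}^2) = 2nk/T \leq 2k$ (using $T \geq n$), that $2kT\op{rad} = 2kT\sqrt{\log(T)/m} \leq 2kn^{1/3}T^{2/3}\sqrt{\log T} \leq 2kn^{1/3}T^{2/3}\log T$ after plugging in $m \geq (T/n)^{2/3}$, and that $mnk \leq ((T/n)^{2/3}+1)nk = kn^{1/3}T^{2/3} + nk$. The only term still carrying $d$ is then $\frac{4kTd}{m} \leq 4kn^{2/3}T^{1/3}d$.

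The key step is the choice of $d$. I want $T(1-\B{P}(\C{E}'_d)) \leq Tnk\exp\!\big(-\lambda^2/(2(\B{E}(\tau)+\lambda/3))\big) = O(1)$, which holds as soon as $\lambda^2 \geq 2\log(Tnk)\,(\B{E}(\tau)+\lambda/3)$. This is a quadratic inequality in $\lambda$; solving it shows it suffices to take $\lambda = \Theta\!\big(\log T + \sqrt{\B{E}(\tau)\log T}\big)$, and by AM--GM $\sqrt{\B{E}(\tau)\log T} \leq \tfrac12(\B{E}(\tau)+\log T)$, so that $d = \B{E}(\tau)+\lambda = O(\B{E}(\tau)+\log T)$. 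Substituting back, $\frac{4kTd}{m} = O\!\big(kn^{2/3}T^{1/3}\B{E}(\tau)\big) + O\!\big(kn^{2/3}T^{1/3}\log T\big)$, and the second of these is absorbed into $O(kn^{1/3}T^{2/3}\log T)$ because $n \leq T$ forces $n^{2/3}T^{1/3}\leq n^{1/3}T^{2/3}$. Collecting all the pieces yields the claimed bound $\B{E}(\C{R}) = O(kn^{1/3}T^{2/3}\log T) + O(kn^{2/3}T^{1/3}\B{E}(\tau))$.

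As for difficulty: the genuine content is already packaged in Lemma~\ref{L:bound_P_E_prime_uniformly_bounded_tail} (the reduction of the family of delay distributions to a single upper tail bound $\tau$, and the Bernstein bound on the "escaping reward" variables $C_{i,a}$), so what remains here is essentially bookkeeping. The one spot needing care is verifying that the confidence inflation $\lambda$ — and hence $d$ — only grows \emph{logarithmically} in $T$ rather than polynomially: if the quadratic in $\lambda$ were mishandled one could end up with an extra $T^{\epsilon}$ factor multiplying $\B{E}(\tau)$, which would destroy the advertised purely additive $T^{1/3}\B{E}(\tau)$ delay term. So the main (mild) obstacle is getting the parameter trade-off in $\lambda$ right and checking the $O(\B{E}(\tau)+\log T)$ bound on $d$.
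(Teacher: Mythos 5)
Your proposal is correct and follows essentially the same route as the paper: set $\op{rad}=\sqrt{\log(T)/m}$, invoke Lemma~\ref{L:bound_P_E_prime_uniformly_bounded_tail} to control $T(1-\B{P}(\C{E}'_d))$, and choose $d=\B{E}(\tau)+\lambda$ with $\lambda=O(\B{E}(\tau)+\log T)$ so that the Bernstein tail is $O(T^{-2})$ and $\frac{4kTd}{m}$ contributes the two advertised terms. The paper simply makes the explicit choice $\lambda=\max\{6\B{E}(\tau),2\log T\}$ instead of solving the quadratic and applying AM--GM, but this is only a cosmetic difference in the parameter tuning.
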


\begin{proof}
  Let $\op{rad} := \sqrt{\frac{\log(T)}{m}}$. Then, using $T \geq n$, we have
  \[
    2nkT\exp(-2m\op{rad}^2)
    = 2nkT\exp(-2\log(T))
    = 2nkT^{-1}
    \leq 2k.
  \]
  We choose $d := \B{E}(\tau) + \max\{6\B{E}(\tau), 2\log(T)\}$
  and 
  $\lambda = d - \B{E}(\tau) = \max\{6\B{E}(\tau), 2\log(T)\}$.
  Then we have
  \begin{align*}
    \exp\left(-\frac{\lambda^2}{2(\B{E}(\tau) + \lambda/3)}\right)
    \leq \exp\left(-\frac{\lambda^2}{2(\lambda/6 + \lambda/3)}\right)
    = \exp(-\lambda)
    \leq \exp(-2\log(T))
    = T^{-2}.
  \end{align*}
  Therefore
  \[
    nkT\exp\left(-\frac{\lambda^2}{2(\B{E}(\tau) + \lambda/3)}\right)
    \leq nkT^{-1} \leq k.
  \]
  So, using Theorem~\ref{T:regret_bound_main} and Lemma~\ref{L:bound_P_E_prime_uniformly_bounded_tail}, we have
  \begin{align*}
    \B{E}(\C{R})
    & \leq 
    mnk + 2kT\op{rad} + \frac{4kTd}{m} + 2nkT\exp(-2m\op{rad}^2) + T(1 - \B{P}(\C{E}'_d)) \\
    & \leq 
    mnk + 2kT\op{rad} + \frac{4kTd}{m} + 2nkT\exp(-2m\op{rad}^2) + nkT\exp\left(-\frac{\lambda^2}{2(\B{E}(\tau) + \lambda/3)}\right) \\
    & \leq
    mnk + 2kT\op{rad} + \frac{4kTd}{m} + 3k \\
    & =
      mnk + 2kT\sqrt{\frac{\log(T)}{m}}
      + 4k\frac{T}{m} \left( \B{E}(\tau) + \max\{6\B{E}(\tau), 2\log(T)\} \right)
      + 3k \\
    & \leq
      mnk + 2kT\sqrt{\frac{\log(T)}{m}}
      + 4k\frac{T}{m} \left( 7\B{E}(\tau) +  2\log(T) \right)
      + 3k.
  \end{align*}

  Since $m = \lceil (T/n)^{2/3} \rceil$, we have $(T/n)^{2/3} \leq m \leq (T/n)^{2/3} + 1$.
  Therefore
  \begin{align*}
    \B{E}(\C{R})
    &\leq
      mnk + 2kT\sqrt{\frac{\log(T)}{m}}
      + \frac{4 k T}{m} \left( 7\B{E}(\tau) +  2\log(T) \right)
      + 3k \\
    &\leq
      ((T/n)^{2/3} + 1)nk + 2kT\sqrt{\frac{\log(T)}{(T/n)^{2/3}}}
      + \frac{4 k T}{(T/n)^{2/3}} \left( 7\B{E}(\tau) +  2\log(T) \right)
      + 3k \\
    &=
      kn^{1/3}T^{2/3} + kn + 2kn^{1/3}T^{2/3}\log(T)^{1/2}
      + 28 kn^{2/3}T^{1/3}\B{E}(\tau)
      + 8 kn^{2/3}T^{1/3}\log(T)
      + 3k \\
    &\leq
      12 kn^{1/3}T^{2/3}\log(T)
      + 28 kn^{2/3}T^{1/3}\B{E}(\tau)
      + 3k \\
    &= O(kn^{1/3}T^{2/3}\log(T))
      + O(kn^{2/3}T^{1/3}\B{E}(\tau)).  \qedhere
  \end{align*}
\end{proof}

\section{Unbounded Stochastic Conditionally Independent Delay}\label{apdx_ubed_ad}

\begin{lemma}\label{L:bound_P_E_prime_uniformly_bounded_tail_arm_dependent}
  If $(\Delta_{j, S})_{j = 1}^{\infty}$ is pairwise independent for all $S \in \C{S}$ and $\{\B{E}(\Delta_{j, S})\}_{j \geq 1, S \in \C{S}}$ is tight, then we have
  \[
    \B{P}(\C{E}'_d) \geq 1 - nk \exp\left(-\frac{\lambda^2}{2(\B{E}(\tau) + \lambda/3)}\right),
  \]
  where $d > 0$ is a real number, $\tau$ is a tail upper bound for the family $\{\B{E}_{\C{T}}(\Delta_{j, S})\}_{j \geq 1, S \in \C{S}}$ 
  and $\lambda = \max\left\{0, \frac{2d}{nk} - \B{E}(\tau)\right\}$.
\end{lemma}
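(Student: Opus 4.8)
The plan is to mirror the proof of Lemma~\ref{L:bound_P_E_prime_uniformly_bounded_tail}, incorporating the modification sketched after Theorem~\ref{T:regret_stochastic_conditionally_independent}. Since the delay distribution now depends on the played action, the full running sum used there is no longer a sum of independent terms (consecutive time-steps use different actions). Instead, writing $\Delta_j := \Delta_{j, S_j}$, for each $(i,a) \in I$ I set
\[
  C'_{i,a} := \sum_{j = t_{i,a}}^{t'_{i,a}} \Delta_j(\{x > t'_{i,a} - j\}).
\]
Over the window $[t_{i,a}, t'_{i,a}]$ the played action is constant and equal to $S^{(i-1)} \cup \{a\}$, so, conditioning on the exploration history up to $t_{i,a}$, $C'_{i,a}$ is a sum of $m$ independent $[0,1]$-valued random variables.

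First I dispatch the degenerate case $\lambda = 0$, where the claim is vacuous, and otherwise set $\frac{2d}{nk} = \B{E}(\tau) + \lambda$ with $\lambda > 0$. Using that $\tau$ is an upper tail bound for $\{\B{E}_{\C{T}}(\Delta_{j,S})\}_{j \ge 1, S \in \C{S}}$ and re-indexing $l = t'_{i,a} - j$, the same computation as in Lemma~\ref{L:bound_P_E_prime_uniformly_bounded_tail} gives $\B{E}(C'_{i,a}) \le \sum_{l=0}^{m-1}\tau(\{x \ge l+1\}) \le \B{E}(\tau)$. Then Bernstein's inequality, together with $\Delta_j(\cdot)^2 \le \Delta_j(\cdot)$ and $\B{E}(C'_{i,a}) \le \B{E}(\tau)$, yields
\[
  \B{P}\left(C'_{i,a} > \frac{2d}{nk}\right)
  = \B{P}\left(C'_{i,a} > \B{E}(\tau) + \lambda\right)
  \le \exp\left(-\frac{\lambda^2}{2(\B{E}(\tau) + \lambda/3)}\right).
\]

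The crux is the decomposition of $m|\bar{X}_{i,a} - \bar{F}_{i,a}|$. Expanding $X_t = \sum_{j=1}^t F_j \Delta_j(t-j)$, summing over $t \in [t_{i,a}, t'_{i,a}]$, separating the contribution of the current window (which, using $\sum_x \Delta_j(x) = 1$, produces $-\sum_j F_j \Delta_j(\{x > t'_{i,a}-j\})$) from that of the earlier windows ($j < t_{i,a}$), and bounding $F_j \le 1$, I get
\[
  m|\bar{X}_{i,a} - \bar{F}_{i,a}|
  \le \sum_{j < t_{i,a}} \Delta_j(\{t_{i,a} - j \le x \le t'_{i,a} - j\}) + C'_{i,a}.
\]
I then partition $\{1, \dots, t_{i,a}-1\}$ into the earlier exploration windows $[t_{i',a'}, t'_{i',a'}]$; for each such window $t'_{i',a'} < t_{i,a}$, hence $\{t_{i,a}-j \le x \le t'_{i,a}-j\} \subseteq \{x > t'_{i',a'}-j\}$, so its total contribution is at most $C'_{i',a'}$. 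Therefore $m|\bar{X}_{i,a} - \bar{F}_{i,a}| \le \sum_{(i',a') \in I} C'_{i',a'}$. Since $|I| \le nk$, on the event $\C{E}^* := \bigcap_{(i,a) \in I}\{C'_{i,a} \le \frac{2d}{nk}\}$ every summand is at most $\frac{2d}{nk}$, so $m|\bar{X}_{i,a} - \bar{F}_{i,a}| \le 2d$ for all $(i,a)$, i.e.\ $\C{E}^* \subseteq \C{E}'_d$. A union bound over the $\le nk$ events $\{C'_{i,a} > \frac{2d}{nk}\}$ combined with the Bernstein estimate gives $\B{P}(\C{E}'_d) \ge \B{P}(\C{E}^*) \ge 1 - nk\exp(-\lambda^2/(2(\B{E}(\tau) + \lambda/3)))$, as claimed.

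I expect the decomposition and counting step to be the main obstacle: one must verify that each earlier window's contribution is dominated by the corresponding $C'_{i',a'}$ (the place where $t'_{i',a'} < t_{i,a}$ is used) and then absorb the factor $|I| \le nk$, which is exactly what forces the sharper threshold $\frac{2d}{nk}$ and produces the extra $nk$ multiplier in Theorem~\ref{T:regret_stochastic_conditionally_independent} compared with the stochastic independent setting. A secondary point is the independence and expectation claims for $C'_{i,a}$ when the action $S^{(i-1)} \cup \{a\}$ and the window endpoints are themselves random, which is handled by conditioning on the exploration history, exactly as in the proof of Lemma~\ref{L:bound_P_E_prime_uniformly_bounded_tail}.
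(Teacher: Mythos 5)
Your proof is correct and follows essentially the same route as the paper's: the same window-restricted sums $C'_{i,a}$ of $m$ independent terms, the same bound $\B{E}(C'_{i,a}) \le \B{E}(\tau)$ via the upper tail bound, the same Bernstein estimate at threshold $\frac{2d}{nk}$, the same decomposition of $m|\bar{X}_{i,a}-\bar{F}_{i,a}|$ into past-window contributions each dominated by $C'_{i',a'}$, and the same union bound over $|I| \le nk$ events. The only (harmless) difference is that you make the conditioning on the exploration history explicit when asserting the independence of the $m$ summands, which the paper leaves implicit.
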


\begin{proof}
  If $\lambda = 0$, then the statement is trivially true.
  So we will assume that $\lambda > 0$ and $d = \frac{nk}{2}(\B{E}(\tau) + \lambda)$.
  Define 
  \[
    C'_{i, a} = \sum_{j = t_{i, a}}^{t'_{i, a}} \Delta_j(\{x > t'_{i, a} - j\}).
  \]
  Note that the sum is only over the time-steps where the action $S^{(i-1)}\cup\{a\}$ is taken. 
  Therefore $C'_{i, a}$ is the sum of $m$ independent term.
  Using the fact that $\tau$ is an upper tail bound for $\{\B{E}_{\C{T}}(\Delta_{j, S})\}_{j \geq 1, S \in \C{S}}$, 
  we can see that
  \begin{align*}
    \B{E}(C'_{i, a})
    &= \sum_{j = t_{i, a}}^{t'_{i, a}} \B{E}(\Delta_j(\{ x > t'_{i, a} - j \}))
    \leq \sum_{j = t_{i, a}}^{t'_{i, a}} \tau(\{ x > t'_{i, a} - j \}) \\
    &\leq \sum_{j = 1}^{t'_{i, a}} \tau(\{ x > t'_{i, a} - j \})
    = \sum_{j = 1}^{t'_{i, a}} \tau(\{ x \geq j \})
    \leq \sum_{j = 0}^{\infty} \tau(\{ x \geq j \})
    = \B{E}(\tau).
  \end{align*}
  Using Bernstein's inequality, we have
  \begin{align*}
    \B{P}\left(C'_{i, a} > \frac{2d}{nk}\right) 
    &= \B{P}(C'_{i, a} > \B{E}(\tau) + \lambda) \\
    &\leq \B{P}(C'_{i, a} > \B{E}(C'_{i, a})+ \lambda) \\
    &\leq \exp\left(-\frac{\lambda^2}{2(\sum_{j = t_{i, a}}^{t'_{i, a}}\B{E}(\Delta_j(\{ x > t'_{i, a}-j \})^2) + \lambda/3)}\right) \\
    &\leq \exp\left(-\frac{\lambda^2}{2(\sum_{j = t_{i, a}}^{t'_{i, a}}\B{E}(\Delta_j(\{ x > t'_{i, a} - j \})) + \lambda/3)}\right) \\
    &= \exp\left(-\frac{\lambda^2}{2(\B{E}(C'_{i, a}) + \lambda/3)}\right) \\
    &\leq \exp\left(-\frac{\lambda^2}{2(\B{E}(\tau) + \lambda/3)}\right).
  \end{align*}

  We have
  $ X_t = \sum_{j = 1}^t F_j(S_j) \Delta_j(t-j) $.
  Therefore
  \begin{align*}
    m|\bar{X}_{i, a} - \bar{F}_{i, a}|
    &=\left|
      \sum_{t=t_{i, a}}^{t'_{i, a}} X_t - \sum_{t=t_{i, a}}^{t'_{i, a}} F_t
      \right| \\
    &= \left|
      \sum_{j = 1}^{t'_{i, a}} F_j \Delta_j({\{t_{i, a} -j \leq x \leq t'_{i, a}-j\}})
      - \sum_{t=t_{i, a}}^{t'_{i, a}} F_t
      \right| \\
    &= \left|
      \sum_{j = 1}^{t_{i, a}-1} F_j \Delta_j({\{t_{i, a} -j \leq x \leq t'_{i, a}-j\}})
      + \sum_{j = t_{i, a}}^{t'_{i, a}} F_j \Delta_j({\{x \leq t'_{i, a}-j\}})
      - \sum_{t=t_{i, a}}^{t'_{i, a}} F_t
      \right| \\
    &= \left|
      \sum_{j = 1}^{t_{i, a}-1} F_j \Delta_j({\{t_{i, a} -j \leq x \leq t'_{i, a}-j\}})
      - \sum_{j = t_{i, a}}^{t'_{i, a}} F_j \Delta_j({\{x > t'_{i, a}-j\}})
      \right| \\
    &\leq 
      \sum_{j = 1}^{t_{i, a}-1} \Delta_j({\{t_{i, a} -j \leq x \leq t'_{i, a}-j\}})
      + \sum_{j = t_{i, a}}^{t'_{i, a}} \Delta_j({\{x > t'_{i, a}-j\}}) \\
    &\leq \sum_{j = 1}^{t_{i, a}-1} \Delta_j({\{x > t_{i, a}-1-j\}}) + C'_{i, a}.
  \end{align*}
  Define
  \[
  I_{i, a} = \{(i', a') \mid t_{i', a'} < t_{i, a}\}.
  \]
  Then we have
  \begin{align*}
    \sum_{j = 1}^{t_{i, a}-1} \Delta_j({\{x > t_{i, a}-1-j\}})
    &= \sum_{(i',a') \in I_{i, a}} \sum_{j = t_{i', a'}}^{t'_{i', a'}} \Delta_j({\{x > t_{i, a}-1-j\}}) \\
    &\leq \sum_{(i',a') \in I_{i, a}} \sum_{j = t_{i', a'}}^{t'_{i', a'}} \Delta_j({\{x > t'_{i', a'}-j\}})
    = \sum_{(i',a') \in I_{i, a}} C'_{i',a'}.
  \end{align*}
  Therefore, we have
  \[
    m|\bar{X}_{i, a} - \bar{F}_{i, a}|
    \leq \sum_{i, a} C'_{i, a}
    \leq nk \max_{i, a}\{C'_{i, a}\}.
  \]
  
  Let $\C{E}^{**}_{i, a}$ be the event that $C_{i, a} \leq \frac{2d}{nk}$ and define
  \[
  \C{E}^{**} := \bigcap_{i, a} \C{E}^{**}_{i, a}.
  \]
  Our discussion above shows that we have
  \[
    \B{P}(\C{E}'_d) \geq \B{P}(\C{E}^{**}).
  \]
  On the other hand, we have
  \begin{align*}
    \B{P}(\C{E}^{**}) 
    &= \B{P}\left( \bigcap_{i, a} \C{E}^{**}_{i, a} \right) \\
    &= 1 - \B{P}\left( \bigcup_{i, a} (\C{E}^{**}_{i, a})^c \right) \\
    &= 1 - \B{P}\left( \bigcup_{i, a} \left\{ C'_{i, a} > \frac{2d}{nk} \right\} \right) \\
    &\geq 1 - \sum_{i, a}\B{P}\left( \left\{ C'_{i, a} > \frac{2d}{nk} \right\} \right) \\
    &\geq 1 - \sum_{i, a} \exp\left(-\frac{\lambda^2}{2(\B{E}(\tau) + \lambda/3)}\right) \\
    &\geq 1 - nk\exp\left(-\frac{\lambda^2}{2(\B{E}(\tau) + \lambda/3)}\right),
  \end{align*}
  which completes the proof.
\end{proof}

\begin{theorem}[Theorem~\ref{T:regret_stochastic_conditionally_independent} in the main text]
  If the delay sequence is stochastic and conditionally independent, then we have
  \begin{align*}
    \B{E}(\C{R})
    &= O(kn^{1/3}T^{2/3}(\log(T))^{1/2}
      + k^2n^{5/3}T^{1/3}\log(T))
      + O(k^2n^{5/3}T^{1/3}\B{E}(\tau)) \\
    &= O(k^2n^{4/3}T^{2/3}\log(T))
      + O(k^2n^{5/3}T^{1/3}\B{E}(\tau))
  \end{align*}
  where $\tau$ is an upper tail bound for $(\B{E}_{\C{T}}(\Delta_t))_{t = 1}^{\infty}$.
\end{theorem}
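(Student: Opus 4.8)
The plan is to follow the template of the proof of Theorem~\ref{T:regret_stochastic_independent}: feed the delay-control estimate from Lemma~\ref{L:bound_P_E_prime_uniformly_bounded_tail_arm_dependent} into the master decomposition of Theorem~\ref{T:regret_bound_main}, and then tune the two free parameters $\op{rad}$ and $d$. First I would set $\op{rad} := \sqrt{\log(T)/m}$, which makes the Hoeffding term harmless: $2nkT\exp(-2m\op{rad}^2) = 2nkT^{-1} \le 2k$, using $T \ge n$.

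The crucial step is the choice of $d$. By Lemma~\ref{L:bound_P_E_prime_uniformly_bounded_tail_arm_dependent}, $\B{P}(\C{E}'_d) \ge 1 - nk\exp\!\big(-\lambda^2/(2(\B{E}(\tau)+\lambda/3))\big)$ with $\lambda = \max\{0,\ 2d/(nk) - \B{E}(\tau)\}$. The structural difference from the independent case is that $\lambda$ is governed by $2d/(nk)$ rather than by $d$: the bound on $m|\bar{X}_{i,a} - \bar{F}_{i,a}|$ now aggregates $C'_{i,a}$ over all $nk$ exploration blocks, so each block must be controlled at scale $2d/(nk)$, and this is exactly where the extra $nk$ factor in the final bound originates. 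To push the tail term below $T^{-2}$ I would take $\lambda := \max\{6\B{E}(\tau),\ 2\log(T)\}$, i.e. $d := \tfrac{nk}{2}\big(\B{E}(\tau) + \lambda\big)$, so that $\lambda/3 \le \B{E}(\tau) + \lambda/3$ is dominated by $\lambda$ in a way that gives $\exp\!\big(-\lambda^2/(2(\B{E}(\tau)+\lambda/3))\big) \le \exp(-\lambda) \le T^{-2}$, hence $T(1 - \B{P}(\C{E}'_d)) \le nkT^{-1} \le k$.

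Plugging these into Theorem~\ref{T:regret_bound_main} leaves
$$\B{E}(\C{R}) \le mnk + 2kT\op{rad} + \frac{4kTd}{m} + 3k.$$
Using $d \le \tfrac{nk}{2}\big(7\B{E}(\tau) + 2\log(T)\big)$, the third term is at most $\tfrac{2nk^2 T}{m}\big(7\B{E}(\tau) + 2\log(T)\big)$. Substituting $m = \lceil (T/n)^{2/3}\rceil$, so that $(T/n)^{2/3} \le m \le (T/n)^{2/3}+1$, turns $mnk$ into $O(kn^{1/3}T^{2/3})$, $2kT\op{rad}$ into $O(kn^{1/3}T^{2/3}(\log T)^{1/2})$, and $\tfrac{4kTd}{m}$ into $O(k^2 n^{5/3}T^{1/3}\B{E}(\tau)) + O(k^2 n^{5/3}T^{1/3}\log T)$. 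Collecting the terms and absorbing $kn^{1/3}T^{2/3}(\log T)^{1/2}$ into $k^2 n^{4/3}T^{2/3}\log T$ (valid since $n^{1/3} \le n^{4/3}$, $k \le k^2$, $(\log T)^{1/2}\le \log T$ for $T\ge 3$, and $n^{5/3}T^{1/3}\le n^{4/3}T^{2/3}$ when $n\le T$) yields the stated bound $O(k^2 n^{4/3}T^{2/3}\log T) + O(k^2 n^{5/3}T^{1/3}\B{E}(\tau))$.

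The hard part is not conceptual — Lemma~\ref{L:bound_P_E_prime_uniformly_bounded_tail_arm_dependent} already does the heavy lifting via the block decomposition and Bernstein's inequality — but purely a matter of bookkeeping: keeping careful track of the extra $nk$ factor that enters through $\lambda = \max\{0, 2d/(nk)-\B{E}(\tau)\}$, and verifying that the single choice of $d$ simultaneously drives $T(1-\B{P}(\C{E}'_d))$ to $O(k)$ and keeps $4kTd/m$ at order $k^2 n^{5/3}T^{1/3}$ rather than anything larger.
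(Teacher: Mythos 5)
Your proposal is correct and follows essentially the same route as the paper's own proof: the same choice of $\op{rad} = \sqrt{\log(T)/m}$, the same choice $d = \tfrac{nk}{2}(\B{E}(\tau) + \max\{6\B{E}(\tau), 2\log(T)\})$ so that $\lambda = 2d/(nk) - \B{E}(\tau)$ drives the Bernstein tail below $T^{-2}$, and the same substitution of $m = \lceil (T/n)^{2/3}\rceil$ with identical bookkeeping of the extra $nk$ factor. The final absorption into $O(k^2 n^{4/3} T^{2/3}\log T)$ matches the paper's last step as well.
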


\begin{proof}
  Let $\op{rad} := \sqrt{\frac{\log(T)}{m}}$. Then, using $T \geq n$, we have
  \[
    2nkT\exp(-2m\op{rad}^2)
    = 2nkT\exp(-2\log(T))
    = 2nkT^{-1}
    \leq 2k.
  \]
  We choose $d := \frac{nk}{2}(\B{E}(\tau) + \max\{6\B{E}(\tau), 2\log(T)\})$
  and 
  $\lambda = \frac{2d}{nk} - \B{E}(\tau) = \max\{6\B{E}(\tau), 2\log(T)\}$.
  Then we have
  \begin{align*}
    \exp\left(-\frac{\lambda^2}{2(\B{E}(\tau) + \lambda/3)}\right)
    \leq \exp\left(-\frac{\lambda^2}{2(\lambda/6 + \lambda/3)}\right)
    = \exp(-\lambda)
    \leq \exp(-2\log(T))
    = T^{-2}.
  \end{align*}
  Therefore
  \[
    nkT\exp\left(-\frac{\lambda^2}{2(\B{E}(\tau) + \lambda/3)}\right)
    \leq nkT^{-1} \leq k.
  \]
  So, using Theorem~\ref{T:regret_bound_main} and Lemma~\ref{L:bound_P_E_prime_uniformly_bounded_tail_arm_dependent}, we have
  \begin{align*}
    \B{E}(\C{R})
    & \leq 
    mnk + 2kT\op{rad} + \frac{4kTd}{m} + 2nkT\exp(-2m\op{rad}^2) + T(1 - \B{P}(\C{E}'_d)) \\
    & \leq 
    mnk + 2kT\op{rad} + \frac{4kTd}{m} + 2nkT\exp(-2m\op{rad}^2) + nkT\exp\left(-\frac{\lambda^2}{2(\B{E}(\tau) + \lambda/3)}\right) \\
    & \leq
    mnk + 2kT\op{rad} + \frac{4kTd}{m} + 3k \\
    & =
      mnk + 2kT\sqrt{\frac{\log(T)}{m}}
      + \frac{2nk^2T}{m} \left( \B{E}(\tau) + \max\{6\B{E}(\tau), 2\log(T)\} \right)
      + 3k \\
    & \leq
      mnk + 2kT\sqrt{\frac{\log(T)}{m}}
      + \frac{2nk^2T}{m} \left( 7\B{E}(\tau) +  2\log(T) \right)
      + 3k.
  \end{align*}

  Since $m = \lceil (T/n)^{2/3} \rceil$, we have $(T/n)^{2/3} \leq m \leq (T/n)^{2/3} + 1$.
  Therefore
  \begin{align*}
    \B{E}(\C{R})
    &\leq
      mnk + 2kT\sqrt{\frac{\log(T)}{m}}
      + \frac{2 n k^2 T}{m} \left( 7\B{E}(\tau) +  2\log(T) \right)
      + 3k \\
    &\leq
      ((T/n)^{2/3} + 1)nk + 2kT\sqrt{\frac{\log(T)}{(T/n)^{2/3}}}
      + \frac{2 n k^2 T}{(T/n)^{2/3}} \left( 7\B{E}(\tau) +  2\log(T) \right)
      + 3k \\
    &=
      kn^{1/3}T^{2/3} + kn + 2kn^{1/3}T^{2/3}\log(T)^{1/2}
      + 14 k^2n^{5/3}T^{1/3}\B{E}(\tau)
      + 4 k^2n^{5/3}T^{1/3}\log(T)
      + 3k \\
    &= O(kn^{1/3}T^{2/3}(\log(T))^{1/2})
      + O(k^2n^{5/3}T^{1/3}\log(T))
      + O(k^2n^{5/3}T^{1/3}\B{E}(\tau)) \\
    &= O(k^2n^{4/3}T^{2/3}\log(T))
      + O(k^2n^{5/3}T^{1/3}\B{E}(\tau)).  \qedhere
  \end{align*}
\end{proof}

\section{ Extension to general combinatorial bandits }\label{apdx:general}

The results of this paper could be generalized to settings beyond monotone submodular bandits with cardinality constraint.
As we will see, instead of these assumptions, we only need a setting where we have an algorithm for the offline problem satisfying a specific notion of robustness.

As before, let $\Omega$ be the set of base arms and let $\C{S}$ be a subset of $2^\Omega$.
Let $\C{F}$ be a class of functions from $\C{S} \to [0, 1]$ where we know that $f \in \C{F}$.
We use $S^*$ to denote the optimal value of $f$.

\begin{definition}[\cite{nie2023framework}]
Let $\C{A}$ be an algorithm for the combinatorial optimization problem of maximizing a function $f : \C{S} \to \B{R}$ over a finite domain $\C{S} \subseteq 2^\Omega$ with the knowledge that $f$ belongs to a known class of functions $\C{F}$.
for any function $\hat{f} : \C{S} \to \B{R}$, let $\C{S}_{\C{A}, \hat{f}}$ denote the output of $\C{A}$ when it is run with $\hat{f}$ as its value oracle.
The algorithm $\C{A}$ called $(\alpha, \delta)$-robust if for any $\epsilon > 0$ and any function $\hat{f}$ such that $|f(S) - \hat{f}(S)| < \epsilon$ for all $S \in \C{S}$, we have
\[
f(S_{\C{A}, \hat{f}}) \geq \alpha f(S^*) - \delta \epsilon.
\]
\end{definition}

In this setting, $N$ is an upper-bound for the number of $\C{A}$'s queries to the value oracle.

In the previous sections, the set $\C{S}$ was the set of all subsets of $\Omega$ with size at most $k$ and $\C{F}$ was the set of monotone submodular functions on $\C{S}$.
Corollary~\ref{C:explotation_regret_bound_per_timestep} simply states that the greedy algorithm is $(1 - 1/e, 2k)$-robust.
If we choose $\C{A}$ to be the offline greedy algorithm, $\alpha = 1-1/e$, $\delta = 2k$ and $N = nk$, then Algorithm~\ref{ALG:CETC} will reduce to Algorithm~\ref{ALG:main}.

\begin{algorithm}[ht]
  \caption{C-ETC algorithm (\cite{nie2023framework}}\label{ALG:CETC}
  \begin{algorithmic}[1]
    \REQUIRE{ Set of base arms $\Omega$, horizon $T$, an offline  $(\alpha, \delta)$-robust algorithm $\C{A}$, and an upper-bound $N$ on the number of $\C{A}$'s queries to the value oracle}
    \ENSURE{ $N \leq T$  }
    
    \STATE{ $m \leftarrow \lceil (\delta T/N)^{2/3} \rceil$ }
    \WHILE{ $\C{A}$ queries the value of some action $S$ }
      \STATE{ Play $S$ arm $m$ times }
      \STATE{ Calculate the empirical mean $\bar{x}$ }
      \STATE{ Return $\bar{x}$ to $\C{A}$ }
    \ENDWHILE
    \FOR{ remaining time }
      \STATE{ Play action $S_{\C{A}}$ output by algorithm $\C{A}$ }
    \ENDFOR
  \end{algorithmic}
\end{algorithm}

The proof only needs minor changes to adapt for Algorithm~\ref{ALG:CETC}.
Lemma~\ref{L:bound_P_E} immediately generalizes to
\[ \B{P}(\C{E}) \geq 1 - 2N\exp(-2m\op{rad}^2), \]
where $nk$ is replaced by $N$.
Instead of Corollary~\ref{C:explotation_regret_bound_per_timestep}, we need the following statement.

\begin{corollary}\label{C:explotation_regret_bound_per_timestep_for_CETG}
  Under the event $\C{E} \cap \C{E}'_d$, for all $d > 0$, we have
  \[
    f(S_{\C{A}}) \geq \alpha f(S^*) - \delta \left( \op{rad} + \frac{2d}{m} \right).
  \]
\end{corollary}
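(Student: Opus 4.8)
The plan is to mirror the argument behind Corollary~\ref{C:explotation_regret_bound_per_timestep}, but instead of unrolling the greedy recursion by hand, to invoke the $(\alpha, \delta)$-robustness of $\C{A}$ directly. First I would observe that under the event $\C{E} \cap \C{E}'_d$, the empirical mean $\bar{x}$ returned to $\C{A}$ for any queried action $S$ is close to the true value $f(S)$: by the definition of $\C{E}$ we have $|\bar{F}_{S} - f(S)| \leq \op{rad}$, and by the definition of $\C{E}'_d$ we have $|\bar{X}_{S} - \bar{F}_{S}| \leq \frac{2d}{m}$, so the triangle inequality gives
\[
|\bar{x} - f(S)| \leq \op{rad} + \frac{2d}{m}
\]
for every action $S$ that $\C{A}$ queries. (Here I am using the generalized versions of the events indexed by the queried actions rather than by pairs $(i,a)$, exactly as in the C-ETC setup; the at most $N$ queried actions play the role of the at most $nk$ pairs.)

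Next I would feed this into the robustness definition. The function $\hat f$ that $\C{A}$ effectively sees is the map sending each queried $S$ to its empirical mean $\bar x$; the bound above says $|f(S) - \hat f(S)| < \epsilon$ holds (after an arbitrarily small slack, or simply with $\leq$) for $\epsilon := \op{rad} + \frac{2d}{m}$. Since $\C{A}$ is $(\alpha,\delta)$-robust, its output $S_{\C{A}} = S_{\C{A}, \hat f}$ satisfies
\[
f(S_{\C{A}}) \geq \alpha f(S^*) - \delta\, \epsilon = \alpha f(S^*) - \delta\left(\op{rad} + \frac{2d}{m}\right),
\]
which is exactly the claimed inequality.

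The only mild subtlety — and the step I expect to require the most care — is the strict-versus-non-strict inequality in the robustness definition ($|f(S)-\hat f(S)| < \epsilon$ there, versus the $\leq$ we obtain from Hoeffding concentration). This is handled exactly as in \cite{nie2023framework}: one applies robustness with $\epsilon' = \epsilon + \eta$ for arbitrary $\eta > 0$ and then lets $\eta \to 0$, or one notes that the set of queried actions is finite so the bound $|f(S)-\hat f(S)| \leq \epsilon$ can be upgraded to strict inequality against any $\epsilon' > \epsilon$ and the conclusion passes to the limit. A second point worth stating explicitly is that the events $\C{E}$ and $\C{E}'_d$ must be defined with respect to the actions actually queried by $\C{A}$ (at most $N$ of them), so that Lemma~\ref{L:bound_P_E} holds in the form $\B{P}(\C{E}) \geq 1 - 2N\exp(-2m\op{rad}^2)$ as already noted in the text; with that bookkeeping in place the argument above goes through verbatim. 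Everything else is just the triangle inequality, so no genuine obstacle arises.
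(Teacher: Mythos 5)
Your proposal is correct and matches the paper's own proof: both bound $|\bar{x} - f(S)| \leq \op{rad} + \frac{2d}{m}$ for each queried action via the triangle inequality through the (undelayed) empirical mean, using $\C{E}$ and $\C{E}'_d$ respectively, and then invoke $(\alpha,\delta)$-robustness of $\C{A}$. Your extra remarks on the strict-versus-non-strict $\epsilon$ and on indexing the events by queried actions are careful bookkeeping that the paper leaves implicit, not a different argument.
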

\begin{proof}
Consider a time interval of length $m$, namely $t, t+1, \cdots, t+m-1$, where an action $S$ is repeated and the empirical mean $\bar{x}$ is observed.
We have
\begin{align*}
  m|\bar{x} - f(S)| 
  &= \left| \sum_{i = t}^{t + m-1} (x_t - f(S)) \right| 
  \leq \left| \sum_{i = t}^{t + m-1} (x_t - f_t) \right| 
    + \left| \sum_{i = t}^{t + m-1} (f_t - f(S)) \right|.
\end{align*}
Under the event $\C{E}$, we have 
\[
    \left| \sum_{i = t}^{t + m-1} (f_t - f(S)) \right| \leq m \op{rad},
\]
and under the event $\C{E}_d$, we have
\[
    \left| \sum_{i = t}^{t + m-1} (x_t - f_t) \right| \leq 2d.
\]
Therefore, we have
\[
|\bar{x} - f(S)| \leq \op{rad} + \frac{2d}{m}.
\]
Now the claim follows from the definition of $(\alpha, \delta)$-robustness of $\C{A}$.
\end{proof}

The proofs of Lemma~\ref{L:regret_conditioned_on_both} and Theorem~\ref{T:regret_bound_main} could be applied almost verbatim to give us
  \[
    \B{E}(\C{R}_\alpha) 
    \leq 
    mN + \delta T\op{rad} + \frac{2 \delta  Td}{m} + 2NT\exp(-2m\op{rad}^2) + T(1 - \B{P}(\C{E}'_d)),
  \]
for all $d > 0$.
The results below follow.
\begin{theorem}
  If the delay is uniformly bounded by $d$, then we have
  \[
    \B{E}(\C{R}_\alpha)
    = O(N^{1/3} \delta^{2/3} T^{2/3} (\log(T))^{1/2}) + O(N^{2/3} \delta^{1/3} T^{1/3} d).
  \]    
\end{theorem}

\begin{theorem}
  If the delay sequence is stochastic, then we have
  \[
    \B{E}(\C{R}_\alpha)
    = O(N^{1/3} \delta^{2/3} T^{2/3} \log(T))
      + O(N^{2/3} \delta^{1/3} T^{1/3} \B{E}(\tau)),
  \]
  where $\tau$ is an upper tail bound for $(\B{E}_{\C{T}}(\Delta_t))_{t = 1}^{\infty}$.
\end{theorem}

\begin{theorem}
  If the delay sequence is stochastic and conditionally independent, then we have
  \[
    \B{E}(\C{R}_\alpha)
    = O(N^{4/3} \delta^{2/3} T^{2/3} \log(T))
      + O(N^{5/3} \delta^{1/3} T^{1/3} \B{E}(\tau)),
  \]
  where $\tau$ is an upper tail bound for $(\B{E}_{\C{T}}(\Delta_t))_{t = 1}^{\infty}$.
\end{theorem}

\section{Details on Function, Delay Settings, and Baselines in Evaluations }\label{fun_del_detail}

\textbf{Submodular functions:}

\begin{enumerate}

\item[(F1)] Linear:

Here we assume that $f$ is a linear function of the individual arms.
In particular, for a function $g : \Omega \to [0, 1]$, we define
\[
f(S) := \frac{1}{k} \sum_{a \in S} g(a).
\]
More specifically, we let $n = 20$ and $k = 4$ and choose $g(a)$ uniformly from $[0.1, 0.9]$, for all $a \in \Omega$ and define $F(S) := \frac{1}{k} \sum_{a \in S} g(a) + N^c(0, 0.1)$ where $N^c(0, 0.1)$ is the truncated normal distribution with mean $0$ and standard deviation $0.1$, truncated to the interval $[-0.1, 1.0]$.

\item[(F2)] Weight Cover: 

Here we assume that $(C_j)_{j \in J}$ is a partition of $\Omega$ and there is a weight function $w_t : J \to [0, 1]$.
Then $f_t(S)$ is the sum of the weights of the the indexes $j$ where $C_j \cap S \neq \emptyset$, divided by $k$.
In other words, if $\op{1}$ is the indicator function, then
\[
f_t(S) := \frac{1}{k} \sum_{j \in J} w_t(j) \op{1}_{S \cap C_j \neq \emptyset}.
\]
More specifically, we let $n = 20$ and $k = 4$.
We divide $\Omega$ into 4 categories of sizes $6, 6, 6, 2$ and let $w_t(j) = U([0, j/5])$ be samples uniformly from $[0, j/5]$ for $j \in {1, 2, 3, 4}$.

Stochastic set cover may be viewed as a simple model for product recommendation.
Assume $n$ is the number of the products and each product belongs to exactly one of $c$ categories.
Then the reward will be equal to the sum of the weights of the categories that have been covered by the user divided by $k$.
\end{enumerate}

\textbf{Delay settings:}

\begin{enumerate}
\item[(D1)] No Delay

\item[(D2)] (Stochastic Independent Delay)
 For all $t \geq 1$ and $i \geq 0$, $\Delta_t(i) = (1 - Y_t) Y_t ^ i$, where $(Y_t)_{t = 1}^{\infty}$ is an i.i.d sequence of random variables with the uniform distribution $U([0.5, 0.9])$.
The reward for time-step $t$ will be distributed over $[t, \infty)$ according to $\Delta_t$.
In other words, at each time-step $t$, the agent plays the action $S_t$, then the environment samples $f_t(S_t)$ according to the distribution of $F_t(S_t)$ and samples $y_t$ according to the distribution $U([0.5, 0.9])$. 
Then we have $\delta_{t}(i) = (1 - y_t) y_t^i$ for all $i \geq 0$, which is used in Equation~\ref{eq:observation} to determine the observation.
In this example, the distribution of $\Delta_t$ does not depend on the action chosen by the agent and $(\Delta_t)_{t = 1}^{\infty}$ is i.i.d.

\item[(D3)] (Stochastic Independent Delay)
For all $t \geq 0$, $\Delta_t$ is a distribution over $[10, 30]$ is sampled uniformly from the probability simplex using the flat Dirichlet distribution.
The reward for time-step $t$ will be distributed over $[t+10, t+30]$ according to $\Delta_t$.
In other words, at each time-step $t$, the agent plays the action $S_t$, then the environment samples $f_t(S_t)$ according to the distribution of $F_t(S_t)$ and samples $(\beta_0, \beta_2, \cdots, \beta_{20})$ from the 20-dimensional probability simplex $\{ (z_0, \cdots, z_{20}) \mid z_i \geq 0, \sum z_i = 1 \}$, according to the flat Dirichlet distribution.
Then we have $\delta_{t}(i) = \beta_{i - 10}$ for all $10 \leq i \leq 30$ and $\delta_{t}(i) = 0$ otherwise, which is used in Equation~\ref{eq:observation} to determine the observation.
In this example, the distribution of $\Delta_t$ does not depend on the action chosen by the agent and $(\Delta_t)_{t = 1}^{\infty}$ is i.i.d.

\item[(D4)] (Stochastic Conditionally Independent Delay)
For all $t \geq 1$ and $i \geq 0$, we have $\Delta_t(i) = (1 - Y_t) Y_t ^ i$, where $Y_t = 0.5 + f_t * 0.4 \in [0.5, 0.9]$.
The reward for time-step $t$ will be distributed over $[t, \infty)$ according to $\Delta_t$.
In other words, at each time-step $t$, the agent plays the action $S_t$, then the environment samples $f_t(S_t)$ according to the distribution of $F_t(S_t)$ and picks $y_t = 0.5 + f_t(S_t) * 0.4$.
Then we have $\delta_{t}(i) = (1 - y_t) y_t^i$ for all $i \geq 0$, which is used in Equation~\ref{eq:observation} to determine the observation.
Note that there is no more randomness in delay after the value of $f_t(S_t)$ is samples from $F_t(S_t)$.
Also note that the value of $y_t$ depends on the action of the agent.
In this example $(\Delta_{t, S})_{t \geq 1}$ is pair-wise independent for any $S \in \C{S}$.

\item[(D5)] (Stochastic Conditionally Independent Delay)
At each time-step $t$, a number $l_t$ is chosen from $[10, 30]$ according to the following formula.
\[
l_t = \lfloor 20 f_t \rfloor + 10.
\]
The reward for time-step $t$ will be observed at $t + l_t$.
More specifically, at each time-step $t$, the agent plays the action $S_t$, then the environment samples $f_t(S_t)$ according to the distribution of $F_t(S_t)$ and picks $l_t = \lfloor 20 f_t(S_t) \rfloor + 10$.
Finally, we have $\delta_t(i) = \mathbf{1}_{i = l_t}$.
In other words, the higher the reward, the more it will be delayed.
In this example, delay depends on the action chosen by the agent and $(\Delta_{t, S})_{t \geq 1}$ is pair-wise independent for any $S \in \C{S}$.

\item[(D6)] (Adversarial Delay)
Let $l_1 = 15$ and for all $t > 1$, define $l_t$ according to the following formula.
\[
l_t = \lfloor 20 x_{t-1} \rfloor + 10.
\]
As in the delay (D5), the value of $l_t$ determines the amount of delay, i.e. $\delta_t(i) = \mathbf{1}_{i = l_t}$.
Note that $x_{t-1}$ is the value of the previous observation as described in Equation~\ref{eq:observation}.
In other words, the higher the previous observation, the more the current reward will be delayed.
\end{enumerate}

\section{Experiments with added regret}

In Figure~\ref{fig:added_regret}, we have considered the same functions and delay types as before.
After fixing the function and a delay type, we ran each experiment with and without delay 10 times and plotted the added regret when delay is present.
In these experiments, we see that the added regret for ETCG is consistently relatively low with low variance.
We note that one should be careful when interpreting these plots, since the regret bounds are simply upper bounds and therefore the values shown here are the difference of two values that are bounded from above. 
Specifically, having upper bounds $\C{R}^{\op{delay}} \leq a T^{2/3} + \nu T^{1/3}$ and $\C{R}^{\op{no-delay}} \leq a T^{2/3}$ do not imply $\C{R}^{\op{delay}} - \C{R}^{\op{no-delay}} \leq \nu T^{1/3}$.

\begin{figure}[t!]

 \centering{
 \begin{subfigure}[b]{0.30\textwidth}
     \includegraphics[width=\textwidth]{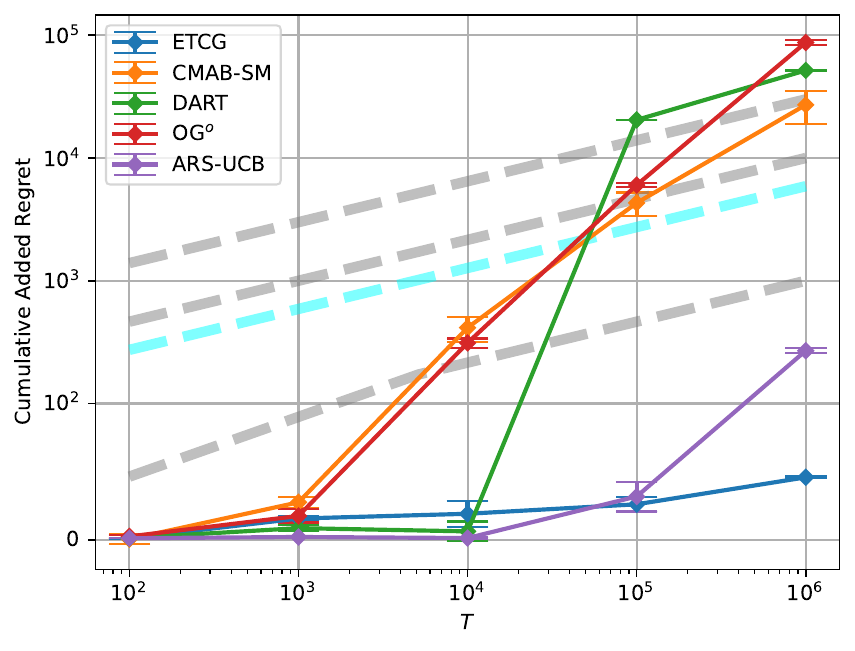}
     \caption{(F1)-(D2)}
\end{subfigure}
 \hspace{0.5in}
 \begin{subfigure}[b]{0.30\textwidth}
     \includegraphics[width=\textwidth]{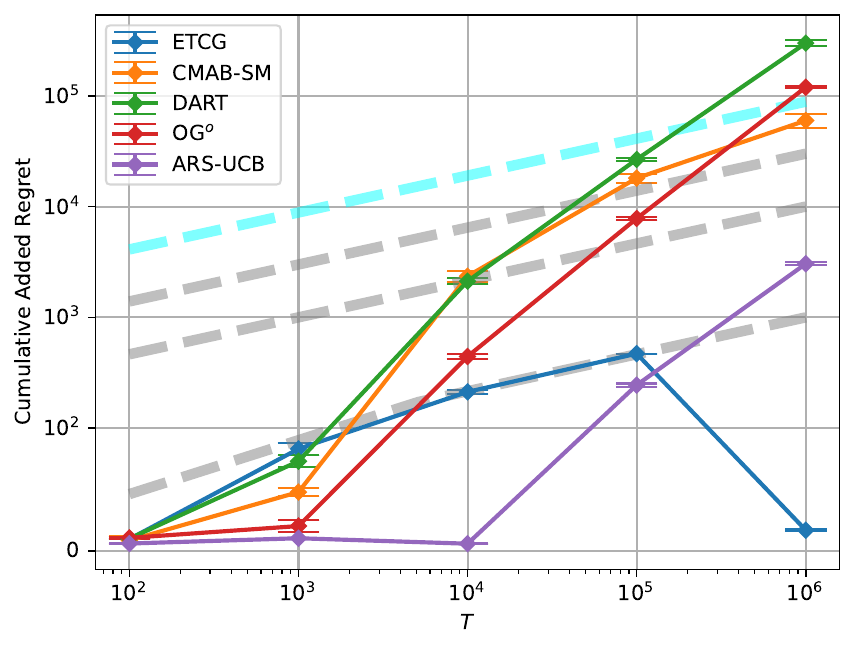}
     \caption{(F1)-(D3)}
\end{subfigure}
 }

 \medskip
 \begin{subfigure}[b]{0.30\textwidth}
     \includegraphics[width=\textwidth]{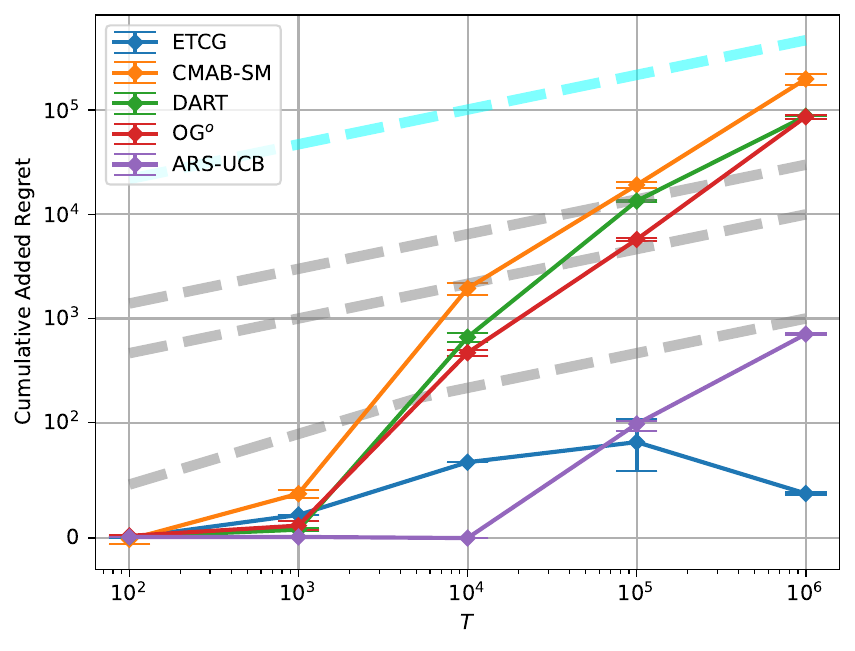}
     \caption{(F1)-(D4)}
\end{subfigure}
 \hfill
 \begin{subfigure}[b]{0.30\textwidth}
     \includegraphics[width=\textwidth]{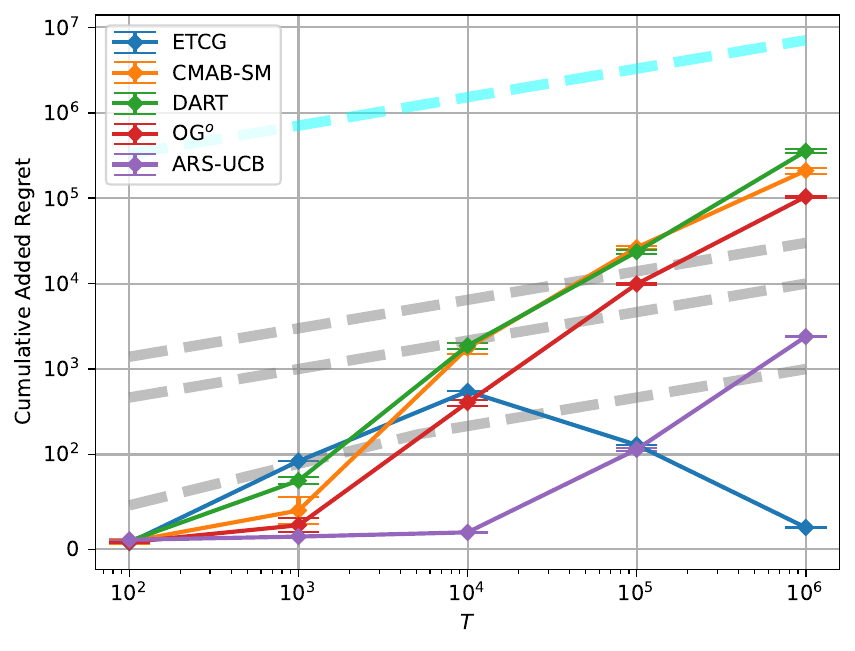}
     \caption{(F1)-(D5)}
\end{subfigure}
 \hfill
 \begin{subfigure}[b]{0.30\textwidth}
     \includegraphics[width=\textwidth]{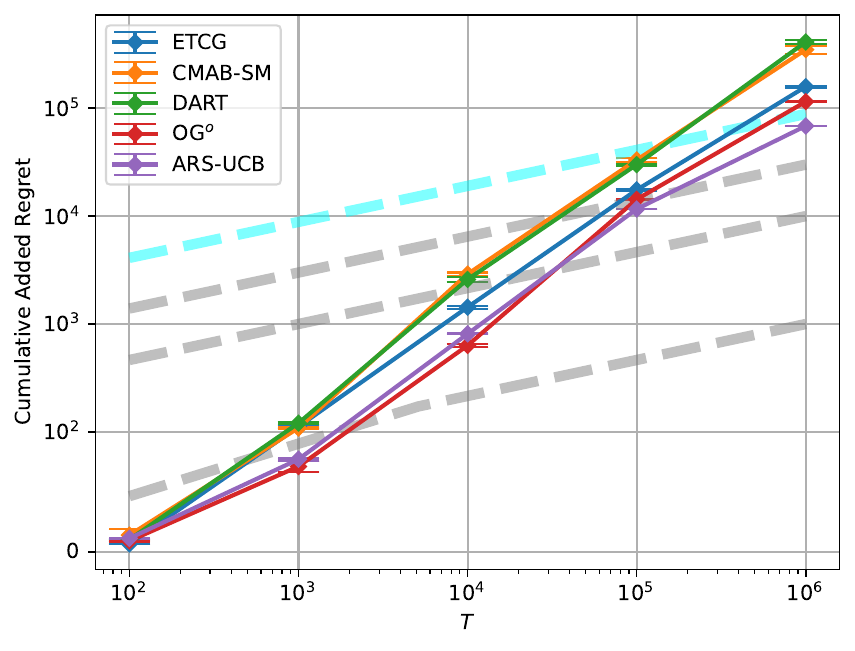}
     \caption{(F1)-(D6)}
\end{subfigure}
 
 \medskip
 
 \centering{
 \begin{subfigure}[b]{0.30\textwidth}
     \includegraphics[width=\textwidth]{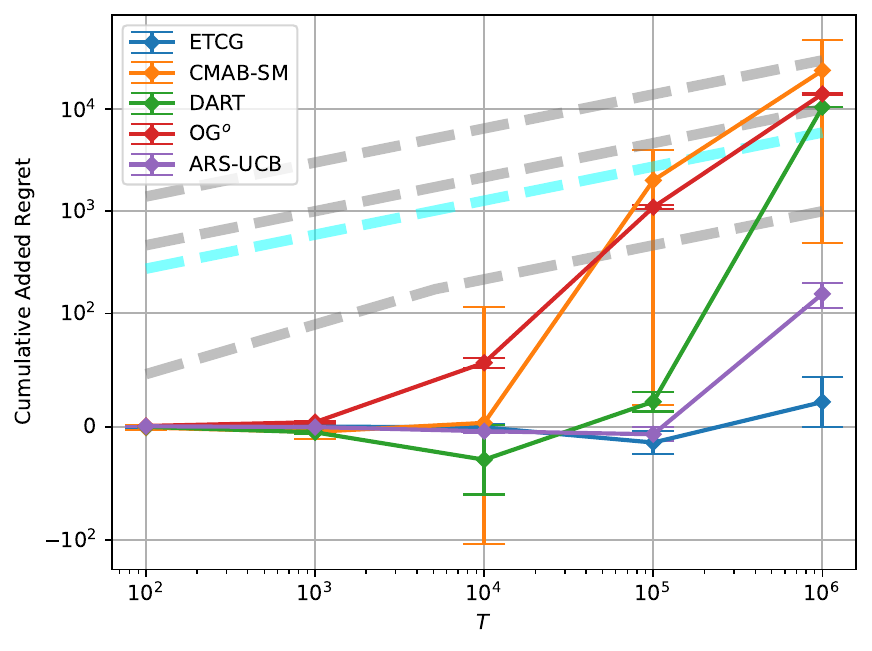}
     \caption{(F2)-(D2)}
\end{subfigure}
 \hspace{0.5in}
 \begin{subfigure}[b]{0.30\textwidth}
     \includegraphics[width=\textwidth]{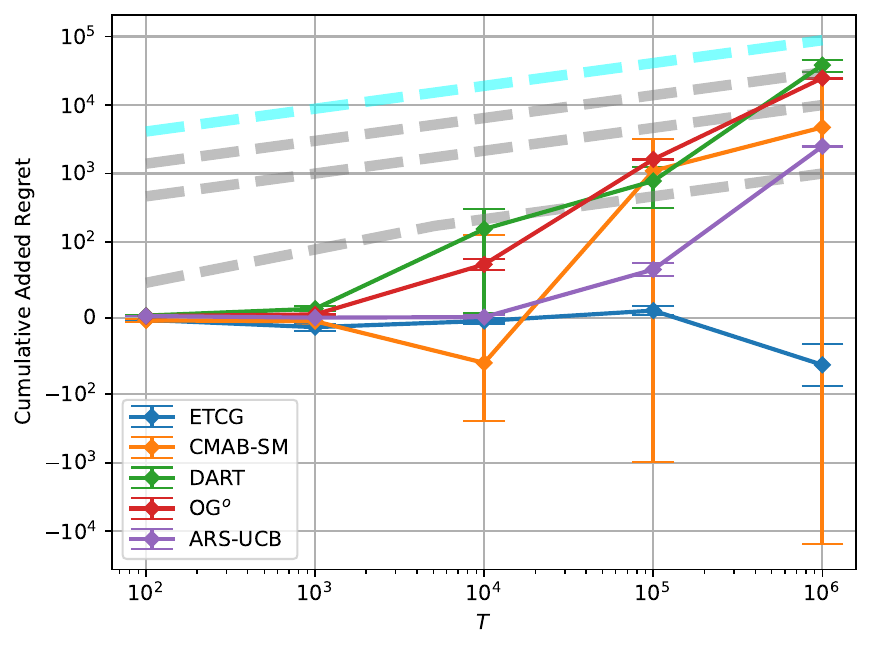}
     \caption{(F2)-(D3)}
\end{subfigure}
 }

 \medskip
 \begin{subfigure}[b]{0.30\textwidth}
     \includegraphics[width=\textwidth]{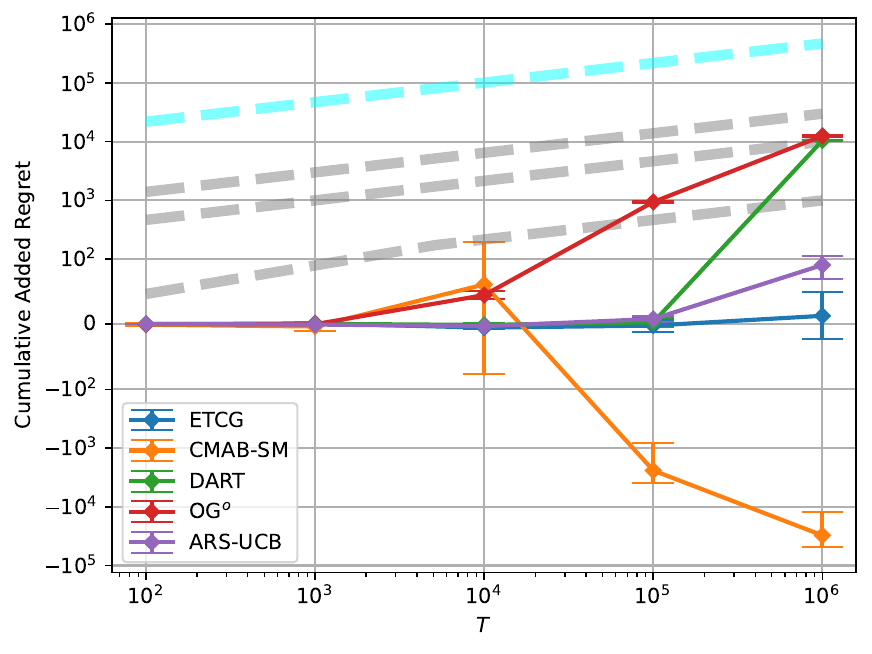}
     \caption{(F2)-(D4)}
\end{subfigure}
 \hfill
 \begin{subfigure}[b]{0.30\textwidth}
     \includegraphics[width=\textwidth]{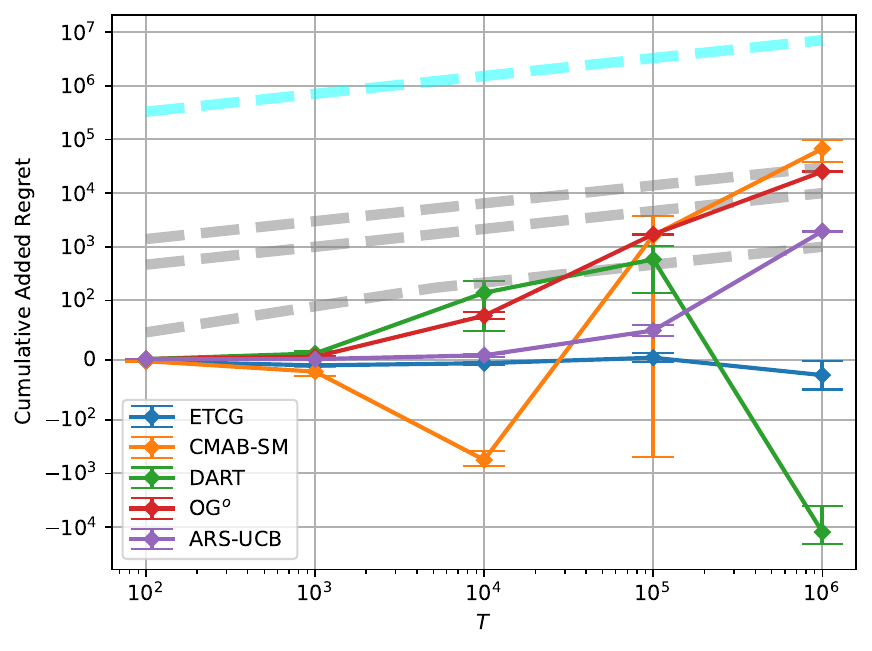}
     \caption{(F2)-(D5)}
\end{subfigure}
 \hfill
 \begin{subfigure}[b]{0.30\textwidth}
     \includegraphics[width=\textwidth]{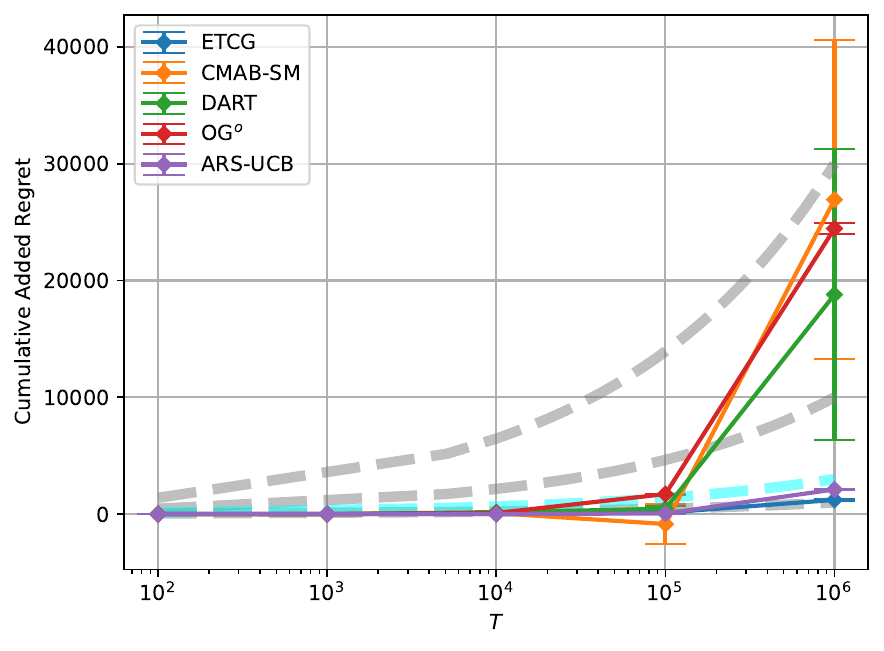}
     \caption{(F2)-(D6)}
\end{subfigure}
 
\caption{
This plot shows the average added cumulative regret over horizon for each setting in the symlog-log scale over 10 runs.
The scale of the y-axis is linear for $|y| \leq 100$ and logarithmic for $|y| > 100$.
The gray dashed lines are $y = a T^{1/3}$ for $a \in \{ 10, 100, 300 \}$.
The cyan dashed lines are $y = \nu T^{1/3}$ where $\nu$ is the corresponding delay coefficient appearing in the regret bounds in Theorems~\ref{T:regret_uniformly_bounded_delay},~\ref{T:regret_stochastic_independent},  and~\ref{T:regret_stochastic_conditionally_independent}.
}
\label{fig:added_regret}
\vspace{0.2in}
\end{figure}

\end{document}